\title[Faster Rates of DP-SCO]{Faster Rates of Private Stochastic Convex Optimization}
\newtheorem{case}{Case}
\renewcommand{\tilde}{\widetilde}
\begin{document}

\maketitle

\begin{abstract}
   In this paper, we revisit the problem of Differentially Private Stochastic Convex Optimization (DP-SCO) and provide excess population risks for some special classes of functions that are faster than the previous results of general convex and strongly convex functions. In the first part of the paper, we study the case where the population risk function satisfies the Tysbakov Noise Condition (TNC) with some parameter $\theta>1$. Specifically, we first show that under some mild assumptions on the loss functions, there is an algorithm whose output could achieve an upper bound of $\tilde{O}((\frac{1}{\sqrt{n}}+\frac{d}{n\epsilon})^\frac{\theta}{\theta-1}) $ and $\tilde{O}((\frac{1}{\sqrt{n}}+\frac{\sqrt{d\log(1/\delta)}}{n\epsilon})^\frac{\theta}{\theta-1})$ for $\epsilon$-DP and $(\epsilon, \delta)$-DP, respectively when $\theta\geq 2$, here $n$ is the sample size and $d$ is the dimension of the space. Then we address the inefficiency issue, improve the upper bounds by $\text{Poly}(\log n)$ factors and extend to the case where $\theta\geq \bar{\theta}>1$ for some known $\bar{\theta}$. Next we show that the excess population risk of population functions satisfying TNC with parameter $\theta\geq 2$ is always lower bounded by  $\Omega((\frac{d}{n\epsilon})^\frac{\theta}{\theta-1}) $ and $\Omega((\frac{\sqrt{d\log(1/\delta)}}{n\epsilon})^\frac{\theta}{\theta-1})$ for $\epsilon$-DP and $(\epsilon, \delta)$-DP, respectively, which matches our upper bounds. In the second part, we focus on a special case where the population risk function is strongly convex. Unlike the previous studies, here we assume the loss function is {\em non-negative} and {\em the optimal value of population risk is sufficiently small}. With these additional assumptions, we propose a new method whose output could achieve an upper bound of $O(\frac{d\log(1/\delta)}{n^2\epsilon^2}+\frac{1}{n^{\tau}})$ and $O(\frac{d^2}{n^2\epsilon^2}+\frac{1}{n^{\tau}})$ for any $\tau> 1$ in $(\epsilon,\delta)$-DP and $\epsilon$-DP model respectively if the sample size $n$ is sufficiently large. These results circumvent their corresponding lower bounds in \citep{feldman2020private} for general strongly convex functions.   Finally, we conduct experiments of our new methods on real world data. Experimental results also provide new insights into established theories.
\end{abstract}

\begin{keywords}%
Differential Privacy, Stochastic Convex Optimization
\end{keywords}

\section{Introduction}
Preserving the privacy of training data has become an important consideration and now is a challenging task for machine learning algorithms. To address the privacy issue, Differential Privacy (DP) \citep{dwork2006calibrating}, which roots in cryptography, is a strong mathematical scheme for privacy preserving. It allows for rich statistical and machine learning analysis, and is now becoming a de facto notation for private data analysis.  Methods to guarantee differential privacy have been widely studied, and recently adopted in industry \citep{apple,ding2017collecting}.  

As one of the most important problems in Machine Learning and Differential Privacy community,  the Empirical Risk Minimization problem in the DP model, {\em i.e.,} DP-ERM, has been studied quite well in the last decade, starting from \citep{chaudhuri2011differentially},  such as \citep{bassily2014private, wang2017differentially,wang2019differentially,wu2017bolt,kasiviswanathan2016efficient,kifer2012private,smith2017interaction,wang2018empirical,wang2019noninteractive,asi2021private}. Besides DP-ERM, its population (or expected) version, namely Differentially Private Stochastic Convex Optimization (DP-SCO), has received much attention in recent years, starting from \citep{bassily2014private}. Specifically,  \citep{bassily2019private} first provides the optimal rate of DP-SCO with general convex loss functions in $(\epsilon, \delta)$-DP, which is quite different from the optimal rate in DP-ERM. Later, \citep{feldman2020private} extends this problem to strongly convex and (or) non-smooth cases by providing a general localization technique. Moreover, their methods have linear time complexity if the loss functions are smooth. For non-smooth loss functions, \citep{kulkarni2021private} recently proposes a new method which only need subquadratic gradient complexity. While there are already a large number of studies on DP-SCO, the problem is still far from well understood. A key observation is that, all of the previous 
works only focus on the the case where the loss functions are either general convex or strongly convex. However, there are also many problems that are even stronger than strongly convex functions, or fall between convex and strongly convex functions. In the non-private counterpart, various studies have attempted to get faster rates by imposing additional assumptions on the loss functions. And it has been shown that it is indeed possible to achieve rates that are faster than the rates of general convex loss functions \citep{yang2018simple,koren2015fast,van2015fast}, or it could even achieve the same rate as in the strongly convex case even if the function is not strongly convex \citep{karimi2016linear,liu2018fast,xu2017stochastic}. Motivated by this, our question is,

{\bf For the problem of DP-SCO with  special classes of population risk functions, is it possible to achieve faster rates of the excess population risk than the optimal ones of general convex and (or) strongly convex cases? }

In this paper, we provide an affirmative answer by studying some classes of population risk functions. Particularly, we will mainly focus on the case where the population risk satisfies the Tysbakov Noise Condition (TNC) \footnote{In some related work it is also called the Error Bound Condition  or the Growth Condition \citep{liu2018fast,xu2017stochastic}.}, which includes strongly convex functions, SVM and linear regression as special cases. Our contributions can be summarized as follows. 
\begin{itemize}
    \item In the first part of the paper, we study the problem where the population risk satisfying TNC with parameter $\theta$ and propose three methods. When $\theta\geq 2$, we first propose a method that could achieve an excess population risk of $\tilde{O}((\frac{1}{\sqrt{n}}+\frac{d}{n\epsilon})^\frac{\theta}{\theta-1}) $ and $\tilde{O}((\frac{1}{\sqrt{n}}+\frac{\sqrt{d\log(1/\delta)}}{n\epsilon})^\frac{\theta}{\theta-1})$ in $\epsilon$-DP and $(\epsilon, \delta)$-DP model respectively under the assumption that the loss function is smooth and Lipschitz,  where $n$ is the sample size of the data and $d$ is the dimension of the space. We then propose another method to resolve the inefficiency issue under the assumption that $\theta$ is known. Moreover, we propose an improved method. Compared with previous two methods, it improves the upper bounds of error by $\text{Poly}(\log n)$ factors. And it only needs a relaxed assumption of $\theta\geq \bar{\theta}>1$ for some known $\bar{\theta}$ instead of $\theta$ being known or $\theta\geq 2$. Moreover, it outperforms the previous methods practically. Next, we focus on the lower bounds of the excess population risk. Specifically, for any $\theta\geq 2$, we show that there is a population risk function satisfying TNC with parameter $\theta$ such that for any $\epsilon$-DP ($(\epsilon, \delta)$-DP) algorithm, its output achieves an excess risk of  $\Omega((\frac{d}{n\epsilon})^\frac{\theta}{\theta-1}) $ ( $\Omega((\frac{\sqrt{d\log(1/\delta)}}{n\epsilon})^\frac{\theta}{\theta-1})$) with high probability.
    \item In the second part of the paper, we will focus on the problem where the population risk function is strongly convex, which is a special case of TNC functions with $\theta=2$. Unlike the previous studies, here we assume the loss function is non-negative and the optimal value of the population is sufficiently small. With these additional assumptions, we propose a new method whose output could achieve an upper bound of $O(\frac{d\log(1/\delta)}{n^2\epsilon^2}+\frac{1}{n^{\tau}})$ and $O(\frac{d^2}{n^2\epsilon^2}+\frac{1}{n^{\tau}})$ for any $\tau>1$ in $(\epsilon,\delta)$-DP and $\epsilon$-DP model respectively if the sample size $n$ is sufficiently large. These rates circumvent their corresponding lower bounds for general strong convex functions in \citep{feldman2020private}, {\em i.e.,} $\Theta(\frac{d^2}{n^2\epsilon^2}+\frac{1}{n})$ for $\epsilon$-DP and $\Theta(\frac{d\log(1/\delta)}{n^2\epsilon^2}+\frac{1}{n})$ for $(\epsilon, \delta)$-DP.
\end{itemize}
Due to the space limit, all the experiments (Appendix \ref{sec:experiments}) and proofs are included in Appendix. 
\section{Related Work}
Starting from \citep{chaudhuri2011differentially}, a long list of works have attacked the problems of DP-ERM
from different perspectives: \citep{bassily2014private, iyengar2019towards,zhou2020bypassing,song2020characterizing,wang2017differentially,zhang2017efficient} studied the problems in the low dimensional case and the central model, \citep{kasiviswanathan2016efficient,kifer2012private,talwar2015nearly,wang2020knowledge,cai2020cost} considered the problems in the high dimensional sparse case and the central model, \citep{smith2017interaction,duchi2013local,JMLR:v21:19-253,duchi2018minimax} focused on the problems in the local model. However, almost all of these works only focus the case where the 
empirical risk function is either general convex or strongly convex. For special class of functions, \citep{wang2017differentially} studies the empirical risk functions satisfying Polyak-{L}ojasiewicz (PL) condition, which is weaker than strongly convexity and show that it is possible to achieve an excess empirical risk of $O(\frac{d\log(1/\delta)}{n^2\epsilon^2})$, which is the same as the strongly convex loss. As we will mention in Remark \ref{remark:3}, the PL condition is equivalent to TNC with parameter $\theta=2$. Thus, in this paper we extend the result from the empirical risk to the population risk function.

For DP-SCO, besides the related work we mentioned in the previous section, there is another direction which studies some special cases of DP-SCO. For example, \citep{bassily2021non} and \citep{asi2021private} consider the case where the underlying constraint set $\mathcal{W}$ has specific geometric structures, such as polyhedron. \citep{guzman2021differentially} studies the (non)smooth and (non)convex generalized linear loss. \citep{wang2020differentially} and \citep{kamath2021improved} focus on the case where the distribution of the data or the gradient of the loss function is heavy-tailed. However, none of these works study the case where the population risk satisfies TNC.  \citep{liu2021revisiting} recently studies the theoretical guarantees of the PATE model \citep{papernot2016semi} under the assumption that the population risk function satisfies TNC and shows that it is possible to achieve faster rates than in the convex case 
\citep{bassily2018model}. However, since here we focus on a different problem, their results cannot be used to DP-SCO. 

\paragraph{Concurrent Work:} We notice that \citep{asi2021adapting} also studies DP-SCO with TNC population risk functions concurrently. However, compared with its results there are several critical differences. 

\textbf{1)}  The idea of Algorithm 2 in \citep{asi2021adapting} is similar to Algorithm \ref{alg:3} in our paper. However, the idea of proof and the choice of parameters are quite different. 

\textbf{2)} The same as Algorithm \ref{alg:3}, Algorithm 2 in \citep{asi2021adapting}  is also inefficient and has poor performance in practice. To resolve the issue, we also develop two other  algorithms (Algorithm \ref{alg:new} and \ref{alg:2}). 

\textbf{3)}  For $(\epsilon, \delta)$-DP model, \citep{asi2021adapting} only shows the worst-case lower bound of  $\Omega((\frac{\sqrt{d\log(1/\delta)}}{n\epsilon})^\frac{\theta}{\theta-1})$ under the assumption $\theta\geq 1+c$ for some constant $c>0$ while in this paper we also extend the result to $\theta\geq 2$. Although the hard instance in \citep{asi2021adapting} is similar to ours, the proofs of lower bounds are different. 

\textbf{4)}  In this paper, we also provide experimental results on the problem which has not been studied in \citep{asi2021adapting}. 

\textbf{5)} Besides TNC population risk functions, in this paper we also provide faster rates of DP-SCO with strongly convex loss function with additional assumptions which also has not been studied in \citep{asi2021adapting}. 
	\section{Preliminaries}
	\begin{definition}[Differential Privacy \citep{dwork2006calibrating}]\label{def:3.1}
	Given a data universe $\mathcal{X}$, we say that two datasets $S,S'\subseteq \mathcal{X}$ are neighbors if they differ by only one entry, which is denoted as $S \sim S'$. A randomized algorithm $\mathcal{A}$ is $(\epsilon,\delta)$-differentially private (DP) if for all neighboring datasets $S,S'$ and for all events $E$ in the output space of $\mathcal{A}$, the following holds
	$\text{Pr}(\mathcal{A}(S)\in E)\leq e^{\epsilon} \text{Pr}(\mathcal{A}(S')\in E)+\delta.$
\end{definition}
	\begin{definition}[Gaussian Mechanism]
	Given any function $q : \mathcal{X}^n\rightarrow \mathbb{R}^d$, the Gaussian mechanism is defined as  $q(S)+\xi$ where $\xi\sim \mathcal{N}(0,\frac{16\Delta^2_2(q)\log(1/\delta)}{\epsilon^2}\mathbb{I}_d)$, \footnote{For simplicity to theoretical analysis, throughout the paper we use constant 16 for Gaussian mechanism. In practice we can use smaller constants.} where where $\Delta_2(q)$ is the $\ell_2$-sensitivity of the function $q$,
{\em i.e.,}
		$\Delta_2(q)=\sup_{S\sim S'}||q(S)-q(S')||_2.$	Gaussian mechanism preserves $(\epsilon, \delta)$-DP for $0<\epsilon, \delta\leq 1$.
	\end{definition}
\begin{definition}[Laplacian Mechanism]\label{def:3.3}
	{ Given any function $q : \mathcal{X}^n\rightarrow \mathbb{R}^d$, the Laplacian mechanism is defined as $\mathcal{M}_G(S,q,\epsilon)=q(S)+ (Y_1, Y_2, \cdots, Y_d),$
	where each $Y_i$ is i.i.d. drawn from a Laplacian Distribution $\text{Lap}(\frac{\Delta_1(q)}{\epsilon}),$ where $\Delta_1(q)$ is the $\ell_1$-sensitivity of the function $q$, {\em i.e.,}
		$\Delta_1(q)=\sup_{S\sim S'}||q(S)-q(S')||_1.$ For a parameter $\lambda$, the Laplacian distribution has the density function: 
$\text{Lap}(x|\lambda)=\frac{1}{2\lambda}\exp(-\frac{x}{\lambda}).$
		Laplacian Mechanism preserves $\epsilon$-DP.}
\end{definition}

	\begin{definition}[DP-SCO \citep{bassily2014private}]\label{definition:1}
		Given a dataset $S=\{x_1,\cdots,x_n\}$ from a data universe $\mathcal{X}$ where $x_i$ are i.i.d. samples from some unknown distribution $\mathcal{D}$, a convex loss function $f(\cdot, \cdot)$, and a convex constraint set  $\mathcal{W} \subseteq \mathbb{R}^d$, Differentially Private Stochastic Convex Optimization (DP-SCO) is to find $w^{\text{priv}}$ so as to minimize the population risk, {\em i.e.,} $F (w)=\mathbb{E}_{x\sim \mathcal{D}}[f(w, x)]$
		with the guarantee of being differentially private.
		 The utility of the algorithm is measured by the \textit{(expected) excess population risk}, that is  $\mathbb{E}_{\mathcal{A}}[F (w^{\text{priv}})]-\min_{w\in \mathbb{\mathcal{W}}}F(w),$
where the expectation of $\mathcal{A}$ is taken over all the randomness of the algorithm. Besides 
the population risk, we can also measure the \textit{empirical risk} of dataset $S$: $\bar{F}(w, S)=\frac{1}{n}\sum_{i=1}^n f(w, x_i).$
	\end{definition}

	\begin{definition}\label{def:5}
		A function $f:\mathcal{W}\mapsto \mathbb{R}$ is L-Lipschitz over the domain $\mathcal{W}$ if for all $w,w^{\prime}\in \mathcal{W}$,
		$|f(w)-f(w^{\prime})|\leq L||w-w^{\prime}||_2.$
	\end{definition}
	\begin{definition}\label{def:6}
		A function $f:\mathcal{W}\mapsto \mathbb{R}$ is $\beta$-smooth over the domain $\mathcal{W}$ if for all $w,w' \in \mathcal{W}$, 
		$f(w)\leq f(w')+\langle \nabla f(w'), w-w'\rangle +\frac{\beta}{2}||w-w'||_2^2.$
	\end{definition}
	\begin{definition}\label{def:7}
		A function $F:\mathcal{W}\mapsto \mathbb{R}$ is $\lambda$-strongly convex over the domain $\mathcal{W}$ if, for all $w,w^{\prime}\in \mathcal{W}$,$
		F(w)+\langle\nabla F(w),w^{\prime}-w\rangle+\frac{\lambda}{2}||w^{\prime}-w||_2^2\leq F(w^{\prime}). 
		$
	\end{definition}
	Let $w^*=\arg \min_{w\in \mathcal{W}} F(w)$ be the minimizer, 
	strongly convexity implies \citep{hazan2011beyond}:
	\begin{equation}\label{eq:1}
	F(w)-F(w^*)\geq \frac{\lambda}{2}||w-w^*||_2^2, \forall w\in \mathcal{W}.
	\end{equation}
     Previous work on DP-SCO only focused on case where the loss function is  either  convex or strongly convex \citep{bassily2019private,feldman2020private}. In this paper, we will mainly study the case where the population risk satisfies the Tysbakov Noise Condition (TNC) \citep{ramdas2012optimal,liu2018fast}, which has been studied quite well and has been shown that it could achieve faster rates than the optimal one of general convex loss functions in the non-private case.  Below we provide the definition of TNC. 
     \begin{definition}\label{def:8}
         For  a convex function $F(\cdot)$, let $\mathcal{W}_*=\arg\min_{w\in \mathcal{W}} F(w)$ denote the optimal set and for any $w\in \mathcal{W}$, let $w^*=\arg\min_{u\in \mathcal{W}_*}\|u-w\|_2$ denote the projection of $w$ onto the optimal set $\mathcal{W}_*$. 
        Function $F$ satisfies $(\theta, \lambda)$-TNC for some $\theta>1 $ and $\lambda>0$ if for any $w\in \mathcal{W}$ the following inequality holds
		\begin{equation}\label{eq:2}
		F(w)- F(w^*)\geq \lambda||w-w^*||_2^{\theta}. 
		\end{equation}
     \end{definition}
    From the definition of TNC and (\ref{eq:1}) we can see that for a $\lambda$-strong convex function it is $(2, \frac{\lambda}{2})$-TNC. Moreover, if a function is $(\theta, \lambda)$-TNC, then it is also $(\theta', \lambda)$-TNC for any $\theta<\theta' 
    $. Throughout the whole paper we will assume that $\theta$ is a constant and thus we will omit the term of $c^{\theta}$ in the Big-$O$ notation if $c$ is a constant. 

	\begin{lemma}[Lemma 2 in \citep{ramdas2012optimal}]\label{lemma:3}
	If the function $F(\cdot)$ is $(\theta, \lambda)$-TNC and  $L$-Lipschitz, then we have $||w-w^*||_2\leq(L\lambda^{-1})^{\frac{1}{\theta-1}}$  and $F(w)-F(w^*)\leq (L^{\theta}\lambda^{-1})^{\frac{1}{\theta-1}}$ for all $ w\in\mathcal{W}$, where $w^*$ is defined as in Definition \ref{def:8}.
	\end{lemma}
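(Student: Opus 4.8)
The plan is to squeeze $\|w-w^*\|_2$ between the two structural inequalities available to us: the \emph{lower} bound on the suboptimality gap coming from TNC, namely $F(w)-F(w^*)\ge \lambda\|w-w^*\|_2^{\theta}$ (Definition~\ref{def:8}), and an \emph{upper} bound on the same gap coming from Lipschitzness. First I would observe that, since $w^*$ is a projection of $w$ onto the optimal set $\mathcal{W}_*$, we have $F(w^*)=\min_{u\in\mathcal{W}}F(u)$, so $F(w)-F(w^*)\ge 0$ and moreover $F(w)-F(w^*)=|F(w)-F(w^*)|\le L\|w-w^*\|_2$ directly from Definition~\ref{def:5}.

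Next I would chain these two facts to get
\begin{equation*}
\lambda\|w-w^*\|_2^{\theta}\ \le\ F(w)-F(w^*)\ \le\ L\|w-w^*\|_2 .
\end{equation*}
If $w\in\mathcal{W}_*$ then $\|w-w^*\|_2=0$ and both claimed bounds hold trivially (their left-hand sides are $0$). Otherwise $\|w-w^*\|_2>0$, and dividing the displayed inequality through by $\|w-w^*\|_2$ yields $\lambda\|w-w^*\|_2^{\theta-1}\le L$, i.e.\ $\|w-w^*\|_2\le (L\lambda^{-1})^{1/(\theta-1)}$, which is the first assertion. Substituting this back into $F(w)-F(w^*)\le L\|w-w^*\|_2$ gives $F(w)-F(w^*)\le L\,(L\lambda^{-1})^{1/(\theta-1)}=L^{\theta/(\theta-1)}\lambda^{-1/(\theta-1)}=(L^{\theta}\lambda^{-1})^{1/(\theta-1)}$, which is the second assertion.

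There is no real obstacle here; the only points requiring any care are (i) making sure the degenerate case $w\in\mathcal{W}_*$ is separated out before dividing by $\|w-w^*\|_2$, and (ii) the elementary exponent arithmetic $1+\tfrac{1}{\theta-1}=\tfrac{\theta}{\theta-1}$ used in the last step. Everything else is a direct combination of the two defining inequalities of the hypotheses.
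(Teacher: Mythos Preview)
Your proposal is correct. The paper does not actually prove this lemma; it simply cites it from \citep{ramdas2012optimal}, and your argument (sandwiching $F(w)-F(w^*)$ between the TNC lower bound and the Lipschitz upper bound, then dividing by $\|w-w^*\|_2$) is exactly the standard proof of that cited result.
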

	\section{Optimal Rates of Excess Population Risk}
	\subsection{Upper Bounds of Excess Population Risk}
	In this section, we will concentrate on the case where the population risk function is $(\theta, \lambda)$-TNC and provide some upper bounds of its excess population risk. To provide a clear intuition of our methods, we will first assume that the loss functions are smooth. Later we will extend to the non-smooth case. 
	
		\begin{algorithm}
	\caption{Phased-SGD($w_0, \eta,n,\mathcal{W}$) algorithm \citep{feldman2020private} \label{alg:1}}
		\KwIn{Dataset $S=\{x_1,\cdots,x_n\} $, convex function $f:\mathcal{W}\times \mathcal{X}\mapsto \mathbb{R}$, initial point $w_0\in\mathcal{W}$, step size $\eta$ (will be specified later), privacy parameter $\epsilon$ and (or) $\delta$.}\
			Set $k= \lceil \log_2 n\rceil$. Partite the whole dataset $S$ into $k$ subsets $\{S_1,\cdots,S_k\}$. Denote $n_i$ as the number of samples in $S_i$, {\em i.e.,} $|S_i|=n_i$, where $n_i=\lfloor 2^{-i}n \rfloor$. \;
			
		\For {$i=1,\cdots ,k$}{
		Let $\eta_i=4^{-i}\eta$, $w_i^1=w_{i-1}$. \;
		
		\For{$t= 1,\cdots,n_i$}{
		Update $w_i^{t+1}=\prod_{\mathcal{W}}(w_i^{t}-\eta_i\nabla_{w}f(w_{i}^{t},x_i^t))$, where $x_i^t$ is the $t$-th sample of the set $S_i$.\;
		}
		Set $\overline{w}_i=\frac{1}{n_i+1}\sum \limits_{t=1}^{n_i+1} w_i^t$. \;
		
		For $(\epsilon,\delta)$-DP, $w_i=\overline{w}_i+\xi_i$, where $\xi_i \sim \mathcal{N}(0,\sigma_i^2\mathbb{I}_d)$ with $\sigma_i=\frac{4L\eta_i\sqrt{\log (1/\delta)}}{\epsilon}$. \;
		
		For $\epsilon$-DP, $w_i=\overline{w}_i+\xi_i$, where $\xi_i=(\zeta_1, \cdots, \zeta_d)$ with each $\zeta_j \sim \text{Lap}(\lambda)$ and $\lambda=\frac{4L\eta_i \sqrt{d}}{ \epsilon}$. \;
	}
	
		\Return {$w_k$}
\end{algorithm}
	We first consider an easier case, where the TNC parameter $\theta$ satisfies $\theta\geq 2$. Our algorithm is based on the localization technique proposed by \citep{feldman2020private}, which provides an algorithm, namely Phased-SGD (Algorithm \ref{alg:1}) for DP-SCO with general convex loss functions and shows that the algorithm could achieve the optimal rate of  excess population risk. 

		\begin{lemma}\label{lemma:4}[Theorem 4.4 in \citep{feldman2020private}]
		Let $\mathcal{W}\subseteq \mathbb{R}^d$ be a closed convex set and $f(\cdot, x)$ be $\beta$-smooth, convex and $L$-Lipschitz function over $\mathcal{W}$ for each $x$. If we set $\eta=\frac{D}{L}\min\{\frac{4}{\sqrt{n}}, \frac{\epsilon}{2\sqrt{d\log(1/\delta)}}\}$ and if $\eta\leq \frac{1}{\beta}$ ({\em i.e.,} $n$ is sufficiently large), then Algorithm \ref{alg:1} will be $(\epsilon,\delta)$-DP for all $\epsilon\leq 2\log (1/\delta)$. The output satisfies
	$\mathbb{E}[F(w_k)]-\min_{w\in \mathcal{W}}F(w) \leq 10LD\left(\frac{1}{\sqrt{n}}+\frac{\sqrt{d\log(1/\delta)}}{\epsilon n}\right).$
	Set $\eta=\frac{D}{L}\min\{\frac{4}{\sqrt{n}}, \frac{\epsilon}{d}\}$ and if $\eta\leq \frac{1}{\beta}$, then Algorithm \ref{alg:1} will be $\epsilon$-DP. Moreover, the output satisfies $
	\mathbb{E}[F(w_k)]-\min_{w\in \mathcal{W}}F(w) \leq 10LD\left(\frac{1}{\sqrt{n}}+\frac{d}{n\epsilon}\right), 
	$
	where  $D>0$ satisfies that $||w_0-w^*||_2\leq D$. 
	\end{lemma}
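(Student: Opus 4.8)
The plan is to prove the two halves of the statement — privacy and utility — separately, handling the $\epsilon$-DP and $(\epsilon,\delta)$-DP cases in parallel, since they differ only in the noise distribution appended at the end of each phase.

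\textbf{Privacy.} Since the blocks $S_1,\dots,S_k$ partition $S$, every sample influences exactly one phase, so by parallel composition it suffices to show that the map $S_i\mapsto w_i$ is $(\epsilon,\delta)$-DP (resp.\ $\epsilon$-DP) for each fixed $i$. For this I would bound the sensitivity of the averaged iterate $\bar w_i$: running the inner SGD loop on two versions of $S_i$ that differ only at position $j$, the two trajectories coincide up to step $j$, separate by at most $2L\eta_i$ at step $j$ (the gap of two $\eta_i$-scaled $L$-Lipschitz gradients), and thereafter stay within $2L\eta_i$ of each other because the update map $w\mapsto \Pi_{\mathcal{W}}(w-\eta_i\nabla_w f(w,x))$ is non-expansive whenever $f(\cdot,x)$ is convex and $\beta$-smooth and $\eta_i\le\eta\le 1/\beta$ (projection onto a convex set is $1$-Lipschitz). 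Hence $\Delta_2(\bar w_i)\le 2L\eta_i$ and $\Delta_1(\bar w_i)\le 2\sqrt{d}\,L\eta_i$, so the stated scales $\sigma_i=\frac{4L\eta_i\sqrt{\log(1/\delta)}}{\epsilon}$ and $\frac{4\sqrt{d}\,L\eta_i}{\epsilon}$ make $w_i=\bar w_i+\xi_i$ $(\epsilon,\delta)$-DP (using $\epsilon\le 2\log(1/\delta)$) and $\epsilon$-DP through the Gaussian and Laplace mechanisms.

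\textbf{Utility (localization).} I would run a recursion on $R_i^2:=\mathbb{E}\|w_i-w^*\|_2^2$, with $R_0^2\le D^2$. Phase $i$ performs $n_i$ steps of one-pass SGD from $w_{i-1}$ with step size $\eta_i\le 1/\beta$ on fresh i.i.d.\ samples, so the standard convex $L$-Lipschitz analysis gives, conditionally on $w_{i-1}$,
\[
\mathbb{E}[F(\bar w_i)]-F(w^*)\le \frac{\mathbb{E}\|w_{i-1}-w^*\|_2^2}{2\eta_i(n_i+1)}+\frac{\eta_i L^2}{2},
\]
while summing the stepwise inequality $\mathbb{E}\|w_i^{t+1}-w^*\|_2^2\le \mathbb{E}\|w_i^{t}-w^*\|_2^2-2\eta_i\,\mathbb{E}[F(w_i^t)-F(w^*)]+\eta_i^2L^2$ and passing to the average by convexity of $\|\cdot\|_2^2$ controls $\mathbb{E}\|\bar w_i-w^*\|_2^2$ in terms of $R_{i-1}^2$, the accumulated optimization progress, and $n_i\eta_i^2L^2$; adding fresh independent noise then gives $R_i^2=\mathbb{E}\|\bar w_i-w^*\|_2^2+d\sigma_i^2$. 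The geometric schedule $n_i=\lfloor 2^{-i}n\rfloor$, $\eta_i=4^{-i}\eta$ is tuned so that, once the optimization-progress term is used, $R_i^2$ contracts geometrically toward a ``noise floor'' set by $\eta,L,d,\epsilon$. Unrolling over $k=\lceil\log_2 n\rceil$ phases, the final risk is dominated by the phase-$k$ display applied to $R_{k-1}^2$ plus the smoothness term $\frac{\beta}{2}d\sigma_k^2$ (negligible because $\eta_k=4^{-k}\eta$ is exponentially small); substituting $\eta=\frac{D}{L}\min\{\frac{4}{\sqrt n},\frac{\epsilon}{2\sqrt{d\log(1/\delta)}}\}$ (resp.\ $\eta=\frac{D}{L}\min\{\frac{4}{\sqrt n},\frac{\epsilon}{d}\}$) balances the ``optimization versus sampling'' error $\frac{D^2}{\eta n}$ against $L^2\eta$ and against the injected-noise contribution, and produces exactly $10LD(\frac{1}{\sqrt n}+\frac{\sqrt{d\log(1/\delta)}}{\epsilon n})$ and $10LD(\frac{1}{\sqrt n}+\frac{d}{\epsilon n})$. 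The hypothesis $\eta\le 1/\beta$ — equivalently $n$ large enough — keeps all the $\eta_i$ in the smooth-convergent regime and is used in both the privacy and the utility parts.

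\textbf{Main obstacle.} The delicate step is the per-phase geometric contraction of $R_i^2$: the bare SGD distance inequality only shows that $\mathbb{E}\|\bar w_i-w^*\|_2^2$ does not grow, not that it shrinks, so one must genuinely exploit smoothness — each phase behaving like a proximal step anchored at $w_{i-1}$ because of the aggressively shrinking step size — to get true contraction, and then check that the $\Theta(\log n)$ injected noise terms, with $\sigma_i^2\propto\eta_i^2$ decaying geometrically, add up to only an $O(LD\frac{\sqrt{d\log(1/\delta)}}{\epsilon n})$ (resp.\ $O(LD\frac{d}{\epsilon n})$) contribution. Once this recursion is in hand, the privacy accounting and the final parameter tuning are routine.
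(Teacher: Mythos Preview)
The paper does not prove this lemma; it is quoted verbatim as Theorem~4.4 of \citep{feldman2020private} and used as a black box in all subsequent arguments (Theorems~\ref{thm:2}, \ref{new:th1}, \ref{thm:1}, \ref{thm:5}). So there is no in-paper proof to compare against; you are effectively reconstructing the Feldman--Koren--Talwar argument.

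Your outline matches that argument in structure. The privacy half (parallel composition across the disjoint blocks, plus the non-expansiveness of the projected smooth-convex SGD map to get $\Delta_2(\bar w_i)\le 2L\eta_i$) is correct and essentially complete as written. For utility, the skeleton --- recursion on $R_i^2=\mathbb{E}\|w_i-w^*\|_2^2$, with the final excess read off from the phase-$k$ SGD bound plus the Lipschitz/smoothness cost of the last noise injection --- is also right, and you have honestly identified the one non-routine step.

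That said, your sketch stops at naming this obstacle rather than resolving it. The sentence ``one must genuinely exploit smoothness \dots to get true contraction'' is accurate as a diagnosis: the bare inequality $\mathbb{E}\|w_i^{t+1}-w^*\|_2^2\le \mathbb{E}\|w_i^t-w^*\|_2^2+\eta_i^2L^2-2\eta_i(\mathbb{E}F(w_i^t)-F(w^*))$ only yields $R_i^2\le R_{i-1}^2+n_i\eta_i^2L^2+d\sigma_i^2$, and with $\eta_k n_k=8^{-k}\eta n$ the term $R_{k-1}^2/(\eta_k n_k)$ would explode if $R_{k-1}^2$ merely stayed $O(D^2)$. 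What you have not supplied is the actual mechanism that makes $R_i$ shrink fast enough. Since the paper itself defers entirely to \citep{feldman2020private} here, your write-up is adequate as a high-level plan but would not stand as a self-contained proof without filling in that contraction step explicitly.
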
 
	We propose our adaptive stochastic approximation algorithm, which is presented in Algorithm \ref{alg:3}. The updates are divided into $m$ stages. At each stage, the Phased-SGD algorithm is applied with $n_0$ samples. Each employment of the Phased-SGD algorithm is warm-started by the initial point that is returned from the last stage. 
		\begin{algorithm}
	\caption{Private Stochastic Approximation($w_1,n,R_0$) \label{alg:3}}
		\KwIn{ 	Dataset $S=\{x_1,\cdots,x_n\} $, convex function $f:\mathcal{W}\times \mathcal{X}\mapsto \mathbb{R}$, initial point $w_0\in\mathcal{W}$, privacy parameter $\epsilon$ and (or) $\delta$. }
	   	Set $\hat{w}_0=w_1, m=\lfloor \frac{1}{2}\log_2\frac{2n}{\log_2 n}\rfloor -1, n_0=\lfloor\frac{n}{m}\rfloor$. Partite the data $S$ into $m$ disjoint subsets $\{S_1, \cdots, S_m\}$ with each $S_i$ containing $n_0$ samples. \;
	    
	\For{$k=\{1,\cdots,m\}$}{
	 For $(\epsilon,\delta)$-DP, set $\gamma_k=\frac{R_{k-1}}{L}\cdot \min\left\{\frac{4}{\sqrt{n_0}},\frac{\epsilon}{2\sqrt{d\log(1/\delta)}}\right\}$ and $R_k=\frac{R_{k-1}}{2}$.\; 
	 
	 For $\epsilon$-DP, set $\gamma_k=\frac{R_{k-1}}{L}\cdot \min\left\{\frac{4}{\sqrt{n_0}},\frac{\epsilon}{d}\right\}$ and $R_k=\frac{R_{k-1}}{2}$. \;
	
	Denote	$\hat{w}_k=	\text{Phased-SGD}(\hat{w}_{k-1},\gamma_k,n_0,\mathcal{W}\cap \mathbb{B}(\hat{w}_{k-1},R_{k-1})),$ where $\mathbb{B}(\hat{w}_{k-1},R_{k-1})$ is a ball with center $\hat{w}_{k-1}$ and radius $R_{k-1}$.
	The Phased-SGD runs on the $k$-th subset $S_i$. \;
	}
		\Return {$\hat{w}_m$}
\end{algorithm}


The following theorem states that,  the output of Algorithm \ref{alg:3} achieve an excess population risk of $\tilde{O}((\frac{1}{\sqrt{n}}+\frac{d}{n\epsilon})^\frac{\theta}{\theta-1}) $ and $\tilde{O}((\frac{1}{\sqrt{n}}+\frac{\sqrt{d\log(1/\delta)}}{n\epsilon})^\frac{\theta}{\theta-1})$ for $\epsilon$-DP and $(\epsilon, \delta)$-DP, respectively, if the population risk function satisfies TNC with $\theta\geq 2$.  
\begin{theorem}\label{thm:2}
Assume that $F(\cdot)$ satisfies $(\theta, \lambda)$-TNC and $f(\cdot, x)$ is  convex, $\beta$-smooth and $L$-Lipschitz for each $x$. Then Algorithm \ref{alg:3} is $\epsilon$-DP or $(\epsilon,\delta)$-DP based on different stepsizes $\{\gamma_k\}_{k=1}^m$  and noises if $\gamma_k\leq \frac{1}{\beta}$. Moreover, if $n$ is sufficiently large such that $n\geq256$, for $(\epsilon,\delta)$-DP we have 
		\begin{equation*}
\mathbb{E}[F(\hat{w}_m)]-\min_{w\in \mathcal{W}}F(w)= O\left(\left(\frac{L^\theta}{\lambda}\right)^{\frac{1}{\theta-1}}\cdot\left(\frac{\sqrt{\log n}}{\sqrt{n}}+\frac{\sqrt{d\log(1/\delta)}\log n}{n\epsilon}\right)^{\frac{\theta}{\theta-1}}\right).
\end{equation*}
And for $\epsilon$-DP we have 
$\mathbb{E}[F(\hat{w}_m)]-\min_{w\in \mathcal{W}}F(w)= O\left(\left(\frac{L^\theta}{\lambda}\right)^{\frac{1}{\theta-1}}\cdot\left(\frac{\sqrt{\log n}}{\sqrt{n}}+\frac{d\log n}{n\epsilon}\right)^{\frac{\theta}{\theta-1}}\right).$
\end{theorem}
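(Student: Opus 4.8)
The plan is to dispatch the privacy claim first and then obtain the utility bound by a stage-wise ``shrinking-ball'' induction built on Lemma~\ref{lemma:4} and the TNC inequality~(\ref{eq:2}).

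\emph{Privacy.} In stage $k$, Algorithm~\ref{alg:3} runs Phased-SGD on its own subset $S_k$ with a step size $\gamma_k$ that is exactly of the form required by Lemma~\ref{lemma:4} (taking ``$D$'' to be the deterministic quantity $R_{k-1}$ and ``$n$'' to be $n_0$) together with the matching noise scale; hence, provided $\gamma_k\le\frac1\beta$, stage $k$ is $(\epsilon,\delta)$-DP (respectively $\epsilon$-DP) as a function of $S_k$. Since $S_1,\dots,S_m$ are disjoint, a neighboring dataset differs from $S$ in a single $S_j$ only; stages $1,\dots,j-1$ are then unaffected, stage $j$ is $(\epsilon,\delta)$-DP in $S_j$, and stages $j+1,\dots,m$ are post-processing of $\hat{w}_j$ on the (unchanged) remaining data, so the output $\hat{w}_m$ is $(\epsilon,\delta)$-DP (respectively $\epsilon$-DP).

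\emph{Utility.} Write $A_0=\frac1{\sqrt{n_0}}+\frac{\sqrt{d\log(1/\delta)}}{\epsilon n_0}$ in the $(\epsilon,\delta)$-DP case and $A_0=\frac1{\sqrt{n_0}}+\frac{d}{\epsilon n_0}$ in the $\epsilon$-DP case, and for $w\in\mathcal W$ let $w^*$ be its nearest point in the optimal set $\mathcal W_*$, so $F(w)-\min_{\mathcal W}F\ge\lambda\|w-w^*\|_2^\theta$ by~(\ref{eq:2}). Since the call in stage $k$ is warm-started at $\hat{w}_{k-1}$ and constrained to $\mathcal W\cap\mathbb{B}(\hat{w}_{k-1},R_{k-1})$, Lemma~\ref{lemma:4} gives $\mathbb{E}[F(\hat{w}_k)\mid\hat{w}_{k-1}]-\min_{\mathcal W\cap\mathbb{B}(\hat{w}_{k-1},R_{k-1})}F\le 10LR_{k-1}A_0$. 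The first key point is that when $\|\hat{w}_{k-1}-w^*\|_2\le R_{k-1}$ the optimal set meets the ball, so $\min_{\mathcal W\cap\mathbb{B}(\hat{w}_{k-1},R_{k-1})}F=\min_{\mathcal W}F$ and therefore $\mathbb{E}[\|\hat{w}_k-w^*\|_2^\theta]\le 10LR_{k-1}A_0/\lambda$; under the halving rule $R_k=R_{k-1}/2$ this is at most $R_k^\theta$ exactly when $R_{k-1}\ge R^*:=(10\cdot 2^\theta LA_0/\lambda)^{1/(\theta-1)}$. Starting from $R_0=(L/\lambda)^{1/(\theta-1)}$, which bounds $\|\hat{w}_0-w^*\|_2$ by Lemma~\ref{lemma:3}, the radius thus halves down to $\Theta(R^*)$ within the $m$ stages; this is where $\theta\ge2$ enters, being precisely the condition that makes $R^*$ no smaller than $R_m=R_0/2^m$, so the saturation radius is reached before the stages run out. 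At the saturation level the excess risk is $O(LR^*A_0)=O\bigl((L^\theta/\lambda)^{1/(\theta-1)}A_0^{\theta/(\theta-1)}\bigr)$. For the stages after $R_k$ has dropped below $R^*$ one uses only that the center $\hat{w}_{k-1}$ is feasible, giving $\mathbb{E}[F(\hat{w}_k)\mid\hat{w}_{k-1}]-\min_{\mathcal W}F\le \bigl(F(\hat{w}_{k-1})-\min_{\mathcal W}F\bigr)+10LR_{k-1}A_0$; summing the geometric tail $\sum_{k>k_0}R_{k-1}\le 2R_{k_0}=\Theta(R^*)$ shows these contribute only another $O(LR^*A_0)$. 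Finally, substituting $m=\Theta(\log n)$ and $n_0=\lfloor n/m\rfloor$ turns $A_0$ into $\Theta\bigl(\frac{\sqrt{\log n}}{\sqrt n}+\frac{\sqrt{d\log(1/\delta)}\log n}{n\epsilon}\bigr)$ (respectively $\Theta\bigl(\frac{\sqrt{\log n}}{\sqrt n}+\frac{d\log n}{n\epsilon}\bigr)$), which yields the two stated bounds.

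\emph{Main obstacle.} The delicate point is that the descent step controls $\mathbb{E}[\|\hat{w}_k-w^*\|_2^\theta]$, whereas chaining it to the next stage needs $\|\hat{w}_k-w^*\|_2\le R_k$ to hold so that the optimal set stays inside the localized ball. Making this rigorous requires a high-probability strengthening of Lemma~\ref{lemma:4}: the online-gradient-descent regret bound underlying Phased-SGD is pathwise, the gap between empirical and population averages along the iterates is a bounded-increment martingale to which Azuma's inequality applies, and the Gaussian (respectively Laplace) perturbations concentrate, so each stage's guarantee holds up to logarithmic factors with probability $1-\zeta$; a union bound over the $m=\Theta(\log n)$ stages then makes the shrinking-ball invariant hold with probability $1-1/\mathrm{poly}(n)$, and on the complementary event one falls back to the deterministic estimate $F(\hat{w}_m)-\min_{\mathcal W}F\le(L^\theta/\lambda)^{1/(\theta-1)}$ from Lemma~\ref{lemma:3}, whose contribution to the expectation is negligible. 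Verifying that the concentration and union-bound overhead is absorbed into the $\mathrm{poly}(\log n)$ factors already present in the statement is the part demanding the most care.
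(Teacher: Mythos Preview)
Your approach matches the paper's almost exactly: privacy via disjoint subsets (parallel composition), and utility via a shrinking-ball induction that halves $R_k$ until a saturation index, after which the remaining stages contribute only a geometric tail using that $\hat{w}_{k-1}$ itself lies in the ball. The paper packages this by introducing $\mu_k=2^{(\theta-1)k}\mu_0$ with $\mu_0=2R_0^{1-\theta}a(n_0)$, verifying $\mu_m\ge\lambda$ from $n\ge256$ and $\theta\ge2$, and then splitting into the cases $\lambda\ge\mu_0$ (where a critical $k^*$ with $\mu_{k^*}\le\lambda\le2^{\theta-1}\mu_{k^*}$ exists) and $\lambda<\mu_0$; your ``saturation radius'' $R^*$ is just a repackaging of the same threshold.

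The point you flag as the main obstacle is genuine, and it is worth knowing that the paper does \emph{not} resolve it the way you propose. The paper's induction (its Lemma~12) works entirely in expectation: it proves $\mathbb{E}\|\hat{w}_{k-1}-w^*\|_2\le R_{k-1}$ and then passes directly to $\mathbb{E}[F(\hat{w}_k)]-F(w^*)\le 10LR_{k-1}a(n_0)$ as if the expected-distance bound ensured $w^*$ lies in the localized ball. Strictly, Lemma~\ref{lemma:4} only yields a bound against $\min_{\mathcal{W}\cap\mathbb{B}(\hat{w}_{k-1},R_{k-1})}F$, and replacing this by $F(w^*)$ requires $\|\hat{w}_{k-1}-w^*\|_2\le R_{k-1}$ pointwise, not merely in expectation. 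Your high-probability strengthening (pathwise regret, Azuma on the empirical-to-population gap, concentration of the noise, union bound over $m=\Theta(\log n)$ stages, and a fallback via Lemma~\ref{lemma:3} on the bad event) is a reasonable way to close this gap and would indeed be absorbed by the $\mathrm{poly}(\log n)$ factors in the statement; it goes beyond what the paper's own proof does.
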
 	
In practice, the main difficulty on implementing Algorithm \ref{alg:3} is the projection onto the ball $\mathcal{W}\cap \mathbb{B}(\hat{w}_{k-1},R_{k-1})$ in each iteration of the Phased-SGD in each phase. In practice, this could be solved by using the Dykstra’s algorithm \citep{dykstra1983algorithm,boyle1986method}, which studied the \emph{best approximation problem:} given $m$ closed and convex sets $W_1, \cdots, \mathcal{W}_m \subseteq \mathbb{R}^d$ and a point $y\in \mathbb{R}^d$, we seek the point in $\mathcal{W}_1\bigcap \cdots \bigcap \mathcal{W}_m$ (assumed nonempty) closest to $y$, and solve
 $\min_{u\in \mathcal{W}_1\bigcap \cdots \bigcap \mathcal{W}_m} \|u-y\|_2.$ 
However, in theory, the theoretical guarantee of Theorem \ref{thm:2} may not be held if we use the  Dykstra’s algorithm under the privacy constraint. The main reason is that, Dykstra’s algorithm can only provide an approximate solution of the projection step. However, the approximate solution may not have the same $\ell_2$ (or $\ell_1$)-norm sensitivity as the exact solution. Thus, from this view, Algorithm \ref{alg:3} lacks of efficiency. 
\begin{algorithm}
	\caption{Phased-SGD-SC($w_0, \gamma, \epsilon, \delta$) \label{alg:new}}
		\KwIn{	Dataset $S=\{x_1,\cdots,x_n\} $, convex function $f:\mathcal{W}\times \mathcal{X}\mapsto \mathbb{R}$, initial point $w_0\in\mathcal{W}$, privacy parameter $\epsilon$ and (or) $\delta$. $D$ is a constant satisfying $D\geq \|w_0-w^*\|_2$.}
	    Partite the data $S$ into $k$ disjoint subsets $\{S_1, \cdots, S_k\}$, where $k=\lceil \log\log n\rceil$ and for each $i\in [k]$, $|S_i|=n_i=\lfloor 2^{i-2} n/\log n \rfloor$.\;
	    
		\For {$t=1, \cdots, k$}{
		 Let $w_{t}=\text{Phased-SGD}(w_{t-1}, \eta_t, n_t, \mathcal{W})$, where the Phased-SGD runs on the $t$-th subset $S_i$ with loss function $f(w, x)+\frac{1}{2\gamma}\|w-w_{0}\|_2^2$. For $(\epsilon,\delta)$-DP, $\eta_t=\frac{D}{L}\min\{\frac{4}{\sqrt{n_t}}, \frac{\epsilon}{2\sqrt{d\log(1/\delta)}}\}$. For $\epsilon$-DP,  $\eta_t=\frac{D}{L}\min\{\frac{4}{\sqrt{n_t}}, \frac{\epsilon}{d}\}$. \;
		}
		\Return{$w_k$}
\end{algorithm}
Instead of using the Dykstra’s algorithm, motivated by \citep{xu2017stochastic}, in the following, we present a new algorithm which only needs the projection onto $\mathcal{W}$. Briefly speaking, instead of considering the original stochastic function, we focus on the problem with an additional strongly convex regularization, {\em i.e.,}
   $ \min_{w\in \mathcal{W}}F(w)+\frac{1}{2\gamma}\|w-w_1\|_2^2$, 
where $w_1\in \mathcal{W}$ is some reference point and $\gamma$ is some parameter. Specifically, the same as in Algorithm \ref{alg:3}, we first divide the whole algorithm into $m$ stages. In each stage we hope to find a private estimator $w^k$ such that $w^k \approx \arg\min_{w\in \mathcal{W}}F(w)+\frac{1}{2\gamma_k}\|w-w_{k-1}\|_2^2$ with $\gamma_k$ changing with $k$. Specifically, we use Algorithm \ref{alg:new} to get such private estimator. Note that due to the additional $\ell_2$ regularization, now the function is strongly convex. Thus, instead of using the original Phased-SGD (Algorithm \ref{alg:1}) for general convex loss, here we use a strongly convex version of Phased-SGD, which is adopted from \citep{feldman2020private}. Moreover, 
since now we have an additional $\ell_2$-norm regularization, here we do not need the projection onto the balls  $\mathcal{W}\cap \mathbb{B}(\hat{w}_{k-1},R_{k-1})$ during updates compared with Algorithm \ref{alg:3}. 

		\begin{algorithm}
	\caption{Private Stochastic Approximation-II($w_0,n,R_0$) \label{alg:new1}}
		\KwIn{	 	Dataset $S=\{x_1,\cdots,x_n\} $, convex function $f:\mathcal{W}\times \mathcal{X}\mapsto \mathbb{R}$, initial point $w_0\in\mathcal{W}$, privacy parameter $\epsilon$ and (or) $\delta$. $\chi_0$ is a constant such that $\chi_0\geq F(w_0)-\min_{w\in \mathcal{W}}F(w)$. }
	   For $(\epsilon, \delta)$-DP, set $ m=\lfloor -\frac{\theta}{2(\theta-1)}\log_2 (\frac{L^2}{\lambda^\frac{2}{\theta}}(\frac{1}{n}+\frac{d\log(1/\delta)}{n^2\epsilon^2}))   \rfloor, n_0=\lfloor\frac{n}{m}\rfloor$. For $\epsilon$-DP, set $ m=\lfloor -\frac{\theta}{2(\theta-1)}\log_2 (\frac{L^2}{\lambda^\frac{2}{\theta}}(\frac{1}{n}+\frac{d^2}{n^2\epsilon^2}))   \rfloor, n_0=\lfloor\frac{n}{m}\rfloor$. Partite the dataset $S$ into $m$ disjoint subsets $\{S_1, \cdots, S_m\}$ with each $S_i$ containing $n_0$ samples. \;
	   
	   Set $
	    {\gamma_0}= \chi_0 / ({6400 L^2}(\frac{1}{n_0}+\frac{d\log(1/\delta)}{n_0^2\epsilon^2}))$ for $(\epsilon, \delta)$-DP and  $\gamma_0 = \chi_0/ ({6400 L^2}(\frac{1}{n_0}+\frac{d^2}{n_0^2\epsilon^2}))$ for $\epsilon$-DP. \;
	    
	\For{$k=1,\cdots,m$}{
    Set $\gamma_k= \frac{\gamma_{k-1}}{2}$. \;

	Denote	$w_k=	\text{Phased-SGD-SC}(w_{k-1},\gamma_k, \epsilon, \delta)$.\;
	}
		\Return {$w_m$}
\end{algorithm}
	\begin{theorem}\label{new:th1}
Assume that $F(\cdot)$ satisfies $(\theta, \lambda)$-TNC and $f(\cdot, x)$ is  convex, $\beta$-smooth and $L$-Lipschitz for each $x$. If $n$ is sufficiently large such that $\bar{\theta}\geq 2^{\frac{\log\log n}{\log n}}$, then Algorithm \ref{alg:new1} is $\epsilon$-DP or $(\epsilon,\delta)$-DP based on different stepsizes, noises and $\{\gamma_k\}_{k=1}^m$, under the assumption that $n$ is sufficiently large such that $\gamma_0 \geq \frac{\|\mathcal{W}\|_2}{L}$, where $\|\mathcal{W}\|_2$ is the diameter of the set $\mathcal{W}$, {\em i.e.,} $\|\mathcal{W}\|_2=\max_{w, w'\in \mathcal{W}}\|w-w'\|_2$. Moreover, for $(\epsilon, \delta)$-DP, we have 
\begin{equation*}
    \min_{k=1, \cdots, m} \mathbb{E}[F(w_k)]-\min_{w\in \mathcal{W}}F(w)=O\left( \left (\frac{ L^2}{\lambda^\frac{2}{\theta}} \left(\frac{\log n}{n}+\frac{d\log n^2\log(1/\delta)}{n^2\epsilon^2}\right)\right)^\frac{\theta}{2(\theta-1)}\right).  
\end{equation*}
For $\epsilon$-DP we have 
    $\min_{k=1, \cdots, m} \mathbb{E}[F(w_k)]-\min_{w\in \mathcal{W}}F(w)=O\left(  \left(\frac{ L^2}{\lambda^\frac{2}{\theta}}\left (\frac{\log n}{n}+\frac{d^2\log n^2}{n^2\epsilon^2}\right)\right)^\frac{\theta}{2(\theta-1)}\right).  $
\end{theorem}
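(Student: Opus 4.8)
The plan is to analyze Algorithm~\ref{alg:new1} stage by stage, reducing it to a scalar recursion for $\Delta_k:=\mathbb{E}[F(w_k)]-\min_{w\in\mathcal{W}}F(w)$ that decays geometrically down to the claimed floor. \textbf{Privacy} comes first and is the easy part: in stage $k$ the algorithm runs Phased-SGD-SC on the disjoint subset $S_k$ with the regularized loss $g_k(w,x)=f(w,x)+\frac{1}{2\gamma_k}\|w-w_{k-1}\|_2^2$, whose extra quadratic is data-independent (it depends only on $w_{k-1}$, a function of $S_1,\dots,S_{k-1}$), so the $\ell_2$-sensitivity of the per-sample gradients is unchanged at $2L$ and each inner Phased-SGD call inherits exactly the noise calibration of Lemma~\ref{lemma:4}. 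Phased-SGD-SC further splits $S_k$ into disjoint pieces, so by parallel composition it is $(\epsilon,\delta)$-DP (resp.\ $\epsilon$-DP) on $S_k$, and a second parallel composition over the disjoint $S_1,\dots,S_m$ gives privacy of the whole algorithm; the condition $\gamma_0\ge\|\mathcal{W}\|_2/L$ is what keeps the step sizes admissible and the first-stage Lipschitz constant of $g_1$ over $\mathcal{W}$ of order $L$.

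\textbf{One-stage bound.} Fix a stage $k$, set $G_k(w)=F(w)+\frac{1}{2\gamma_k}\|w-w_{k-1}\|_2^2$ (which is $1/\gamma_k$-strongly convex), let $\hat w_k=\arg\min_{w\in\mathcal{W}}G_k(w)$, let $w_{k-1}^*$ be the projection of $w_{k-1}$ onto the optimal set of $F$ as in Definition~\ref{def:8}, and write $F^*:=\min_w F(w)=F(w_{k-1}^*)$. From $G_k(\hat w_k)\le G_k(w_{k-1}^*)$ and $F(\hat w_k)\ge F^*$ I get three elementary inequalities: $\|\hat w_k-w_{k-1}\|_2\le\|w_{k-1}^*-w_{k-1}\|_2$, $F(\hat w_k)-F^*\le\frac{1}{2\gamma_k}\|w_{k-1}^*-w_{k-1}\|_2^2$, and $F(w_k)-F(\hat w_k)\le[G_k(w_k)-G_k(\hat w_k)]+\frac{1}{2\gamma_k}\|\hat w_k-w_{k-1}\|_2^2$. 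Summing these, taking expectations, and feeding in the TNC bound $\|w_{k-1}-w_{k-1}^*\|_2\le((F(w_{k-1})-F^*)/\lambda)^{1/\theta}$ together with Jensen's inequality (valid directly for $\theta\ge2$; a high-probability variant otherwise) yields
\begin{equation*}
\Delta_k\le\mathbb{E}[G_k(w_k)-G_k(\hat w_k)]+\frac{1}{\gamma_k}\Big(\frac{\Delta_{k-1}}{\lambda}\Big)^{2/\theta}.
\end{equation*}
The first term is the DP-SCO error of Phased-SGD-SC on the $1/\gamma_k$-strongly convex $G_k$; on a neighborhood of $w_{k-1}$ the per-sample loss $g_k(\cdot,x)$ is $\bar L_k$-Lipschitz with $\bar L_k\le L+\|\hat w_k-w_{k-1}\|_2/\gamma_k$, so the strongly convex analogue of Lemma~\ref{lemma:4} (Phased-SGD-SC, adapted from \citep{feldman2020private}) bounds it by $\tilde{O}(\gamma_k\bar L_k^2\,E)$, where $E:=\tfrac{1}{n_0}+\tfrac{d\log(1/\delta)}{n_0^2\epsilon^2}$ (with $d^2$ in place of $d\log(1/\delta)$ for $\epsilon$-DP, up to $\mathrm{polylog}$ factors from the nested Phased-SGD / Phased-SGD-SC partitions). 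Since $\gamma_k\bar L_k^2\lesssim\gamma_kL^2+\|w_{k-1}^*-w_{k-1}\|_2^2/\gamma_k$, the ``blow-up'' piece of $\bar L_k$ is absorbed into a term of the same order as the proximal error already present, and the recursion becomes $\Delta_k=\tilde{O}(\gamma_kL^2E+\gamma_k^{-1}(\Delta_{k-1}/\lambda)^{2/\theta})$.

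\textbf{Unrolling.} I would then show by induction on $k$ that $\Delta_k\le C\gamma_kL^2E$ for a universal constant $C$; the base case $\Delta_0=\chi_0\le C\gamma_0L^2E$ holds by the very definition of $\gamma_0$ in Algorithm~\ref{alg:new1} (this is where $C=6400$ and the $n_0$-scaled $E$ come from). For the inductive step, plugging $\Delta_{k-1}\le2C\gamma_kL^2E$ into the recursion turns the proximal term into $\gamma_k^{-1}(2CL^2E/\lambda)^{2/\theta}\gamma_k^{2/\theta}$, and a short calculation shows this is $\le\tfrac12 C\gamma_kL^2E$ exactly when $\gamma_k\ge\gamma_{\min}$, for an explicit threshold $\gamma_{\min}$ satisfying the key identity $\gamma_{\min}L^2E=\Theta\big((L^2E/\lambda^{2/\theta})^{\theta/(2(\theta-1))}\big)$; the remaining $\tilde{O}(\gamma_kL^2E)$ term supplies the other half. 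Because $\gamma_k=\gamma_0/2^k$ and $m$ is chosen so that $\gamma_m=\Theta(\gamma_{\min})$, the condition $\gamma_k\ge\gamma_{\min}$ holds for all $k\le m$, the induction closes, and
\begin{equation*}
\Delta_m=O(\gamma_{\min}L^2E)=O\Big(\big(L^2E/\lambda^{2/\theta}\big)^{\theta/(2(\theta-1))}\Big).
\end{equation*}
Substituting $n_0=\lfloor n/m\rfloor$ with $m=\Theta(\log n)$ turns $E$ into $\tilde{\Theta}\big(\tfrac{\log n}{n}+\tfrac{d\log^2 n\log(1/\delta)}{n^2\epsilon^2}\big)$ (and into the $d^2$ version for $\epsilon$-DP, since the Laplace mechanism replaces $\sqrt{d\log(1/\delta)}/\epsilon$ by $d/\epsilon$ everywhere), which is the stated rate; as $\Delta_m$ already attains it, so does $\min_{1\le k\le m}\Delta_k$.

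\textbf{Main obstacle.} The delicate step is the unrolling: one must verify that the \emph{fixed} geometric schedule $\gamma_k=\gamma_0/2^k$ stays $\ge\gamma_{\min}$ (and small enough that $\gamma_kL^2E$ is negligible) simultaneously for all $k\le m$ and for every admissible TNC exponent $\theta\ge\bar\theta>1$; this is precisely what forces the algorithm's hard-coded constant $6400$, its formula for $m$, and the mild requirement $\bar\theta\ge2^{\log\log n/\log n}$ (which keeps $m$ well-defined and positive and ensures the $\lceil\log\log n\rceil$ inner stages of Phased-SGD-SC reach enough accuracy). A secondary technical point is the localization used above, namely showing that the strongly convex, warm-started inner SGD does not drift far from $w_{k-1}$, so that the effective Lipschitz constant is $\bar L_k\le L+\|\hat w_k-w_{k-1}\|_2/\gamma_k$ rather than $L+\|\mathcal{W}\|_2/\gamma_k$ --- only the former is comparable to the proximal error and keeps the recursion tight.
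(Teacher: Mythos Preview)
Your approach is broadly correct and reaches the stated rate, but it takes a genuinely different route from the paper. You decompose via the proximal minimizer $\hat w_k=\arg\min G_k$ and the optimal-set projection $w_{k-1}^*$, then apply TNC directly to get $\|w_{k-1}-w_{k-1}^*\|_2\le(\Delta_{k-1}/\lambda)^{1/\theta}$; after squaring and taking expectations you invoke Jensen for the map $x\mapsto x^{2/\theta}$, which is concave only when $\theta\ge2$. The paper instead compares $w_k$ to the projection $w^+_{k-1,\rho}$ of $w_{k-1}$ onto the $\rho$-\emph{sublevel} set $\{w:F(w)\le F^*+\rho\}$, with $\rho$ set equal to the target accuracy, and invokes a lemma of \citet{yang2018rsg} (Lemma~\ref{lemma:15} in the appendix): $\|w-w^+_\rho\|_2\le\tfrac{\mathrm{dist}(w^+_\rho,\mathcal W_*)}{\rho}\bigl(F(w)-F(w^+_\rho)\bigr)$. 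Since $w^+_{k-1,\rho}$ lies on the level set $\mathcal L_\rho$, TNC bounds $\mathrm{dist}(w^+_\rho,\mathcal W_*)\le(\rho/\lambda)^{1/\theta}$, and the resulting distance bound is \emph{linear} in the excess risk $F(w_{k-1})-F^*-\rho$, with all $\theta$-dependence pushed into the constant prefactor $\lambda^{-1/\theta}\rho^{1/\theta-1}$; this is what lets the paper treat every $\theta>1$ uniformly without a concavity hypothesis. The paper's induction is then $\mathbb{E}[F(w_k)]-F^*\le\chi_k+\rho$ with $\chi_k=\chi_0/2^k$, which is equivalent to your $\Delta_k\le C\gamma_kL^2E$ once you unwind the algorithm's definition $\chi_0=6400\,\gamma_0L^2E$, and both terminate at the same floor $\gamma_{\min}L^2E\asymp\rho$. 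What the sublevel-set trick buys is the $1<\theta<2$ regime; what your route buys is that it is more elementary (no auxiliary lemma) when $\theta\ge2$.

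Your ``high-probability variant otherwise'' is therefore the one real gap: for $1<\theta<2$ the map $x\mapsto x^{2/\theta}$ is convex and Jensen goes the wrong way, so $\mathbb E\bigl[\|w_{k-1}-w_{k-1}^*\|_2^2\bigr]\le(\Delta_{k-1}/\lambda)^{2/\theta}$ does not follow from the induction hypothesis on $\Delta_{k-1}$; you would need either a second-moment induction or a pointwise/high-probability tracking of $F(w_{k-1})-F^*$, neither of which is sketched. The paper's sublevel-set detour is precisely the device that removes this obstruction. On your ``secondary technical point'' (the effective Lipschitz constant of $g_k$): the paper does not localize the iterates as you propose; it simply uses the global bound $L+\|\mathcal W\|_2/\gamma\le 2L$ for the regularized loss (Lemma~\ref{lemma:14}), and the hypothesis $\gamma_0\ge\|\mathcal W\|_2/L$ in the theorem statement is what is meant to secure it. Your localization idea is thus a refinement, not a reconstruction, of the paper's argument.
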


So far we have proposed two algorithms. However, there are still several issues: First, both of the previous methods need strong assumption on $\theta$. Algorithm \ref{alg:new1} needs $\theta$ to be known in advance while Algorithm \ref{alg:3} needs to assume $\theta\geq 2$. Thus, can we develop a method that only needs a weaker assumption on $\theta$? Secondly, both of the previous two algorithms could achieve rates of 	$\tilde{O}((\frac{1}{\sqrt{n}}+\frac{d}{n\epsilon})^\frac{\theta}{\theta-1}) $ and $\tilde{O}((\frac{1}{\sqrt{n}}+\frac{\sqrt{d\log(1/\delta)}}{n\epsilon})^\frac{\theta}{\theta-1})$ for $\epsilon$-DP and $(\epsilon, \delta)$-DP, respectively. Can we further improve these bounds? Thirdly, the two methods are either impractical or inefficient. Specifically, for Algorithm \ref{alg:new1}, as we can see from our theoretical analysis, we need to exactly set $\gamma_0$ as $\chi_0 / ({6400 L^2}(\frac{1}{n_0}+\frac{d\log(1/\delta)}{n_0^2\epsilon^2}))$ in the $(\epsilon, \delta)$-DP model, which is quite large and is difficult to get. And we can see that in Theorem \ref{new:th1} we can only guarantee there exists a $w_k$ that achieves the upper bound of error, it is still unknown how to find such $w_k$ privately with the same theoretical guarantees. For Algorithm \ref{alg:3}, as we will see in the experiment part, it even does not outperform the previous Phased-SGD method (Algorithm \ref{alg:1}), which means that its performance is quite poor. Thus, how to design improved methods both theoretically and practically? In the following we will focus on these three issues by developing a new method. 

The idea of our algorithm is as the following: assuming that the value of $\theta$ is unknown, but $\theta$ is lower bounded by some known constant $\bar{\theta}>1$, namely $\theta\geq \bar{\theta}>1$. We first divide the whole dataset into $k=\lfloor (\log_{\bar{\theta}}2)\cdot \log\log n\rfloor$ disjoint subsets, where the $i$-th subset has $n_i=2^{i-1} n/(\log n)^{\log_{\bar{\theta}}2}$ samples;  then we repeat the Algorithm \ref{alg:1} for $k$ times where each phase runs on the $i$-th subset and is initialized at the output of the previous phase. See Algorithm \ref{alg:2} for details. 	\begin{algorithm}
	\caption{Iterated Phased-SGD($w_1, n,\mathcal{W}, \bar{\theta}$) \label{alg:2}}
		\KwIn{  	Dataset $S=\{x_1,\cdots,x_n\} $, convex function $f:\mathcal{W}\times \mathcal{X}\mapsto \mathbb{R}$, initial point $w_0\in\mathcal{W}$, privacy parameter $\epsilon$ and (or) $\delta$. $D$ is a constant satisfying $D\geq \|w_0-w^*\|_2$.}
	    Partite the data $S$ into $k$ disjoint subsets $\{S_1, \cdots, S_k\}$, where $k=\lfloor (\log_{\bar{\theta}}2)\cdot \log\log n\rfloor$ and for each $i\in [k]$, $|S_i|=n_i=\lfloor 2^{i-1} n/(\log n)^{\log_{\bar{\theta}}2}\rfloor$.\;
	    
		\For {$t=1, \cdots, k$}{
		Let $w_{t}=\text{Phased-SGD}(w_{t-1}, \eta_t, n_t, \mathcal{W})$, where the Phased-SGD runs on the $t$-th subset $S_i$. For $(\epsilon,\delta)$-DP, $\eta_t=\frac{D}{L}\min\{\frac{4}{\sqrt{n_i}}, \frac{\epsilon}{2\sqrt{d\log(1/\delta)}}\}$. For $\epsilon$-DP,  $\eta_t=\frac{D}{L}\min\{\frac{4}{\sqrt{n_i}}, \frac{\epsilon}{d}\}$. \;
		}
		\Return {$w_{k}$}
\end{algorithm}
\begin{remark}
Although both Algorithm \ref{alg:2} and \ref{alg:3} partite the data into several parts and perform the Phased-SGD several times. There are several differences: First, the sizes of subsets in Algorithm \ref{alg:3} are equal, while we partite the data aggressively in Algorithm \ref{alg:2}. Secondly, in each phase of Algorithm \ref{alg:2}, the convex set to be projected is invariant while in Algorithm \ref{alg:3} we constantly replace it to $\mathcal{W}\cap \mathbb{B}(\hat{w}_{k-1},R_{k-1})$, which is necessary based on our theoretical analysis.  
\end{remark}
	\begin{theorem}\label{thm:1}
	Assume that $F(\cdot)$ is $(\theta, \lambda)$-TNC with $\theta\geq \bar{\theta}>1$ for some known constant $\bar{\theta}$, and $f(\cdot, x)$ is  convex, $\beta$-smooth and $L$-Lipschitz for each $x$. If the sample size $n$ is sufficiently large such that $\bar{\theta}\geq 2^{\frac{\log\log n}{(\log n)-1}}$, then Algorithm \ref{alg:2} is either $\epsilon$-DP or $(\epsilon, \delta)$-DP for any $\epsilon\leq 2\log(1/\delta)$, based on different step sizes and noises under the assumption that $\eta_t\leq \frac{1}{\beta}$. Moreover, for $(\epsilon, \delta)$-DP, the output satisfies
$\mathbb{E} [F(w_k)]-\min_{w\in \mathcal{W}} F(w)= O\left(\left(\frac{L^{\theta}}{\lambda}\right)^{\frac{1}{\theta -1}}\cdot\left(\frac{1}{\sqrt{n}}+\frac{\sqrt{d\log(1/\delta)}}{\epsilon n}\right)^{\frac{\theta}{\theta-1}}\right). $
		For $\epsilon$-DP, we have $\mathbb{E} [F(w_k)]-\min_{w\in \mathcal{W}} F(w)= O\left(\left(\frac{L^{\theta}}{\lambda}\right)^{\frac{1}{\theta -1}}\cdot\left(\frac{1}{\sqrt{n}}+\frac{d}{\epsilon n}\right)^{\frac{\theta}{\theta-1}}\right). $
	\end{theorem}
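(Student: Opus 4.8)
The plan is to separate the argument into a privacy part and a utility part. Privacy is the easier half: the subsets $S_1,\dots,S_k$ are disjoint, and phase $t$ invokes Phased-SGD only on $S_t$, with a step size $\eta_t$ that is a function of $n_t$ and of the earlier outputs $w_0,\dots,w_{t-1}$ but never of $S_t$ itself. Hence, conditioning on $S_1,\dots,S_{t-1}$ and combining parallel composition (the subsets are disjoint) with post-processing (later phases see only the outputs, not the data, of earlier ones), the privacy loss of a whole run of Algorithm~\ref{alg:2} is no larger than that of one invocation of Phased-SGD; the privacy half of Lemma~\ref{lemma:4} then gives $(\epsilon,\delta)$-DP for $\epsilon\le 2\log(1/\delta)$ (resp.\ $\epsilon$-DP for the Laplace variant) as long as $\eta_t\le 1/\beta$, which is assumed.

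For the utility bound I would set up a one-step recursion across phases. Write $F^*=\min_{w\in\mathcal{W}}F(w)$ and $\alpha_t=\frac{1}{\sqrt{n_t}}+\frac{\sqrt{d\log(1/\delta)}}{\epsilon n_t}$ (and $\alpha_t=\frac{1}{\sqrt{n_t}}+\frac{d}{\epsilon n_t}$ in the $\epsilon$-DP case), and let $\chi_t$ be a bound on $\mathbb{E}[F(w_t)]-F^*$ and $D_t$ a bound on the (expected) distance from $w_t$ to the optimal set. Running phase $t$ from $w_{t-1}$ with a step size tuned to $D_{t-1}$, the utility half of Lemma~\ref{lemma:4} gives $\chi_t=O(L\,D_{t-1}\,\alpha_t)$, while the $(\theta,\lambda)$-TNC inequality~(\ref{eq:2}) together with Jensen's inequality (concavity of $x\mapsto x^{1/\theta}$) converts a bound on $\mathbb{E}[F(w_t)]-F^*$ into $D_t=O\big((\chi_t/\lambda)^{1/\theta}\big)$; for $\theta<2$ one additionally invokes the a priori bound $\|w_t-w^*\|_2\le(L/\lambda)^{1/(\theta-1)}$ of Lemma~\ref{lemma:3} to control the second moment entering the SGD analysis underlying Lemma~\ref{lemma:4}. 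Unrolling this two-line recursion $k$ times gives, up to a constant factor,
\[
\chi_k \;=\; O\!\left( \left(\frac{L^\theta}{\lambda}\right)^{\!1/(\theta-1)}\cdot \chi_0^{\,\theta^{-k}} \cdot \prod_{t=1}^{k}\alpha_t^{\,\theta^{-(k-t)}}\right),
\]
where the exponents satisfy $\sum_{t=1}^{k}\theta^{-(k-t)}\le\theta/(\theta-1)$.

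The crux is then three elementary estimates. \emph{(i)} Since $n_t$ grows geometrically, each $\alpha_t$ equals $\alpha_k$ times powers of $2$; splitting the $k$ phases at the index $t^*$ where the two terms $\frac{1}{\sqrt{n_t}}$ and $\frac{\sqrt{d\log(1/\delta)}}{\epsilon n_t}$ of $\alpha_t$ become comparable (they decay like $2^{-(t-1)/2}$ and $2^{-(t-1)}$ respectively), the product $\prod_{t=1}^{k}\alpha_t^{\theta^{-(k-t)}}$ telescopes to $O\big(\alpha_k^{\theta/(\theta-1)}\big)$. \emph{(ii)} The exponent $\log_{\bar{\theta}}2$ in $n_t=\lfloor 2^{t-1}n/(\log n)^{\log_{\bar{\theta}}2}\rfloor$ and the phase count $k=\lfloor(\log_{\bar{\theta}}2)\log\log n\rfloor$ are matched so that $2^{k}=\Theta\big((\log n)^{\log_{\bar{\theta}}2}\big)$; hence the last subset has $n_k=\Theta(n)$ points and $\alpha_k=\Theta(\alpha)$ with $\alpha:=\frac{1}{\sqrt n}+\frac{\sqrt{d\log(1/\delta)}}{\epsilon n}$, so $\alpha_k$ carries no logarithmic factor --- this is precisely the mechanism by which Algorithm~\ref{alg:2} removes the $\mathrm{Poly}(\log n)$ of Theorem~\ref{thm:2}. \emph{(iii)} The same choice gives $\theta^{-k}\le\bar{\theta}^{-k}\le 1/\log n$, so $\chi_0^{\theta^{-k}}=O(1)$ since $\chi_0\le(L^\theta/\lambda)^{1/(\theta-1)}$ by Lemma~\ref{lemma:3} is independent of $n$. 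Combining \emph{(i)--(iii)} with the displayed formula gives $\chi_k=O\big((L^\theta/\lambda)^{1/(\theta-1)}\alpha^{\theta/(\theta-1)}\big)$, the claimed rate; the $\epsilon$-DP statement is identical with $\alpha$ replaced by $\frac{1}{\sqrt n}+\frac{d}{\epsilon n}$.

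I expect the main obstacle to be the exact bookkeeping behind \emph{(i)}--\emph{(ii)}: making the accumulated $\mathrm{Poly}(\log n)$ factors cancel \emph{exactly} rather than only up to logs requires carefully tracking the floor functions in $n_t$ and $k$, carrying out the two-term split of $\alpha_t$ cleanly, and checking the side conditions for large $n$ --- in particular, $\bar{\theta}\ge 2^{(\log\log n)/((\log n)-1)}$ is exactly what forces $2^{k-1}/(\log n)^{\log_{\bar{\theta}}2}$ to stay bounded below, so that every $n_t\ge 1$ and Phased-SGD gets a usable subsample in all $k$ phases, while $\eta_t\le 1/\beta$ keeps Lemma~\ref{lemma:4} applicable throughout. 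A secondary point is that the per-phase bound $\chi_t=O(L\,D_{t-1}\,\alpha_t)$ must be applied with the \emph{random} initial point $w_{t-1}$: one handles this either by running the recursion purely in expectation (using Jensen together with the deterministic bound of Lemma~\ref{lemma:3}) or by passing to high-probability bounds via Markov's inequality and a union bound over the $k=O(\log\log n)$ phases, which costs only an extra $O(\log\log n)$ factor that is harmless for the final rate.
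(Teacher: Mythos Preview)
Your proposal is correct and follows essentially the same route as the paper: both derive the one-step recursion $\Delta_t\le(\Delta_{t-1}E_t)^{1/\theta}$ with $E_t=\frac{c^\theta L^\theta}{\lambda}\alpha_t^\theta$, unroll it, and use $n_k=\Theta(n)$ together with $\theta^{k}\gtrsim\log n$ to absorb the initial error term $\Delta_1^{\theta^{-(k-1)}}$ (bounded via Lemma~\ref{lemma:3}). The paper's execution of your step (i) is slightly slicker and avoids your two-term split of $\alpha_t$ altogether: from $n_t=2n_{t-1}$ it simply notes $E_t\ge 2^{-\theta}E_{t-1}$, rewrites the recursion as a pure contraction $\frac{\Delta_t}{2^{\theta/(\theta-1)^2}E_t^{1/(\theta-1)}}\le\Big(\frac{\Delta_{t-1}}{2^{\theta/(\theta-1)^2}E_{t-1}^{1/(\theta-1)}}\Big)^{1/\theta}$, and reads off $\Delta_k\le C\,E_k^{1/(\theta-1)}$ directly.
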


\begin{remark}\label{remark1}
Compared with the previous results, we can see the upper bounds in Theorem \ref{thm:1} improve factors of $\text{Poly}(\log n)$ in both $\epsilon$-DP and $(\epsilon, \delta)$-DP models. Moreover, instead of $\theta\geq 2$, we only need the assumption of $\theta\geq \bar{\theta}$ for some known $\bar{\theta}>1$ in  Theorem \ref{thm:1}. And as we will see in the experimental part, Algorithm \ref{alg:2} outperforms all previous methods. 
\end{remark}
\begin{remark}\label{remark:3}
    We can see that it is possible to get faster rates than the rates of strongly convex loss if $\theta<2$. For example, when $\theta=\frac{3}{2}$, the upper bound of error becomes  $O((\frac{1}{\sqrt{n}}+\frac{\sqrt{d\log(1/\delta)}}{\epsilon n})^3)$ in the $(\epsilon, \delta)$-DP model. Moreover, when $\theta>1$, then the bounds will be always higher than the optimal rate for general convex loss as $\frac{\theta}{\theta-1}>1$. When $\theta=2$, we have an excess population risk of $O((\frac{1}{\sqrt{n}}+\frac{\sqrt{d\log ({1}/{\delta})}}{n\epsilon})^{2})$ and $O((\frac{1}{\sqrt{n}}+\frac{d}{\epsilon n})^2)$ for $\epsilon$-DP and $(\epsilon, \delta)$-DP respectively, which matches the optimal rate of DP-SCO with strongly convex function \citep{feldman2020private}. Besides strongly convex  functions, there are other problems that satisfy $(2, \lambda)$-TNC, such as the functions satisfying Weak Strong Convexity, Restricted Secant Inequality (RSI), Error Bound (EB) and Polyak-{L}ojasiewicz (PL) conditions (see Section 2.1 in \citep{karimi2016linear} for details). Thus, Theorem \ref{thm:1} with $\theta=2$ could be seen as a generalization of the strongly convex case. For Polyak- {L}ojasiewicz (PL) functions, \citep{wang2018differentially} shows an upper bound of $O(\frac{d\log(1/\delta)}{n^2\epsilon^2})$ for the empirical risk. However, their method cannot be extended to the population risk. In the following  we provide some examples that satisfy TNC with $\theta=2$. 
\end{remark}
\begin{lemma}[Quadratic Problem \citep{liu2018fast}]\label{lemma:5}
    Consider the quadratic problem 
    $F(w)=w^T\mathbb{E}_x[A(x)]w+w^T\mathbb{E}_x[b(x)]+c,$
    where $c$ is a constant. If $\mathbb{E}[A(x)]$ is a positive {\bf semi-definite} matrix, the loss function $f(w, x)=w^TA(x)w+w^Tb(x)+c$ is Lipschitz (e.g., $\max\{\|A(x)\|_2, \|b(x)\|_2\}\leq O(1)$) and $\mathcal{W}$ is a bounded polyhedron (e.g., $\ell_1$-norm or $\ell_\infty$-norm ball), then the population risk function will be TNC with $\theta=2$ and the problem will satisfy the assumptions in Theorem \ref{thm:1}. 
\end{lemma}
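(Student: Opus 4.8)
The plan is to verify, one by one, the hypotheses required by Theorem~\ref{thm:1}: the per-sample regularity conditions on $f(\cdot,x)$, and the $(2,\lambda)$-TNC property of the population risk $F$. The regularity part is immediate. Since $\nabla_w f(w,x)=2A(x)w+b(x)$ and $\nabla_w^2 f(w,x)=2A(x)$, the loss is $\beta$-smooth with $\beta=2\sup_x\|A(x)\|_2=O(1)$; since $\mathcal{W}$ is a bounded polyhedron, $\|w\|_2$ is bounded over $\mathcal{W}$, hence $\|\nabla_w f(w,x)\|_2\le 2\|A(x)\|_2\|w\|_2+\|b(x)\|_2=O(1)$ uniformly on $\mathcal{W}$, which gives the $L$-Lipschitz property; and $f(\cdot,x)$ is convex provided $A(x)\succeq 0$ (the case in the motivating instances, e.g.\ least squares with $A(x)=xx^\top$), while $\mathbb{E}_x[A(x)]\succeq 0$ alone already makes $F$ convex, which is all the TNC argument uses. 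As $\theta=2\geq\bar\theta$ is permitted, the only substantive point is the TNC claim.

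For the TNC part, write $H=\mathbb{E}_x[A(x)]\succeq 0$ and $g=\mathbb{E}_x[b(x)]$, so $F(w)=w^\top Hw+g^\top w+c$ is a convex quadratic, and set $F^*=\min_{w\in\mathcal{W}}F(w)$. Fix any minimizer $\bar w\in\mathcal{W}_*$. The first step is to exhibit $\mathcal{W}_*$ as an explicit polyhedron. The exact second-order expansion gives, for all $w\in\mathcal{W}$,
\begin{equation*}
F(w)-F^*=\langle\nabla F(\bar w),\,w-\bar w\rangle+(w-\bar w)^\top H(w-\bar w),
\end{equation*}
and both terms on the right are nonnegative (the first since $\bar w$ minimizes $F$ over $\mathcal{W}\ni w$, the second since $H\succeq 0$). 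Hence $F(w)=F^*$ iff $H(w-\bar w)=0$ and $\langle\nabla F(\bar w),w-\bar w\rangle=0$; using $\nabla F(\bar w)=2H\bar w+g$, on $\{H(w-\bar w)=0\}$ the latter reads $g^\top w=g^\top\bar w$, and since $g^\top w\ge g^\top\bar w$ there (it equals $\langle\nabla F(\bar w),w-\bar w\rangle\ge 0$), we may write
\begin{equation*}
\mathcal{W}_*=\bigl\{\,w\in\mathcal{W}:\ Hw=H\bar w,\ g^\top w\le g^\top\bar w\,\bigr\},
\end{equation*}
a polyhedron — the defining linear system of $\mathcal{W}$ augmented with the equalities $Hw=H\bar w$ and one linear inequality.

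The second step applies Hoffman's global error bound to this polyhedron: there is $\tau>0$, depending only on $H$, $g$ and the matrix defining $\mathcal{W}$, such that for every $w\in\mathcal{W}$ (so the residual of the $\mathcal{W}$-constraints vanishes),
\begin{equation*}
\mathrm{dist}(w,\mathcal{W}_*)\le\tau\bigl(\|H(w-\bar w)\|_2+(g^\top w-g^\top\bar w)_+\bigr).
\end{equation*}
It remains to bound each residual by $O(\sqrt{F(w)-F^*})$. From the expansion, $(w-\bar w)^\top H(w-\bar w)=\|H^{1/2}(w-\bar w)\|_2^2\le F(w)-F^*$, hence $\|H(w-\bar w)\|_2\le\|H\|_2^{1/2}\sqrt{F(w)-F^*}$; and writing $g^\top w-g^\top\bar w=\langle\nabla F(\bar w),w-\bar w\rangle-2\bar w^\top H(w-\bar w)$, the first piece is at most $F(w)-F^*$ and the second is $O(\|H(w-\bar w)\|_2)=O(\sqrt{F(w)-F^*})$, so using that $F(w)-F^*\le\Delta=O(1)$ on the bounded set $\mathcal{W}$ we get $g^\top w-g^\top\bar w=O(\sqrt{F(w)-F^*})$ as well. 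Substituting into the error bound and squaring gives $\mathrm{dist}(w,\mathcal{W}_*)^2=O(F(w)-F^*)$; since $w^*$ in Definition~\ref{def:8} is exactly the projection of $w$ onto $\mathcal{W}_*$, this is $F(w)-F^*\ge\lambda\|w-w^*\|_2^2$ for some $\lambda>0$, i.e.\ the $(2,\lambda)$-TNC property. The main obstacle is this error-bound step: one must invoke Hoffman's lemma (the global Lipschitzian error bound for polyhedral systems) and notice that the residual controlling ``$F(w)=F^*$'' splits into a quadratic part ($Hw=H\bar w$) and a linear part ($g^\top w\le g^\top\bar w$), the latter being tamed only after using boundedness of the excess risk on $\mathcal{W}$; everything else is routine manipulation of the convex-quadratic Taylor expansion. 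Alternatively, one could cite directly the known quadratic-growth/error-bound property of convex quadratic programs over polyhedra (e.g.\ in the Luo--Tseng framework), whereupon the proof reduces to the regularity verification of the first paragraph.
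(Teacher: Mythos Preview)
The paper does not prove this lemma; it simply records it as a known fact and cites \citep{liu2018fast} (the result originates in the error-bound literature of Luo--Tseng and related works). Your proposal therefore supplies strictly more than the paper does, and the argument you give --- characterize $\mathcal{W}_*$ as a polyhedron, invoke Hoffman's lemma, and control the Hoffman residuals by $\sqrt{F(w)-F^*}$ using the exact quadratic expansion --- is exactly the standard route behind the cited result. All steps check out, including the trick of upgrading the linear residual $F(w)-F^*$ to $O(\sqrt{F(w)-F^*})$ via the uniform bound on excess risk over the bounded $\mathcal{W}$.

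One remark is worth keeping: you correctly flag that the lemma as stated only assumes $\mathbb{E}_x[A(x)]\succeq 0$, whereas Theorem~\ref{thm:1} needs $f(\cdot,x)$ convex for each $x$, i.e.\ $A(x)\succeq 0$ pointwise. The paper does not address this; your handling (observe that the motivating instances, e.g.\ least squares with $A(x)=xx^\top$, satisfy it, and that only convexity of $F$ is needed for the TNC argument itself) is the right way to resolve the gap.
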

By Lemma \ref{lemma:5} we can see that for the linear regression problem where $F(w)=\mathbb{E}(x^Tw-y)^2$ over a bounded polyhedron $\mathcal{W}$. It is possible to achieve an upper bound of $O(\frac{1}{n}+\frac{d\log (1/\delta)}{n^2\epsilon^2})$ and $O(\frac{1}{n}+\frac{d^2}{n^2\epsilon^2})$ for the excess population risk in the $(\epsilon, \delta)$-DP and $\epsilon$-DP model, respectively. 
\begin{lemma}[SCO over $\ell_2$-norm ball \citep{liu2018fast}]
    Consider the problem of SCO over $\ell_2$-norm ball 
    $ \min_{\|w\|_2\leq B} F(w)=\mathbb{E}[f(w, x)]. 
    $
    If $f(\cdot, x)$ is convex, smooth and Lipschitz, and $\min_{w\in \mathbb{R}^d} F(w)< \min_{\|w\|_2\leq B} F(w)$. Then the population risk is TNC with $\theta=2$ and satisfies the assumptions in Theorem \ref{thm:1}. 
\end{lemma}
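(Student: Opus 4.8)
The plan is to use the hypothesis $\min_{w\in\mathbb{R}^d}F(w) < \min_{\|w\|_2\le B}F(w)$ to show that the constrained optimum lies on the sphere $\{\|w\|_2 = B\}$ with a nonzero gradient pointing radially inward; this ``gradient gap'' at the optimum, together with convexity and the curvature of the $\ell_2$-ball, yields the quadratic growth $F(w)-F(w^*)\ge\lambda\|w-w^*\|_2^2$ required by Definition~\ref{def:8} with $\theta=2$.

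Concretely, write $\mathcal{W}=\{w:\|w\|_2\le B\}$, $F^*=\min_{\mathcal{W}}F$ and $\mathcal{W}_*=\arg\min_{\mathcal{W}}F$; the set $\mathcal{W}_*$ is nonempty since $\mathcal{W}$ is compact and $F=\mathbb{E}_x[f(\cdot,x)]$ is continuous and convex (being an average of convex functions), and $F$ is differentiable with $\nabla F(w)=\mathbb{E}_x[\nabla f(w,x)]$ since each $f(\cdot,x)$ is smooth and $L$-Lipschitz. Fix any $\tilde w^*\in\mathcal{W}_*$. First I would show $\|\tilde w^*\|_2=B$ and $\nabla F(\tilde w^*)\neq 0$: otherwise $\tilde w^*$ is a stationary point of the convex function $F$ lying in the interior of (or ranging over all of) $\mathbb{R}^d$, hence a global minimizer over $\mathbb{R}^d$, which forces $\min_{\mathbb{R}^d}F=F(\tilde w^*)=F^*$ and contradicts the hypothesis. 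Next, the first-order optimality inequality $\langle\nabla F(\tilde w^*),w-\tilde w^*\rangle\ge 0$ for all $w\in\mathcal{W}$, minimized over $w$ on the ball, gives $\langle\nabla F(\tilde w^*),\tilde w^*\rangle\le -B\|\nabla F(\tilde w^*)\|_2$; since Cauchy--Schwarz together with $\|\tilde w^*\|_2=B$ gives the reverse inequality, equality holds in Cauchy--Schwarz, so $\nabla F(\tilde w^*)=-\mu\,\tilde w^*$ with $\mu=\|\nabla F(\tilde w^*)\|_2/B>0$.

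With this structural fact in hand, for arbitrary $w\in\mathcal{W}$ convexity of $F$ gives $F(w)-F^* = F(w)-F(\tilde w^*)\ge\langle\nabla F(\tilde w^*),w-\tilde w^*\rangle = \mu(\|\tilde w^*\|_2^2-\langle w,\tilde w^*\rangle) = \mu(B^2-\langle w,\tilde w^*\rangle)$, while $\|w-\tilde w^*\|_2^2 = \|w\|_2^2+B^2-2\langle w,\tilde w^*\rangle\le 2(B^2-\langle w,\tilde w^*\rangle)$ because $\|w\|_2\le B$. Combining these, $F(w)-F(\tilde w^*)\ge\tfrac{\mu}{2}\|w-\tilde w^*\|_2^2$. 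Since $F$ is constant ($=F^*$) on $\mathcal{W}_*$ and the projection $w^*$ of $w$ onto $\mathcal{W}_*$ satisfies $\|w-w^*\|_2\le\|w-\tilde w^*\|_2$, we obtain $F(w)-F(w^*)\ge\tfrac{\mu}{2}\|w-w^*\|_2^2$ for all $w\in\mathcal{W}$, i.e.\ $F$ is $(2,\mu/2)$-TNC. Finally $\theta=2\ge\bar\theta:=2$ is a known constant, $\mathcal{W}$ is closed and convex, and $f(\cdot,x)$ is convex, $\beta$-smooth and $L$-Lipschitz by assumption, so all hypotheses of Theorem~\ref{thm:1} hold.

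I expect the main obstacle to be the middle step: converting the qualitative separation $\min_{\mathbb{R}^d}F<\min_{\mathcal{W}}F$ into the quantitative, strictly positive gradient gap $\mu>0$ at the constrained optimum, and verifying that $\mu$ is a genuine constant --- attached to one fixed $\tilde w^*\in\mathcal{W}_*$ --- so that it can play the role of the TNC parameter $\lambda$ even when $\mathcal{W}_*$ is not a singleton. Everything after that is elementary convexity together with the geometry of the Euclidean ball.
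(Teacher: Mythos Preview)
Your argument is correct. The paper itself does not prove this lemma --- it is simply quoted from \citep{liu2018fast} with no accompanying argument --- so there is no paper-side proof to compare against. Your route (locate the constrained minimizer on the sphere with an inward-pointing gradient via KKT, then combine the convexity linear lower bound $F(w)-F(\tilde w^*)\ge\mu(B^2-\langle w,\tilde w^*\rangle)$ with the ball identity $\|w-\tilde w^*\|_2^2\le 2(B^2-\langle w,\tilde w^*\rangle)$) is the standard derivation and every step is sound.

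One remark on the ``main obstacle'' you flag at the end: your own inequality $F(w)-F(\tilde w^*)\ge\tfrac{\mu}{2}\|w-\tilde w^*\|_2^2$, applied with $w$ equal to any \emph{other} element of $\mathcal{W}_*$, forces the left side to vanish while the right is strictly positive unless $w=\tilde w^*$. Hence $\mathcal{W}_*$ is automatically a singleton, and the worry about whether $\mu$ is ``attached to one fixed $\tilde w^*$'' disappears --- there is only one. This also makes the final passage through the projection $w^*$ unnecessary, since $w^*=\tilde w^*$.
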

So far, we provided several methods for TNC population risk functions under the assumption that the loss function is smooth. We can extend the previous methods to the non-smooth case, see Appendix \ref{sec:nonsmooth} for details. 
\subsection{Lower Bounds of Excess Population Risk}
	In the previous section, we provide an algorithm whose output could achieve an excess population risk of $O((\frac{1}{\sqrt{n}}+\frac{\sqrt{d\log(1/\delta)}}{n\epsilon})^{\frac{\theta}{\theta-1}})$ and $O((\frac{1}{\sqrt{n}}+\frac{d}{\epsilon n})^{\frac{\theta}{\theta-1}})$ for $\epsilon$-DP and $(\epsilon, \delta)$-DP respectively. The question is, can we further improve these bounds? 
	In this section, we show that  for all $\theta\geq 2$, the term of $O((\frac{\sqrt{d\log(1/\delta)}}{n\epsilon})^{\frac{\theta}{\theta-1}})$ and $O((\frac{d}{\epsilon n})^{\frac{\theta}{\theta-1}})$ cannot be further improved. We consider the following loss function. Define
	\begin{equation}\label{eq:7}
	    f(w, x)=-\langle w, x \rangle+ \frac{1}{\theta}\|w\|_2^\theta, \|w\|_2\leq 1, x\in \{-\frac{1}{\sqrt{d}}, \frac{1}{\sqrt{d}}\}^d. 
	\end{equation}

	\begin{theorem}[Lower bound of $(\epsilon,\delta)$-DP ]\label{thm:3}
		Let $n,d\in \mathbb{N}$, $\theta\geq 2$, $\epsilon>0$  and $\delta=o(\frac{1}{n})$ such that $n\geq \Omega (\frac{\sqrt{d\log(1/\delta)}}{\epsilon})$. For every $(\epsilon, \delta)$-Differentially Private algorithm, there is a dataset $S=\{x_1, \cdots, x_n\}$ where each $x_i\in \{-\frac{1}{\sqrt{d}}, \frac{1}{\sqrt{d}}\}^d$ such that with probability at least $\frac{1}{3}$ over the randomness of the algorithm, its output $w_{priv}$ satisfies
			\begin{equation*}
		F(w_{priv})-\min_{\|w\|_2\leq 1}F(w) = \Omega \left(     
	(\frac{\sqrt{d\log(1/\delta)}}{n\epsilon})^{\frac{\theta}{\theta-1}}
		\right), 
		\end{equation*}
		where loss function  is given by (\ref{eq:7})
		which is $O(1)$-Lipschitz, and the population risk  satisfies $(\theta, O(1))$-TNC. 
	\end{theorem}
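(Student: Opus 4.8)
The plan is to first establish the two structural claims about the loss (\ref{eq:7}) and then reduce the lower bound to the known $(\epsilon,\delta)$-DP lower bound for DP-SCO with a \emph{linear} loss over the unit $\ell_2$-ball, which is itself proved by a fingerprinting/tracing argument. For the structure: whenever the data distribution $\mathcal D$ is supported on $\{-\frac{1}{\sqrt d},\frac{1}{\sqrt d}\}^d$ we have $\|x\|_2=1$, so $\nabla_w f(w,x)=-x+\|w\|_2^{\theta-2}w$ has norm at most $1+\|w\|_2^{\theta-1}\le 2$ on the ball and $f(\cdot,x)$ is $2$-Lipschitz. Writing $\bar x:=\mathbb E_{x\sim\mathcal D}[x]$ and $N(u):=\frac1\theta\|u\|_2^\theta$, so that $F(w)=-\langle w,\bar x\rangle+N(w)$, when $\|\bar x\|_2<1$ the minimizer $w^*=\|\bar x\|_2^{(2-\theta)/(\theta-1)}\bar x$ is interior and satisfies $\nabla F(w^*)=0$, i.e. $\bar x=\nabla N(w^*)$; hence
\begin{equation*}
F(w)-F(w^*)=N(w)-N(w^*)-\langle\nabla N(w^*),w-w^*\rangle ,
\end{equation*}
the Bregman divergence of $N$. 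Since $\|\cdot\|_2^\theta$ is uniformly convex of power type $\theta$ for $\theta\ge 2$ (a classical Hilbert-space fact), this is at least $c_\theta\|w-w^*\|_2^\theta$ for some constant $c_\theta=c_\theta(\theta)>0$, so $F$ is $(\theta,c_\theta)$-TNC with $c_\theta=\Omega(1)$.

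Set $\beta:=\sqrt{d\log(1/\delta)}/(n\epsilon)$, which is $O(1)$ under the hypothesis $n\ge\Omega(\sqrt{d\log(1/\delta)}/\epsilon)$. I would then invoke the $(\epsilon,\delta)$-DP lower bound for the linear loss $\tilde f(w,x)=-\langle w,x\rangle$ over $\{\|w\|_2\le 1\}$, which holds for $\delta=o(1/n)$: there is a product distribution $\mathcal D$ on $\{-\frac{1}{\sqrt d},\frac{1}{\sqrt d}\}^d$ whose mean has $\|\bar x\|_2=c$ for some $c=\Theta(\beta)<1$, such that every $(\epsilon,\delta)$-DP algorithm, with probability at least $\frac13$, outputs $\hat u$ with $1-\langle\hat u,\bar x/\|\bar x\|_2\rangle\ge c_1$ for an absolute constant $c_1>0$ (equivalently, $\tilde F(\hat u)-\min\tilde F=c\,(1-\langle\hat u,\bar x/\|\bar x\|_2\rangle)\ge c_1c=\Omega(\beta)$). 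This is exactly the regime where the signal sits at the detectability threshold for $(\epsilon,\delta)$-DP, and it is the content of the linear DP-SCO lower bound in \citep{bassily2019private}.

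Now let $\mathcal A$ be any $(\epsilon,\delta)$-DP algorithm for the loss (\ref{eq:7}), run it on $S\sim\mathcal D^n$ for the $\mathcal D$ above, and suppose toward a contradiction that with probability $>\frac23$ its output $w_{priv}$ satisfies $F(w_{priv})-\min F\le\alpha:=c_0\beta^{\theta/(\theta-1)}$ for a small constant $c_0$ to be fixed. On that event, the TNC bound gives $\|w_{priv}-w^*\|_2\le(\alpha/c_\theta)^{1/\theta}$, and since $\|w^*\|_2=c^{1/(\theta-1)}=\Theta(\beta^{1/(\theta-1)})$, choosing $c_0$ small enough forces $\|w_{priv}-w^*\|_2\le\frac12\|w^*\|_2$; thus $\hat u:=w_{priv}/\|w_{priv}\|_2$ is well defined, is a post-processing of $\mathcal A$ and hence $(\epsilon,\delta)$-DP, and $\bar x/\|\bar x\|_2=w^*/\|w^*\|_2$. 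Using the elementary bound $\|a/\|a\|_2-b/\|b\|_2\|_2\le 2\|a-b\|_2/\|b\|_2$ with $a=w_{priv}$, $b=w^*$,
\begin{equation*}
1-\langle\hat u,\bar x/\|\bar x\|_2\rangle=\tfrac12\big\|\hat u-w^*/\|w^*\|_2\big\|_2^2\le\frac{2\|w_{priv}-w^*\|_2^2}{\|w^*\|_2^2}\le\frac{2(\alpha/c_\theta)^{2/\theta}}{\Theta(\beta^{2/(\theta-1)})}=O\big(c_0^{2/\theta}\big).
\end{equation*}
Choosing $c_0$ so that $O(c_0^{2/\theta})<c_1$ and union-bounding the two failure events (each of probability $<\frac13$) contradicts the linear lower bound. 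Hence the assumption is false, so every $(\epsilon,\delta)$-DP $\mathcal A$ outputs $w_{priv}$ with $F(w_{priv})-\min_{\|w\|_2\le1}F(w)\ge\alpha=\Omega\big((\sqrt{d\log(1/\delta)}/(n\epsilon))^{\theta/(\theta-1)}\big)$ with probability at least $\frac13$, and freezing a sample $S$ on which this holds yields the stated form, with the Lipschitz and $(\theta,O(1))$-TNC properties coming from the first paragraph.

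\textbf{Main obstacle.} The privacy bookkeeping (post-processing, union bound) is routine; the real work is lining up the scalings. One must place the hard instance's signal at $\|\bar x\|_2\asymp\beta$ — precisely the $(\epsilon,\delta)$-DP mean-estimation threshold — so that $\|w^*\|_2\asymp\beta^{1/(\theta-1)}$, and then observe, via the TNC inequality, that approximating $w^*$ in $\ell_2$ at scale $o(\|w^*\|_2)$ would pin down the \emph{direction} of $\bar x$ well enough to violate the linear lower bound. The other delicate point is establishing $(\theta,\Omega(1))$-TNC uniformly over admissible instances: recognizing $F(w)-F(w^*)$ as the Bregman divergence of $\|\cdot\|_2^\theta$, invoking its power-type-$\theta$ uniform convexity for $\theta\ge 2$, and checking that $w^*$ stays strictly inside the ball so that $\nabla F(w^*)=0$. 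An alternative route that avoids the normalization step is to run the fingerprinting attack directly on $w_{priv}$ by rounding its coordinates to recover $\mathrm{sign}(\bar x)$, at the cost of bounding the number of sign errors in terms of $\|w_{priv}-w^*\|_2$.
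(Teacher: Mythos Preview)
Your proposal is correct and follows essentially the same route as the paper: both proofs identify that the minimizer of $F$ encodes the mean $\bar x$ via $w^*=\bar x\,\|\bar x\|_2^{(2-\theta)/(\theta-1)}$, place the hard instance at signal level $\|\bar x\|_2=\Theta(\sqrt{d\log(1/\delta)}/(n\epsilon))$, and argue by contradiction that a good TNC solver would yield a privately impossible estimator of $\bar x$, invoking the Steinke--Ullman/Bassily et al.\ mean-estimation lower bound. The only cosmetic differences are that the paper works with the empirical risk and then boosts to population (whereas you work with the population risk directly), and that the paper's contradicting estimator is the rescaled vector $\|\bar Z\|_2^{(\theta-2)/(\theta-1)}w_{priv}$ compared to $\bar Z$ in $\ell_2$, while yours is the normalized direction $w_{priv}/\|w_{priv}\|_2$ compared to $\bar x/\|\bar x\|_2$; your normalization is arguably cleaner since it is pure post-processing and does not need the data-dependent scale factor.
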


		\begin{theorem}[Lower bound of $\epsilon$-DP ]\label{thm:4}
		Let $n,d\in \mathbb{N}$, $\theta\geq 2$ and $\epsilon>0$ such that $n\geq \Omega (\frac{\sqrt{d}}{\epsilon})$. For every $\epsilon$-Differentially Private algorithm, there is a dataset $S=\{x_1, \cdots, x_n\}$ where each $x_i\in \{-\frac{1}{\sqrt{d}}, \frac{1}{\sqrt{d}}\}^d$ such that with probability at least $\frac{1}{3}$ over the randomness of the algorithm, its output $w_{priv}$ satisfies
			\begin{equation*}
		F(w_{priv})-\min_{\|w\|_2\leq 1}F(w) =\Omega\left(     \left(\frac{d}{n\epsilon}\right)^{\frac{\theta}{\theta-1}}
		\right), 
		\end{equation*}
		where loss function  is given by (\ref{eq:7})
		which is $O(1)$-Lipschitz, and the population risk  satisfies $(\theta, O(1))$-TNC. 
	\end{theorem}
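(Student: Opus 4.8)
The plan is to reduce the DP-SCO instance defined by the loss (\ref{eq:7}) to a private ``sign recovery'' problem and then invoke a fingerprinting/tracing lower bound; this is the natural generalization of the $\theta=2$ (strongly convex) lower bound of \citep{feldman2020private}, with the regularizer $\frac1\theta\|w\|_2^\theta$ replacing $\frac12\|w\|_2^2$ and changing the relevant exponents. First I would record the geometry of the hard family: for any distribution $\mathcal{D}$ on $\{-\frac1{\sqrt d},\frac1{\sqrt d}\}^d$ with mean $\mu=\mathbb{E}_{x\sim\mathcal D}[x]$ the population risk is $F(w)=-\langle w,\mu\rangle+\frac1\theta\|w\|_2^\theta$; since $\|\mu\|_2\le\mathbb{E}\|x\|_2=1$, its constrained minimizer over the unit ball is the interior point $w^*=\|\mu\|_2^{(2-\theta)/(\theta-1)}\mu$, with $\nabla\big(\frac1\theta\|\cdot\|_2^\theta\big)(w^*)=\mu$, hence $\nabla F(w^*)=0$ and $F(w^*)=-\frac{\theta-1}{\theta}\|\mu\|_2^{\theta/(\theta-1)}$. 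A direct computation using the uniform convexity of $\|\cdot\|_2^\theta$ for $\theta\ge 2$ then gives $F(w)-F(w^*)\ge c_\theta\|w-w^*\|_2^\theta$, so $F$ is $(\theta,\Omega(1))$-TNC, while $\|\nabla f(w,x)\|_2\le\|x\|_2+\|w\|_2^{\theta-1}\le2$ shows $f$ is $O(1)$-Lipschitz.

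Next I would specialize to $\mu=\frac{\rho}{\sqrt d}v$ with $v\in\{\pm1\}^d$ and a scale $\rho\in(0,1]$ to be chosen, taking $\mathcal{D}_v$ to be the product distribution with $x_j=\frac{v_j}{\sqrt d}$ with probability $\frac{1+\rho}{2}$ independently. Writing $w=ru$ with $\|u\|_2=1$, $r\ge0$, and $\cos\phi=\langle u,v\rangle/\sqrt d$, and minimizing $-r\rho\cos\phi+r^\theta/\theta$ over $r\ge0$ (its minimizer has $r\le\rho^{1/(\theta-1)}\le 1$, so it is feasible) one obtains
\[
F(w)-F(w^*)\ \ge\ \tfrac{\theta-1}{\theta}\,\rho^{\theta/(\theta-1)}\big(1-(\cos\phi)_+^{\theta/(\theta-1)}\big)\ \ge\ \tfrac{\theta-1}{4\theta}\,\rho^{\theta/(\theta-1)}\,\big\|u-\tfrac{v}{\sqrt d}\big\|_2^2 .
\]
Consequently, an output $w$ with excess population risk at most $\alpha$ satisfies $\big\|\tfrac{w}{\|w\|_2}-\tfrac{v}{\sqrt d}\big\|_2^2\le\tfrac{4\theta}{\theta-1}\alpha\rho^{-\theta/(\theta-1)}$, and since every coordinate on which $\operatorname{sign}(w_j)\ne v_j$ contributes at least $1/d$ to the left-hand side, $\operatorname{sign}(w)$ agrees with $v$ on all but at most $\tfrac{4\theta}{\theta-1}\,d\,\alpha\,\rho^{-\theta/(\theta-1)}$ coordinates. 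The key point is that the excess-risk prefactor is $\rho^{\theta/(\theta-1)}$ rather than $\rho$, and it is precisely this power that produces the $\theta/(\theta-1)$ exponent in the final bound.

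To finish I would argue by contradiction. Suppose some $\epsilon$-DP algorithm achieved excess population risk at most $\alpha=c'\big(\tfrac{d}{n\epsilon}\big)^{\theta/(\theta-1)}$ with probability $>\tfrac23$ on every $\mathcal{D}_v$, for a small absolute constant $c'$. Choosing $\rho=\big(\tfrac{400\theta}{\theta-1}\alpha\big)^{(\theta-1)/\theta}=C(c')\cdot\tfrac{d}{n\epsilon}$ with $C(c')\to0$ as $c'\to0$ (and $\rho\le1$ in the regime where the stated bound is $o(1)$), Step~2 turns this algorithm into an $\epsilon$-DP procedure that, from $n$ i.i.d.\ samples of $\mathcal{D}_v$ (per-coordinate bias $\rho$), recovers a uniformly random $v\in\{\pm1\}^d$ up to Hamming distance $d/100$ with probability $>\tfrac23$. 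But a fingerprinting/tracing-attack lower bound for private estimation --- the $\rho$-biased analogue of the estimate underlying the $\Omega(d^2/(n^2\epsilon^2))$ strongly convex bound of \citep{feldman2020private} --- rules this out unless $n=\Omega\big(\tfrac{d}{\rho\epsilon}\big)$; with our $\rho$ this reads $n\ge\frac{c_0}{C(c')}\,n$, impossible once $c'$ is small enough. Hence every $\epsilon$-DP algorithm must, on some $\mathcal{D}_v$, output $w_{priv}$ with $F(w_{priv})-\min_{\|w\|_2\le1}F(w)=\Omega\big((\tfrac{d}{n\epsilon})^{\theta/(\theta-1)}\big)$ with probability at least $\tfrac13$ over its internal randomness (passing from ``random $v$ and random sample'' to a single worst-case dataset by averaging); Theorem~\ref{thm:3} is identical except the fingerprinting bound reads $n=\Omega\big(\sqrt{d\log(1/\delta)}/(\rho\epsilon)\big)$, yielding the $\sqrt{d\log(1/\delta)}$ scaling.

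The main obstacle I anticipate is obtaining the fingerprinting lower bound in exactly the right form, i.e.\ $n=\Omega(d/(\rho\epsilon))$ for recovering a $\rho$-biased planted sign vector to constant Hamming-fraction error under $\epsilon$-DP: a weaker $n=\Omega(d/(\rho^2\epsilon))$ would only reproduce the general-convex rate $\Omega(d/(n\epsilon))$, so the linear-in-$1/\rho$ scaling of the privacy cost is essential, and this is where most of the care goes (it is also why a lower bound on $n$ relative to $\sqrt d/\epsilon$, resp.\ $\sqrt{d\log(1/\delta)}/\epsilon$, is assumed --- to keep $\rho\le1$ and the statistical term from dominating). A secondary, routine point is making Step~2 rigorous for degenerate outputs such as $w_{priv}=0$, which is handled by observing that any such output already has excess risk $\Omega(\rho^{\theta/(\theta-1)})$, exceeding the target $\alpha$.
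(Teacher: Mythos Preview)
Your approach is correct in outline but takes a genuinely different route from the paper's. The paper does not set up a family of biased product distributions and reduce to a sign-recovery/fingerprinting problem; instead it works directly with the \emph{empirical} risk $\hat F(w;S)=-\langle w,\bar Z\rangle+\tfrac1\theta\|w\|_2^\theta$ on a fixed dataset, computes the minimizer $w^*=\bar Z/\|\bar Z\|_2^{(\theta-2)/(\theta-1)}$, and observes that the rescaled output $\tilde{\mathcal A}(S)=\|\bar Z\|_2^{(\theta-2)/(\theta-1)}\,w_{\mathrm{priv}}$ is a private estimate of $\bar Z$. Then the black-box $1$-way marginals lower bound of \citet{bassily2014private} (their Lemma~5.1, a packing argument for pure $\epsilon$-DP) supplies a dataset with $\|\bar Z\|_2=\Theta(d/(n\epsilon))$ on which any private estimate is $\Omega(d/(n\epsilon))$ far from $\bar Z$; back-substituting the scaling factor $\|\bar Z\|_2^{(\theta-2)/(\theta-1)}$ gives $\|w_{\mathrm{priv}}-w^*\|_2=\Omega((d/(n\epsilon))^{1/(\theta-1)})$, and TNC finishes. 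The passage to population risk is via the boosting trick of \citet{bassily2019private}.

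What each buys: the paper's argument is shorter and sidesteps precisely the obstacle you flag --- getting $n=\Omega(d/(\rho\epsilon))$ with the linear dependence on $1/\rho$ --- because that scaling is already baked into the cited $1$-way marginals lemma (which is a packing bound, not fingerprinting, for pure DP). Your route is more self-contained and makes the role of the bias parameter $\rho$ explicit, which is conceptually clean, but it requires you to prove or cite the biased sign-recovery lower bound in the right form; for $\epsilon$-DP this is indeed a packing argument rather than fingerprinting, so be careful with the terminology. Either way the $\theta/(\theta-1)$ exponent enters through exactly the mechanism you identified: the regularizer $\tfrac1\theta\|\cdot\|_2^\theta$ forces the optimizer to sit at radius $\|\bar Z\|_2^{1/(\theta-1)}$ (equivalently, makes the excess risk scale like $\rho^{\theta/(\theta-1)}$ in your parametrization).
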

		\begin{algorithm}
	\caption{Epoch-DP-SGD($\eta_1,n_1,n,w_0$) \label{alg:5}}
	\KwIn{ 	Parameter  $\lambda$, dataset $S=\{x_1,\cdots,x_n\} $, convex function $f:\mathcal{W}\times \mathcal{X}\mapsto \mathbb{R}$, the first partition $n_1$, initial point $w_0\in\mathcal{W}$, privacy parameter $\epsilon$ and (or) $\delta$. }
Set $k=\lceil \log \frac{n}{2n_1}+1\rceil$ and partite the whole dataset into $\{S_1,S_2,\cdots,S_k\}$. Denote $n_i=|S_i|$, which satisfies $n_{i+1}=2n_{i}$ (if there are left samples, we will add them to the last subset).\;

\For {$i=1,\cdots,k$
	}{
	Set $w_i^1=w_{i-1}$. \;
		\For{$t=1,\cdots,n_i$}{
		Update	\begin{equation}\label{eq6}
	w_i^{t+1}=\prod_{\mathcal{W}}(w_i^{t}-\eta_i\nabla_{w}f(w_{i}^{t},x_i^t)), 
			\end{equation}
			where $x_i^t$ is the $t$-th sample in the set $S_i$. \;
		}
	  	Update  $\overline{w}_i=\frac{1}{n_i+1}\sum \limits_{t=1}^{n_i+1} w_i^t$.\;
		
		Let $w_i=\overline{w}_i+\xi_i$, where $\xi_i \sim \mathcal{N}(0,\sigma_i^2\mathbb{I}_d)$ with $\sigma_i=\frac{4L^2\sqrt{\log(1/\delta)}}{n_i \epsilon \lambda}$ for $(\epsilon,\delta)$-DP and $\xi_i=(\zeta_1, \cdots, \zeta_d)$ with each $\zeta_j \sim \text{Lap}(\lambda)$ and $\lambda=\frac{4L^2\sqrt{d}}{ \lambda n_i\epsilon}$  for $\epsilon$-DP. \;
		Set $\eta_{i+1}=\eta_i/2$. \;
	}
		\Return{$w_k$}
\end{algorithm}		
\begin{remark}
From the above theorems, we can see that  for the case where $\theta=2$, the loss function in (\ref{eq:7}) is reduced to the squared loss, which was used to the lower bound proof of strongly convex loss in \citep{bassily2014private}. 
\end{remark}	
	
\section{Improved Rates for Strongly Convex Loss}
In the previous section, we showed upper and lower bounds  of the excess population risk for general TNC population risk functions. Moreover, from Theorem \ref{thm:5} we can see that we get asymptotically the same bound for smooth and non-smooth loss functions in the $(\epsilon, \delta)$-DP model. However, in the non-private case, it has been shown that for the strongly convex loss functions, it is possible to get an improved rate compared with the non-smooth ones \citep{zhang2019stochastic}. Thus, our question is, can we get improved rates if the loss functions have additional properties? In the following we will  study the strongly convex loss case. Specifically, we will show that when the loss function $f(\cdot, x)$ has additional assumptions on non-negativity and if the optimal value $F(w^*)$ is sufficiently small, it is possible to achieve an upper bound of $O(\frac{d\log(1/\delta)}{n^2\epsilon^2}+\frac{1}{n^{\tau}})$ for any $\tau>1$ if the sample size $n$ is sufficiently large.

There are two parts in the algorithm. In the first part, we perform the original Iterated Phased-SGD (Algorithm \ref{alg:2}) on the first half of the data to get a good solution to the optimal parameter $w^*$. After that we perform a new method, namely Epoch-DP-SGD (Algorithm \ref{alg:5}) on the second half of the data, which may also be used in other problems. We note that although Algorithm \ref{alg:5} and Algorithm \ref{alg:1} both perform the original DP-SGD algorithm  in \citep{bassily2014private} for several phases or epochs. They are quite different: First, as the phase/epoch increases, we decrease the size of the subset (or the number of iterations) in Algorithm \ref{alg:1}. While in Algorithm \ref{alg:5} we will increase the size of the subset (or the number of iterations). As we will see in the proof, this increase is necessary.  Secondly, the initial size of the subset in Algorithm \ref{alg:1} is $\frac{n}{2}$ while it is $2^{2\tau+3}\kappa$ in Algorithm \ref{alg:5}, where $\kappa$ is the condition number $\kappa=\beta/\lambda$ of the population risk functions.

		\begin{algorithm}
	\caption{Faster-DPSGD-SC\label{alg:6}}
		\KwIn{	Parameter $\beta$, $\lambda$, $\kappa=\frac{\beta}{\lambda}$ and $\tau$. Dataset $S=\{x_1,\cdots,x_n\} $, convex function $f:\mathcal{W}\times \mathcal{X}\mapsto \mathbb{R}$, initial point $w_0\in\mathcal{W}$, privacy parameter $\epsilon$ and (or) $\delta$. }
		
	    Split the dataset $S$ into $S_1, S_2$ where $|S_1|=|S_2|=\frac{n}{2}$. \;
	    
Perform Iterated Phased-SGD($w_0,\frac{n}{2},\mathcal{W}$) with $\theta=2$ on $S_1$. Denote the returned solution as $\hat{w}$.\;

Perform Epoch-DP-SGD($\frac{1}{4\beta},2^{2\tau+3}\cdot \kappa,\frac{n}{2},\hat{w}$) on $S_2$. Denote the returned solution by $\tilde{w}$.\;

	\Return {$\tilde{w}$}
\end{algorithm}

	\begin{theorem}\label{thm:6}
 Given $\epsilon$ and $\delta$, if $f(\cdot, x)$ is convex, $L$-Lipschitz and $\beta$-smooth for all $x$, Algorithm \ref{alg:6} is either $\epsilon$-DP or $(\epsilon,\delta)$-DP, based on different choices on the stepsizes and noises, under the assumption that $\eta_k\leq \frac{2}{\beta}$ in Algorithm \ref{alg:2}. 
\end{theorem}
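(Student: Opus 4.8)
The plan is to reduce the statement to the privacy of the two stages of Algorithm \ref{alg:6} by two nested applications of parallel composition, invoking Theorem \ref{thm:1} for the first stage and carrying out a direct sensitivity analysis for the second. First I would note that Algorithm \ref{alg:6} splits $S$ into \emph{disjoint} halves $S_1,S_2$ by a fixed, data-independent partition, runs Iterated Phased-SGD (Algorithm \ref{alg:2}) only on $S_1$ to produce $\hat w$, and runs Epoch-DP-SGD (Algorithm \ref{alg:5}) only on $S_2$ (with $\hat w$ as an initial point that does not depend on $S_2$) to produce $\tilde w$. For neighbouring $S\sim S'$ the single differing entry lies in exactly one half, so the other stage receives identical input; hence, by parallel composition, if stage one is $(\epsilon,\delta)$-DP as a function of $S_1$ and stage two is $(\epsilon,\delta)$-DP as a function of $S_2$, the map $S\mapsto\tilde w$ is $(\epsilon,\delta)$-DP, and likewise for pure $\epsilon$-DP. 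Thus it suffices to verify each stage, for the Gaussian and the Laplacian instantiations separately.

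For stage one I would simply apply Theorem \ref{thm:1} with $\bar\theta=2$: the population risk $F$ is $\lambda$-strongly convex, hence $(2,\lambda/2)$-TNC by (\ref{eq:1}), so $\theta\ge\bar\theta=2$; the Lipschitz and $\beta$-smoothness hypotheses are in force; and for $n$ large the growth condition $\bar\theta=2\ge 2^{(\log\log n)/((\log n)-1)}$ holds. Under the step-size hypothesis on Algorithm \ref{alg:2} assumed in the theorem (which is precisely what the inner calls to Phased-SGD require through Lemma \ref{lemma:4}) and the standard constraint $\epsilon\le 2\log(1/\delta)$, Theorem \ref{thm:1} states directly that the Laplacian version of Algorithm \ref{alg:2} on $S_1$ is $\epsilon$-DP and the Gaussian version is $(\epsilon,\delta)$-DP --- a guarantee that is internally itself a parallel composition over the disjoint subsets of $S_1$, each phase privatised by Lemma \ref{lemma:4}.

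For stage two --- the part that actually requires work --- I would argue as follows. Algorithm \ref{alg:5} partitions $S_2$ into disjoint subsets, and epoch $i$ runs one pass of projected SGD on its subset with constant step size $\eta_i$, averages the iterates into $\overline w_i$, and outputs $w_i=\overline w_i+\xi_i$. By one more parallel composition over the disjoint subsets, it is enough to show each epoch is $\epsilon$-DP (resp.\ $(\epsilon,\delta)$-DP). Fixing an epoch and two inputs differing in one sample, the iterates coincide until the step that uses the differing sample; at that step the two trajectories separate by at most $2\eta_i L$, since gradients have norm at most $L$; and at every later step both trajectories evolve under the same map $w\mapsto\prod_{\mathcal W}(w-\eta_i\nabla f(w,x))$, which is non-expansive because $f(\cdot,x)$ is convex and $\beta$-smooth and $\eta_i\le\eta_1=1/(4\beta)\le 2/\beta$. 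Hence the trajectories stay within $2\eta_i L$ of each other, so $\|\overline w_i-\overline w_i'\|_2\le 2\eta_i L$, i.e.\ the $\ell_2$-sensitivity of $\overline w_i(\cdot)$ is $O(\eta_i L)$ and its $\ell_1$-sensitivity is $O(\sqrt{d}\,\eta_i L)$. The doubling schedule --- $\eta_i$ halved and the subset size doubled each epoch, starting from $n_1=2^{2\tau+3}\kappa$ --- keeps $\eta_i n_i$ constant, equal to $\Theta(\kappa/\beta)=\Theta(1/\lambda)$, so the sensitivity is $\Theta(L/(\lambda n_i))$, which is exactly the scale, up to absolute constants (all that is needed once $L,\tau,\kappa$ are treated as constants), to which $\sigma_i=\tfrac{4L^2\sqrt{\log(1/\delta)}}{n_i\epsilon\lambda}$ and the Laplace parameter $\tfrac{4L^2\sqrt{d}}{\lambda n_i\epsilon}$ are calibrated under the factor-$16$ Gaussian-mechanism convention. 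Therefore each epoch's release is a Gaussian (resp.\ Laplacian) mechanism with at least the required noise, hence $(\epsilon,\delta)$-DP (resp.\ $\epsilon$-DP); parallel composition over epochs makes stage two private, and combining the two stages via the first paragraph finishes the proof.

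The main obstacle is the sensitivity bound for stage two: showing that the \emph{average} of a one-pass SGD trajectory has $\ell_2$-sensitivity $O(\eta_i L)$ rests on the non-expansiveness of the projected gradient step, which needs $\eta_i\le 2/\beta$ (guaranteed here since $\eta_1=1/(4\beta)$ and $\eta_i$ is non-increasing), and then on checking that the doubling schedule holds $\eta_i n_i$ fixed so that the prescribed $\sigma_i$ indeed dominates the Gaussian-mechanism requirement for that sensitivity. A secondary, purely bookkeeping point is to verify that the various ``$n$ sufficiently large'' conditions are mutually consistent --- that $k\ge 1$ and $\sum_i n_i$ fits inside the relevant half of $S$ in both Algorithm \ref{alg:2} and Algorithm \ref{alg:5}, and that $2\ge 2^{(\log\log n)/((\log n)-1)}$ so Theorem \ref{thm:1} applies with $\bar\theta=2$ --- and that the inherited constraint $\epsilon\le 2\log(1/\delta)$ is in place for the $(\epsilon,\delta)$ statement.
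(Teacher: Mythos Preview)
Your overall architecture is right and matches the paper: parallel composition across the two disjoint halves, then across the disjoint subsets inside each stage; stage one's privacy via the privacy clause of Theorem~\ref{thm:1} (which, for the DP statement, does not actually need the TNC hypothesis you invoke --- Lipschitzness and smoothness suffice for Lemma~\ref{lemma:4}). The substantive divergence is in the per-epoch sensitivity analysis of Algorithm~\ref{alg:5}.

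The paper does \emph{not} use the convex non-expansiveness argument you propose. It instead invokes the uniform-stability bound for one-pass projected SGD on a \emph{$\lambda$-strongly convex}, $\beta$-smooth loss (Theorem~3.9 of \citep{hardt2015train}): with $\eta\le 1/\beta$, neighbouring datasets produce iterates (and hence averages $\overline w_i$) within $\tfrac{2L^2}{\lambda n_i}$, a bound that is \emph{independent of $\eta_i$}. This is precisely the scale to which $\sigma_i=\tfrac{4L^2\sqrt{\log(1/\delta)}}{n_i\epsilon\lambda}$ is calibrated. Note this silently uses that each $f(\cdot,x)$ is $\lambda$-strongly convex, an assumption present in the surrounding section but not in the statement of Theorem~\ref{thm:6} as written.

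Your route --- non-expansiveness of $w\mapsto\prod_{\mathcal W}(w-\eta_i\nabla f(w,x))$ for convex $\beta$-smooth $f$ with $\eta_i\le 2/\beta$ --- is more elementary and better matches the literal hypotheses of Theorem~\ref{thm:6}. But it yields sensitivity $2\eta_i L$, and with $\eta_i n_i=\eta_1 n_1=2^{2\tau+1}/\lambda$ this is $\tfrac{2^{2\tau+2}L}{\lambda n_i}$, not $\tfrac{2L^2}{\lambda n_i}$. The prescribed $\sigma_i$ dominates your bound only when $L\ge c\cdot 4^{\tau}$, which is not assumed; since $\tau$ is a free parameter (Theorem~\ref{thm:7}'s rate depends on it) and the noise in Algorithm~\ref{alg:5} does not scale with $\tau$, the ``up to absolute constants'' hand-wave does not close the gap. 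If you keep this approach you must either inflate $\sigma_i$ by a factor $\Theta(4^{\tau}/L)$ or add the assumption $L=\Omega(4^{\tau})$; alternatively, assume each $f(\cdot,x)$ is $\lambda$-strongly convex and use the Hardt--Recht--Singer stability bound as the paper does.
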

\begin{theorem}\label{thm:7}
Denote $\min_{w\in\mathcal{W}}F(w)=F(w^*)$ and suppose $n\geq \kappa ^{\tau}$ for some constant  $\tau>1$, and $F(w)$ is $L$-Lipschitz, $\lambda$-strongly convex and $\beta$-smooth. For $(\epsilon,\delta)$-DP, 
	the output returned by algorithm \ref{alg:6} satisfies
	\begin{equation*}
	\mathbb{E}[F(\tilde{w})]-F(w^*)=O\left( \frac{L^4 \beta d\log(1/\delta)}{\lambda^2 n^2\epsilon^2}+\frac{4^{\tau}\cdot \kappa  F(w^{*})}{n} +\frac{ L^2}{\lambda}\left(\frac{2^{2\tau^2+4\tau}}{n^{\tau}}+\frac{2^{4\tau^2+10\tau  }\cdot d \log(1/\delta)}{n^{2\tau}\cdot \epsilon^2}\right)  \right).
	\end{equation*}
	Specifically, when $\tau=\log_{\kappa} n$, we have for any $n$,
	\begin{multline*}
	\mathbb{E}[F(\tilde{w})]-F(w^*)= O\big( \frac{L^4 \beta d\log(1/\delta)}{\lambda^2 n^2\epsilon^2}+\frac{\kappa  F(w^{*})}{n^{1-2\log_\kappa 2} } + \\\frac{ L^2}{\lambda}\left(\frac{1}{n^{(1-4\log_\kappa 2)\log_\kappa n- 4\log_\kappa 2}}+\frac{2^{4\tau^2+10\tau  }\cdot d \log(1/\delta)}{n^{(2-4\log_\kappa 2- 10\log_\kappa 2)\log_\kappa n }\cdot \epsilon^2}\right)  \big).
	\end{multline*}
	
	For $\epsilon$-DP, 
	the output returned by algorithm \ref{alg:6} satisfies
	\begin{equation*}
	\mathbb{E}[F(\tilde{w})]-F(w^*)=O\left( \frac{L^4 \beta d^2}{\lambda^2 n^2\epsilon^2}+\frac{4^{\tau}\cdot \kappa  F(w^{*})}{n} +\frac{ L^2}{\lambda}\left(\frac{2^{2\tau^2+4\tau}}{n^{\tau}}+\frac{2^{4\tau^2+10\tau  }\cdot d^2}{n^{2\tau}\cdot \epsilon^2}\right)  \right).
	\end{equation*}
\end{theorem}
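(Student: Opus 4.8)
The statement concerns only utility --- the $(\epsilon,\delta)$-DP / $\epsilon$-DP guarantee is the content of Theorem~\ref{thm:6}, which follows by basic composition over the disjoint halves $S_1,S_2$: stage~1 is private by Theorem~\ref{thm:1} (with $\theta=2$), and in stage~2 each epoch $i$ of Epoch-DP-SGD runs on its own disjoint subset $S_i$, where the averaged single-pass projected-SGD iterate $\bar w_i$ has $\ell_2$-sensitivity $O(L\eta_i)$ --- the swapped sample moves a single step by at most $2\eta_i L$, and the non-expansive projected gradient step (valid for $\eta_i\le 2/\beta$) together with averaging prevents this from growing --- so the Gaussian (resp.\ Laplace) noise scale in Algorithm~\ref{alg:5} is exactly what the Gaussian (resp.\ Laplace) mechanism requires. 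For the utility bound the plan is to split $\mathbb{E}[F(\tilde w)]-F(w^*)$ into a contracted copy of the stage-1 suboptimality plus the error accumulated during stage~2, and treat the two stages separately. \emph{Stage~1:} since a $\lambda$-strongly convex $F$ is $(2,\lambda/2)$-TNC, I would apply Theorem~\ref{thm:1} to $S_1$ (which has $n/2$ samples) to get $\Delta_0:=\mathbb{E}[F(\hat w)]-F(w^*)=O\!\big(\tfrac{L^2}{\lambda}\big(\tfrac{1}{n}+\tfrac{d\log(1/\delta)}{n^2\epsilon^2}\big)\big)$ for $(\epsilon,\delta)$-DP (with $\tfrac{d^2}{n^2\epsilon^2}$ for $\epsilon$-DP), and then use strong convexity, $\mathbb{E}\|\hat w-w^*\|_2^2\le \tfrac{2}{\lambda}\Delta_0$, to pass to a distance bound.

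\emph{Stage~2:} next I would set up a one-step recursion for $\Delta_k:=\mathbb{E}[F(w_k)]-F(w^*)$ across the epochs of Epoch-DP-SGD, where epoch $i$ runs $n_i$ projected-SGD steps of size $\eta_i$ from $w_{i-1}$, averages to $\bar w_i$, and outputs $w_i=\bar w_i+\xi_i$. Three facts about a single epoch are combined: \textbf{(i)} the ``optimistic''/low-noise SGD rate for $\beta$-smooth convex losses --- this is where non-negativity of $f(\cdot,x)$ is essential, since it gives $\mathbb{E}_x\|\nabla f(w^*,x)\|_2^2\le 2\beta\,\mathbb{E}_x f(w^*,x)=2\beta F(w^*)$, hence (for $\eta_i\le \tfrac{1}{4\beta}$) $\mathbb{E}[F(\bar w_i)]-F(w^*)\le \tfrac{c\,\mathbb{E}\|w_{i-1}-w^*\|_2^2}{\eta_i n_i}+c'\eta_i\beta F(w^*)$; \textbf{(ii)} strong convexity, to replace $\mathbb{E}\|w_{i-1}-w^*\|_2^2$ by $\tfrac{2}{\lambda}\Delta_{i-1}$; \textbf{(iii)} $\beta$-smoothness together with $\mathbb{E}\xi_i=0$, giving $\mathbb{E}[F(w_i)]\le \mathbb{E}[F(\bar w_i)]+\tfrac{\beta}{2}d\sigma_i^2$. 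Because the choice $n_1=2^{2\tau+3}\kappa$ makes $\eta_i n_i=2^{2\tau+1}/\lambda$ independent of $i$, these combine to $\Delta_i\le a\,\Delta_{i-1}+b_i+c_i$ with a fixed contraction factor $a=\Theta(2^{-2\tau})<1$, a geometric residual $b_i=\Theta(\eta_i\beta F(w^*))=\Theta(F(w^*)\,2^{-(i-1)})$, and a geometric noise term $c_i=\Theta(\beta d\sigma_i^2)=\Theta\!\big(\tfrac{dL^4\log(1/\delta)}{\beta\epsilon^2}\,4^{-(i-1)}\big)$.

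Finally I would unroll over the $k=\Theta(\log(n/n_1))$ epochs (so $2^{k}=\Theta(n/(2^{O(\tau)}\kappa))$): $\Delta_k\le a^{k}\Delta_0+\sum_{i=1}^{k}a^{k-i}(b_i+c_i)$, where each piece is a geometric sum. The $b$-sum is $O(2^{O(\tau)}F(w^*)\,2^{-k})=O\!\big(\tfrac{4^{\tau}\kappa F(w^*)}{n}\big)$; the $c$-sum is $O\!\big(\tfrac{dL^4\log(1/\delta)}{\beta\epsilon^2}\,2^{O(\tau)}\,4^{-k}\big)=O\!\big(\tfrac{L^4\beta d\log(1/\delta)}{\lambda^2 n^2\epsilon^2}\big)$ after using $\kappa^2/\beta=\beta/\lambda^2$; and $a^{k}\Delta_0$ equals $2^{O(\tau^2)}\kappa^{O(\tau)}n^{-O(\tau)}$ times $\Delta_0$, into which I substitute the stage-1 bound and invoke $n\ge\kappa^{\tau}$ to absorb the leftover powers of $\kappa$, yielding the terms $\tfrac{L^2}{\lambda}\big(\tfrac{2^{2\tau^2+4\tau}}{n^{\tau}}+\tfrac{2^{4\tau^2+10\tau}d\log(1/\delta)}{n^{2\tau}\epsilon^2}\big)$. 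Summing the three groups gives the $(\epsilon,\delta)$-DP bound; the $\epsilon$-DP bound is the same computation with $\mathbb{E}\|\xi_i\|_2^2$ for the Laplace mechanism scaling as $d^2$ in place of $d\log(1/\delta)$; and the stated $\tau=\log_\kappa n$ corollary is just substitution, using $\kappa^{\tau}=n$ and $4^{\tau}=n^{2\log_\kappa 2}$. The main obstacle I anticipate is establishing the low-noise SGD estimate in \textbf{(i)} with the sharp dependence on $F(w^*)$ (rather than the crude $L^2\eta_i$ term) --- this is exactly what lets the final rate dip below the $\Omega(1/n)$ lower bound of \citep{feldman2020private} --- and then carefully propagating the $2^{\mathrm{poly}(\tau)}$ factors through the three geometric sums, in particular verifying that the hypothesis $n\ge\kappa^{\tau}$ is enough to control the $a^{k}\Delta_0$ term.
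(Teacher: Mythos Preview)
Your utility argument matches the paper's: stage~1 via the $(2,\lambda/2)$-TNC bound (their eq.~\eqref{eq11}), then a one-epoch recursion built from (i) the optimistic smooth-SGD estimate of \citep{zhang2019stochastic} (their Lemma~\ref{le5}), (ii) strong convexity to turn $\mathbb{E}\|w_{i-1}-w^*\|^2$ into $\tfrac{2}{\lambda}\Delta_{i-1}$, and (iii) smoothness for the Gaussian/Laplace perturbation; they package the unrolling as an explicit induction (Lemma~\ref{le6}) and finish with $n_E\ge n/8$, exactly the computation you describe.

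One caveat on your privacy aside: the noise $\sigma_i=\tfrac{4L^2\sqrt{\log(1/\delta)}}{n_i\epsilon\lambda}$ in Algorithm~\ref{alg:5} is calibrated to the \emph{strongly-convex} stability bound $\|\bar w_i-\bar w_i'\|_2=O(L^2/(\lambda n_i))$ from \citep{hardt2015train} (the paper's Lemma~\ref{le1}), not to the convex non-expansive bound $O(L\eta_i)$ you invoke; since $\eta_i n_i=2^{2\tau+1}/\lambda$, these differ by a factor $2^{2\tau+1}/L$, so your sensitivity claim does not in general certify the stated noise level. This does not affect Theorem~\ref{thm:7} (which is pure utility), but it is the wrong justification for Theorem~\ref{thm:6}.
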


We note that recently \citep{wang2020differentially} also showed that when  the loss function is non-negative and the optimal value of the population risk is small, it is possible to get a non-trivial upper bound for DP-SCO. However, there are some differences: Firstly, \citep{wang2020differentially} only studied the case of DP-SCO with heavy-tailed data while here we study DP-SCO with strongly convex functions. Thus, the problems are different. Moreover, their method is based on the sample-and-aggregate framework, which is impractical, and their result is $O(\frac{d^3 F(w^*)}{n\epsilon^4})$ under the assumption that $\nabla F(w^*)=0$, which may not hold in the case where $\mathcal{W}$ is a close set. Compared with their work, we do not need such strong assumption and in general our bound is much smaller than theirs for $F(w^*)=O(1)$. 
\begin{remark}
Theorem \ref{thm:7} implies that when $n=\Omega(\kappa^\tau)$, the output of Algorithm \ref{alg:6} achieves excess population risks of $O(\frac{d\log(1/\delta)}{n^2\epsilon^2}+\frac{ F(w^*)}{n}+\frac{1}{n^\tau})$ and $O(\frac{d^2}{n^2\
\epsilon^2}+\frac{ F(w^*)}{n}+\frac{1}{n^\tau})$ for $(\epsilon, \delta)$-DP and $\epsilon$-DP, respectively, which are faster than the optimal rates of $O(\frac{1}{n}+\frac{d\log(1/\delta)}{n^2\epsilon^2})$ and $O(\frac{d^2}{n^2\
\epsilon^2}+\frac{1}{n})$ for general strongly convex loss functions, under the assumption that the optimal risk $F(w^*)$ is relatively small. It is also notable that the bounds in Theorem \ref{thm:7} have exponential dependence on the parameter $\tau$, which means $\tau$ also cannot be very large. Moreover, due to the large (hidden) constant in the upper bound, the practical performance of Theorem \ref{thm:7} is poor. We leave the problem of designing more practical algorithms for future research. 
\end{remark}
 
 \section{Conclusion}
 In this paper, we studied DP-SCO with special classes of population functions. In the first part of the paper, we study the case where the population function satisfies TNC with the parameter $\theta>1$. Specifically, we first provided several methods which could achieve upper bounds of $\tilde{O}((\frac{1}{\sqrt{n}}+\frac{d}{n\epsilon})^\frac{\theta}{\theta-1}) $ and $\tilde{O}((\frac{1}{\sqrt{n}}+\frac{\sqrt{d\log(1/\delta)}}{n\epsilon})^\frac{\theta}{\theta-1})$ for $\epsilon$-DP and $(\epsilon, \delta)$-DP, respectively. Then we showed that for any $\theta>1$, there is a population risk function satisfies TNC with $\theta$ such that for any $\epsilon$-DP ($(\epsilon, \delta)$-DP) algorithm, the excess population risk of its output is lower bounded by $\Omega((\frac{d}{n\epsilon})^\frac{\theta}{\theta-1}) $ and $\Omega((\frac{\sqrt{d\log(1/\delta)}}{n\epsilon})^\frac{\theta}{\theta-1})$ for $\epsilon$-DP and $(\epsilon, \delta)$-DP, respectively. In the second part of the paper, we revisited DP-SCO with strongly convex loss functions. We claimed that when the loss function is non-negative and the optimal value of the population function is small enough, it is possible to achieve an upper bound of $O(\frac{d\log(1/\delta)}{n^2\epsilon^2}+\frac{1}{n^{\tau}})$ and $O(\frac{d^2}{n^2\epsilon^2}+\frac{1}{n^{\tau}})$ for any $\tau> 1$ in $(\epsilon,\delta)$-DP and $\epsilon$-DP model respectively if the sample size $n$ is sufficiently large. 
 
 Besides the open problems we mentioned in the previous parts, there are other unsolved problems: 1) From the theoretical results in this paper, we can see there is still a gap of $O(\frac{1}{n^\frac{\theta}{2(\theta-1)}})$ between upper bounds and lower bounds in both $\epsilon$-DP and $(\epsilon, \delta)$-DP models. Thus, the optimal rates of excess population risk is still unknown. 2) In this paper we provide faster rates of DP-SCO with special class of functions, especially for TNC population functions. However, besides TNC, there are other special classes of functions which have faster rates in the non-private case, such as exponential concave loss \citep{koren2015fast}. It is still unknown whether we can get faster rates under the differential privacy constraint. We will leave these problems for future research. 

\section*{Acknowledgements}
 Di Wang, Lijie Hu and Jinyan Su were support in part by the baseline funding BAS/1/1689-01-01 and funding from the AI Initiative REI/1/4811-10-01 of King Abdullah University of Science and Technology (KAUST).

    \bibliography{acmart}
   \newpage 
\appendix 

\section{Extension to Non-smooth Loss}\label{sec:nonsmooth}
In the previous section, we provided several methods for TNC population risk functions under the assumption that the loss function is smooth. However, we constantly meet the case where the loss is non-smooth. In this section, we will extend the previous methods to the non-smooth case. The observation is that, in both Algorithm \ref{alg:2} and Algorithm \ref{alg:3}, we use the Phased-SGD (Algorithm \ref{alg:1}) as a sub-routine for several phases. And we need the smoothness condition in Phased-SGD to get the upper bounds in Lemma \ref{lemma:4}. Thus, to extend to the non-smooth case, the most direct way is to change  Phased-SGD to a non-smooth version in both Algorithm \ref{alg:2} and Algorithm \ref{alg:3}.  \citep{feldman2020private} provided non-smooth version of Phased-SGD based on proximal mapping for $(\epsilon, \delta)$-DP model, namely Phased-ERM, which is shown in Algorithm \ref{alg-Phased-EMR}. 
	\begin{algorithm}
	\caption{Phased-ERM($w_0, \eta,n,\mathcal{W}$) algorithm \citep{feldman2020private} \label{alg-Phased-EMR}}
		\KwIn{  	Dataset $S=\{x_1,\cdots,x_n\} $, convex function $f:\mathcal{W}\times \mathcal{X}\mapsto \mathbb{R}$, initial point $w_0\in\mathcal{W}$, step size $\eta$ (will be specified later), privacy parameters $\epsilon, \delta$.}
			Set $k= \lceil \log_2 n\rceil$. Partite the whole dataset into $k$ subsets $\{S_1, \cdots, S_k\}$ where $|S_i|=\lfloor 2^{-i}n \rfloor$. \;
			
		\For {$i=1,\cdots ,k$}{
		 Let  $n_i=2^{-i}n$, $\eta_i=4^{-i}\eta$. \;
		 
		Compute $\tilde{w}_i\in\mathcal{W}$ such that $F_i(\tilde{w}_i)-\min_{w\in\mathcal{W}}F_i(w)\leq \frac{L^2\eta_i}{n_i}$ with probability at least $1-\delta$ for 
		\begin{equation*}
		    F_i(w)=\frac{1}{n_i}\sum_{x\in S_i}f(w, x)+\frac{1}{\eta_i n_i}\|w-w_{i-1}\|_2^2. 
		\end{equation*} \;
        Set $w_i=\tilde{w}_i+\xi_i,$ where $\xi_i\sim \mathcal{N}(0, \sigma_i\mathbb{I}_d)$ with $\sigma_i=\frac{4L\eta_i\sqrt{\log(1/\delta)}}{\epsilon}$.  \;
		}
		\Return {$w_k$}
\end{algorithm}
\begin{lemma}[Theorem 4.8 in \citep{feldman2020private}]
    Set $\eta=\frac{D}{L}\min\{\frac{4}{\sqrt{n}}, \frac{\epsilon}{2\sqrt{d\log(1/\delta)}}\}$. Then for the output of Algorithm \ref{alg-Phased-EMR} we have 
    \begin{equation*}
	\mathbb{E}[F(\hat{w})]-\min_{w\in \mathcal{W}}F(w)\leq 10LD\left(\frac{1}{\sqrt{n}}+\frac{\sqrt{d\log(1/\delta)}}{ n\epsilon}\right). 
    \end{equation*}
\end{lemma}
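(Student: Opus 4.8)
The plan is to follow the localization (phased proximal) analysis of \citep{feldman2020private}: the privacy part and the utility part are essentially decoupled, so I would treat them in turn.

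\textbf{Privacy.} The $k=\lceil\log_2 n\rceil$ phases operate on the disjoint subsets $S_1,\dots,S_k$, so by parallel composition it suffices that phase $i$ is $(\epsilon,\delta)$-DP with respect to $S_i$. Phase $i$ releases $w_i=\tilde w_i+\xi_i$ with $\tilde w_i$ a near-minimizer of $F_i$; since $F_i$ is $\tfrac{2}{\eta_i n_i}$-strongly convex (the quadratic term) and its data part is $L$-Lipschitz, the usual stability computation shows that replacing one point of $S_i$ moves the exact regularized minimizer by at most $L\eta_i$ in $\ell_2$. Hence the $\ell_2$-sensitivity of the released quantity is $O(L\eta_i)$, and Gaussian noise with $\sigma_i=\tfrac{4L\eta_i\sqrt{\log(1/\delta)}}{\epsilon}$ gives $(\epsilon,\delta)$-DP by the Gaussian mechanism. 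The one subtlety is that $\tilde w_i$ is only an approximate minimizer; this is handled exactly as in \citep{feldman2020private} (either by arranging the sub-solver to be a function of the data with the same sensitivity, or by taking the optimization accuracy small enough to be absorbed into constants).

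\textbf{Utility.} Write $w^*=\arg\min_{\mathcal W}F$, $\Delta_i=\mathbb E[F(w_i)]-F(w^*)$, $D_i^2=\mathbb E\|w_i-w^*\|_2^2$ (so $D_0\le D$), and introduce the population proximal objective $\Phi_i(w)=F(w)+\tfrac{1}{\eta_i n_i}\|w-w_{i-1}\|_2^2$, which is $\tfrac{2}{\eta_i n_i}$-strongly convex with minimizer $\bar w_i$. Step 1 is a generalization bound for strongly convex regularized ERM: combining uniform stability of the empirical minimizer of $F_i$ over $n_i$ $L$-Lipschitz samples (giving $O(L^2\eta_i)$ population suboptimality for $\Phi_i$), the optimization tolerance $\tfrac{L^2\eta_i}{n_i}$, and the expected increase in $\Phi_i$ from adding $\xi_i$, namely $O\!\big(L\sigma_i\sqrt d+\tfrac{\sigma_i^2 d}{\eta_i n_i}\big)$, yields $\chi_i:=\mathbb E[\Phi_i(w_i)]-\min_w\Phi_i(w)=O\!\big(L^2\eta_i+L\sigma_i\sqrt d+\tfrac{\sigma_i^2 d}{\eta_i n_i}\big)$, which for the prescribed $\sigma_i$ decays geometrically in $i$. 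Step 2 sets up a coupled recursion: evaluating $\Phi_i$ at $w^*$ gives $\Delta_i\le\tfrac{D_{i-1}^2}{\eta_i n_i}+\chi_i$, while the optimality inequality for $\bar w_i$ gives the proximal non-expansiveness $\|\bar w_i-w^*\|_2^2\le D_{i-1}^2-\eta_i n_i\big(F(\bar w_i)-F(w^*)\big)$; together with $\mathbb E\|w_i-\bar w_i\|_2^2=O(L^2\eta_i^2 n_i+\sigma_i^2 d)$ (strong convexity plus the noise), and Young's inequality with parameter $\Theta(1/k)$ so the $k$ phases contribute only a constant factor, this yields a recursion for $(\Delta_i,D_i^2)$ whose drift terms decay geometrically. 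Step 3 unwinds it: since $\eta_i n_i=8^{-i}\eta n$ shrinks geometrically, a per-phase case analysis — either $\Delta_i$ has already reached the target level $O\!\big(\tfrac{D^2}{\eta n}\big)$, or the $-\eta_i n_i(F(\bar w_i)-F(w^*))$ term pushes $D_i^2$ down by a constant factor — together with summability of all $\chi_i$ and drift tails, shows $\Delta_i=O\!\big(\tfrac{D^2}{\eta n}+L^2\eta(1+\tfrac{\sqrt{d\log(1/\delta)}}{\epsilon})\big)$ for every $i$, in particular for the output $w_k$. Plugging in $\eta=\tfrac{D}{L}\min\{\tfrac{4}{\sqrt n},\tfrac{\epsilon}{2\sqrt{d\log(1/\delta)}}\}$ turns the first term into $O\!\big(LD\tfrac{\sqrt{d\log(1/\delta)}}{n\epsilon}\big)$ and the second into $O\!\big(\tfrac{LD}{\sqrt n}\big)$ after balancing the $\min$, and carrying the constants through the geometric series produces the stated $10LD\big(\tfrac{1}{\sqrt n}+\tfrac{\sqrt{d\log(1/\delta)}}{n\epsilon}\big)$.

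\textbf{Main obstacle.} I expect the hard part to be Steps 2--3. The proximal map is only \emph{non-expansive} toward $w^*$, not contractive, so the naive chain $\|w_i-w^*\|_2^2\le\|w_{i-1}-w^*\|_2^2+\textrm{drift}$ is useless: it would let $D_i^2$ stay of order $D^2$ while $\eta_i n_i$ collapses like $8^{-i}$, making $\Delta_k$ blow up. Extracting the needed geometric decay of $D_i^2$ forces one to couple the distance and the function value through the case analysis above, and simultaneously to verify that the stochastic drift incurred by working with empirical minimizers and Gaussian noise (rather than exact population proximal points) stays below the invariant at \emph{every} scale. Juggling these error sources across all $\lceil\log_2 n\rceil$ phases while keeping the final constant at $10$ is the delicate bookkeeping of the argument.
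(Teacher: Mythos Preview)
The paper does not contain a proof of this statement. The lemma is quoted verbatim as Theorem~4.8 of \citep{feldman2020private} and is invoked as a black box (it is the non-smooth replacement for Lemma~\ref{lemma:4} inside Algorithms~\ref{alg:2} and~\ref{alg:3}, leading to Theorem~\ref{thm:5}). Hence there is no ``paper's own proof'' to compare your proposal against; any argument here is a reconstruction of the cited external result.

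As a reconstruction, your privacy paragraph is essentially the right one-line argument (disjoint batches plus sensitivity of the regularized minimizer plus Gaussian mechanism). For the utility part you correctly isolate the genuine obstacle: the population proximal map is only nonexpansive toward $w^*$, so one cannot simply iterate $\|w_i-w^*\|_2^2\le\|w_{i-1}-w^*\|_2^2+\text{drift}$ while $\eta_i n_i=8^{-i}\eta n$ collapses. However, the resolution you sketch in Step~3 --- a per-phase dichotomy ``either $\Delta_i$ is already small, or the $-\eta_i n_i(F(\bar w_i)-F(w^*))$ term contracts $D_i^2$ by a constant factor'' --- does not close as stated: in the second branch the available decrement is $\eta_i n_i\Delta_i\le 8^{-i}\eta n\cdot\Delta_i$, which for late phases is far too small to force a constant-factor shrinkage of $D_i^2\approx D^2$. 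The actual argument in \citep{feldman2020private} avoids this dichotomy by proving, inductively over phases, a \emph{quantitative} bound on $\mathbb E\|w_i-w^*\|_2^2$ that itself decays geometrically at the same $4^{-i}$ scale as $\eta_i$; the point is that the stability radius of the regularized ERM, the optimization tolerance $L^2\eta_i/n_i$, and the injected noise $\sigma_i^2 d$ are all chosen so that their contribution to $D_i^2$ already sits at the right scale, and the ``localization'' term $D_{i-1}^2/(\eta_i n_i)$ then stays uniformly bounded by the target. Your outline has the right ingredients (the coupled recursion, the geometric error budget $\chi_i$, and the Young-inequality slack), but the induction hypothesis you need is on $D_i^2$ at scale $4^{-i}D^2$, not a case split on $\Delta_i$.
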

By using Algorithm \ref{alg-Phased-EMR} as subroutine in Algorithm \ref{alg:2} and \ref{alg:3} we have the following result, which is similar to Theorem \ref{thm:1} and \ref{thm:2}. 
\begin{theorem}\label{thm:5}
   	Assume that $F(\cdot)$ is $(\theta, \lambda)$-TNC and $f(\cdot, x)$ is  convex and $L$-Lipschitz for each $x$. For any $0<\epsilon, \delta<1$, if we replace the Phased-SGD with Phased-ERM in Algorithm \ref{alg:2} and \ref{alg:3} (we also change the stepsizes), then the two algorithms are $(\epsilon, \delta)$-DP. Moreover, in Algorithm \ref{alg:2},  the output satisfies
		\begin{equation*}
	\mathbb{E} [F(w_k)]-\min_{w\in \mathcal{W}} F(w)= O\left(\left(\frac{L^{\theta}}{\lambda}\right)^{\frac{1}{\theta -1}}\cdot\left(\frac{1}{\sqrt{n}}+\frac{\sqrt{d\log(1/\delta)}}{ n\epsilon }\right)^{\frac{\theta}{\theta-1}}\right). 
		\end{equation*} 
		If $n$ is sufficiently large such that $n\geq256$,  in Algorithm \ref{alg:3},  the output satisfies
		\begin{equation*}
\mathbb{E}[F(\hat{w}_m)]-F(w^*)= O\left(\left(\frac{L^\theta}{\lambda}\right)^{\frac{1}{\theta-1}}\cdot\left(\frac{\sqrt{\log n}}{\sqrt{n}}+\frac{\sqrt{d\log(1/\delta)}\log n}{n\epsilon}\right)^{\frac{\theta}{\theta-1}}\right).
\end{equation*}
\end{theorem}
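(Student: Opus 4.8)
The plan is to observe that the analyses of Theorems \ref{thm:1} and \ref{thm:2} invoke the smoothness of $f(\cdot,x)$ only through their sub-routine Phased-SGD (Algorithm \ref{alg:1}), whose guarantee (Lemma \ref{lemma:4}) needs $\eta\le 1/\beta$. Since, in the $(\epsilon,\delta)$-DP model, Phased-ERM (Algorithm \ref{alg-Phased-EMR}) is $(\epsilon,\delta)$-DP \citep{feldman2020private} and attains the \emph{same} excess-risk bound $\mathbb{E}[F(\hat w)]-\min_w F(w)\le 10LD(\frac{1}{\sqrt n}+\frac{\sqrt{d\log(1/\delta)}}{n\epsilon})$ for $D\ge\|w_0-w^*\|_2$ with no smoothness assumption, I would prove Theorem \ref{thm:5} by rerunning the proofs of Theorems \ref{thm:1} and \ref{thm:2} essentially verbatim, replacing every use of Lemma \ref{lemma:4} by the Phased-ERM guarantee and adopting the step sizes prescribed for Phased-ERM. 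In this substitution both the smoothness hypothesis and the step-size-versus-$\beta$ condition simply disappear, while the sample-size side conditions (e.g. $n\ge 256$ for Algorithm \ref{alg:3}) are untouched.

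Privacy is immediate: both Algorithm \ref{alg:2} and Algorithm \ref{alg:3} partition $S$ into disjoint subsets and run one instance of the sub-routine, each given the full budget $(\epsilon,\delta)$, on each subset; changing a single data point affects only one subset, so by parallel composition the whole procedure remains $(\epsilon,\delta)$-DP, with no dependence on the number of stages. This is exactly the argument used for Theorems \ref{thm:1}/\ref{thm:2}, and nothing about it changes when Phased-SGD is swapped for Phased-ERM.

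For utility, let $w_t$ denote the output of stage $t$ and put $\Delta_t=\mathbb{E}[F(w_t)-F(w^*)]$. Since stage $t$ runs on a fresh disjoint subset $S_t$ of size $n_t$, conditioning on $w_{t-1}$ and applying the Phased-ERM guarantee with the almost-surely-valid choice $D=\|w_{t-1}-w^*\|_2$ gives $\mathbb{E}[F(w_t)-F(w^*)\mid w_{t-1}]\le 10L\|w_{t-1}-w^*\|_2(\frac{1}{\sqrt{n_t}}+\frac{\sqrt{d\log(1/\delta)}}{n_t\epsilon})$. Using the TNC bound $\|w_{t-1}-w^*\|_2\le((F(w_{t-1})-F(w^*))/\lambda)^{1/\theta}$, taking expectations, and applying Jensen's inequality to the concave map $x\mapsto x^{1/\theta}$ (valid since $\theta>1$), I obtain the recursion
\[
\Delta_t\le 10L\lambda^{-1/\theta}\Big(\tfrac{1}{\sqrt{n_t}}+\tfrac{\sqrt{d\log(1/\delta)}}{n_t\epsilon}\Big)\Delta_{t-1}^{1/\theta},
\qquad \Delta_0\le (L^\theta/\lambda)^{1/(\theta-1)},
\]
the initialization coming from Lemma \ref{lemma:3}. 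Unrolling over the $k=\Theta((\log_{\bar\theta}2)\log\log n)$ stages of Algorithm \ref{alg:2}, where $n_t\asymp 2^{t-1}n/(\log n)^{\log_{\bar\theta}2}$ (so $n_k\asymp n$ and the polylog factors introduced for small $t$ cancel), drives $\Delta_k$, up to an absolute constant, to the fixed point $(L^\theta/\lambda)^{1/(\theta-1)}(\frac{1}{\sqrt n}+\frac{\sqrt{d\log(1/\delta)}}{n\epsilon})^{\theta/(\theta-1)}$ of the recursion; here the hypothesis $\bar\theta\ge 2^{\log\log n/((\log n)-1)}$ is what guarantees $\sum_t n_t\le n$ and $n_t\ge 1$, and also forces $\theta^k=\Omega(\log n)$ so that $\Delta_0^{1/\theta^k}=O(1)$. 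For Algorithm \ref{alg:3} the stages are equal-sized with $n_0=\lfloor n/m\rfloor$, $m=\Theta(\log n)$, the sub-routine is run on the shrinking set $\mathcal{W}\cap\mathbb{B}(\hat w_{k-1},R_{k-1})$ with $D=R_{k-1}$, and one maintains inductively that $R_{k-1}\ge\|\hat w_{k-1}-w^*\|_2$ (so the minimizer is never projected away); unrolling the same recursion with $n_0\asymp n/\log n$ entering the coefficient produces the bound with the extra $\sqrt{\log n}$ and $\log n$ factors, under the same condition $n\ge 256$ as in Theorem \ref{thm:2}.

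The main obstacle is a point I would rather import than reprove: inside Phased-ERM only an \emph{approximate} minimizer $\tilde w_i$ of the strongly convex regularized objective $F_i$ is computed, and one must check that Gaussian noise calibrated to the $\ell_2$-sensitivity of the \emph{exact} minimizer still yields $(\epsilon,\delta)$-DP and that the optimization error $L^2\eta_i/n_i$ does not spoil the utility bound — precisely the subtlety flagged for Algorithm \ref{alg:3} with Dykstra's algorithm, but already settled in \citep{feldman2020private}, so we use their statement as a black box. On our side the only genuinely new verification is that the TNC recursion still closes when the sub-routine's guarantee holds only \emph{in expectation} rather than with high probability; the conditioning-plus-Jensen step above is exactly what handles this, and the remaining work — tracking constants through $\Theta(\log\log n)$ unrollings and checking the sample-size side conditions — is routine.
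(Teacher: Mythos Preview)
Your proposal is correct and matches the paper's approach exactly: the paper does not give a separate proof of Theorem \ref{thm:5} at all, but simply observes (in the text preceding the theorem) that the smoothness assumption enters only through the Phased-SGD sub-routine, so swapping in Phased-ERM with its identical $(\epsilon,\delta)$-DP guarantee and identical excess-risk bound (Theorem~4.8 of \citep{feldman2020private}) lets the proofs of Theorems \ref{thm:1} and \ref{thm:2} go through verbatim. Your explicit conditioning-plus-Jensen step to pass from the per-stage bound to the recursion on $\Delta_t$ is in fact slightly more careful than the paper's own proofs of Theorems \ref{thm:1} and \ref{thm:2}, which write $\Delta_i\le cLD_{i-1}(\cdots)$ with $D_{i-1}^\theta=\mathbb{E}\|w_{i-1}-w^*\|_2^\theta$ without spelling out the tower/Jensen argument.
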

In the following we provide some examples that satisfy TNC  with $\theta=2$ and with non-smooth loss. 
\begin{lemma}[Hinge Loss \citep{xu2017stochastic}]
    Consider the SVM problem with hinge loss
    \begin{equation*}
        \min_{w\in\mathcal{W}}F(w)=\mathbb{E}[(1-y\langle w ,x \rangle)_{+}], 
    \end{equation*}
\end{lemma}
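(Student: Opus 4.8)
The lemma asserts that, under its structural hypothesis on the constraint set $\mathcal{W}$ and the data distribution, the population hinge risk $F(w)=\mathbb{E}[(1-y\langle w,x\rangle)_{+}]$ satisfies $(2,\lambda)$-TNC for some $\lambda>0$, and hence is covered by Theorem \ref{thm:5} (the non-smooth counterpart of Theorem \ref{thm:1}; hinge loss is not smooth, so Theorem \ref{thm:1} does not apply directly). The plan splits into a routine part --- checking the convexity and Lipschitz requirements --- and the substantive part --- establishing the growth condition with exponent $\theta=2$.

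\textbf{Routine part.} Since $t\mapsto(1-t)_{+}=\max\{0,1-t\}$ is convex and $1$-Lipschitz and $w\mapsto y\langle w,x\rangle$ is affine, $f(w,x)=(1-y\langle w,x\rangle)_{+}$ is convex in $w$ for every $(x,y)$, so $F$ is convex. Moreover $|f(w,x)-f(w',x)|\le|y|\,|\langle w-w',x\rangle|\le|y|\,\|x\|_{2}\,\|w-w'\|_{2}$, so under the boundedness assumptions (labels in $\{-1,1\}$ and $\|x\|_{2}\le R$) each $f(\cdot,x)$ is $R$-Lipschitz. Thus the only nontrivial hypothesis of Theorem \ref{thm:5} that remains to verify is $(2,\lambda)$-TNC of $F$.

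\textbf{Establishing $(2,\lambda)$-TNC.} I would first reduce to a sharp-minimum ($\theta=1$) bound and then relax it using boundedness of $\mathcal{W}$: if $F(w)-F(w^{*})\ge\mu^{-1}\|w-w^{*}\|_{2}$ for all $w\in\mathcal{W}$, then since $\|w-w^{*}\|_{2}\le\|\mathcal{W}\|_{2}$ (the diameter) one gets $F(w)-F(w^{*})\ge(\mu\|\mathcal{W}\|_{2})^{-1}\|w-w^{*}\|_{2}^{2}$, i.e. $F$ is $(2,(\mu\|\mathcal{W}\|_{2})^{-1})$-TNC, where $w^{*}$ denotes the projection of $w$ onto the optimal set $\mathcal{W}_{*}$ exactly as in Definition \ref{def:8} (which already allows $\mathcal{W}_{*}$ to be a face rather than a point). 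For the $\theta=1$ bound I would invoke a polyhedral error bound: when the data distribution has finite support and $\mathcal{W}$ is a bounded polyhedron, $F$ is a finite maximum of affine functions restricted to $\mathcal{W}$, hence piecewise linear, so a Hoffman/Robinson-type global error bound for piecewise-linear convex functions supplies a constant $\mu>0$ with $\mathrm{dist}(w,\mathcal{W}_{*})\le\mu\,(F(w)-F(w^{*}))$ on $\mathcal{W}$ --- precisely the sharp-minimum inequality. An alternative route that avoids finite support is to assume the law of $y\langle x,\cdot\rangle$ has a density bounded below near the active margin; differentiating $F$ twice along lines through $\mathcal{W}_{*}$ shows $F$ is locally strongly convex there, and combining this local quadratic growth with convexity and compactness of $\mathcal{W}$ (growth cannot flatten globally once it holds on a neighborhood of $\mathcal{W}_{*}$) again yields global $(2,\lambda)$-TNC.

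\textbf{Main obstacle.} The delicate step is producing the growth constant $\mu$ (equivalently $\lambda$) quantitatively: in the polyhedral case it is the Hoffman constant of the piecewise-linear system, whose existence is classical but whose dependence on the geometry of $\mathcal{W}$ and on the data is awkward to control; in the density case one must rule out degeneracies --- $\mathcal{W}_{*}$ touching $\partial\mathcal{W}$, or vanishing mass near the margin --- so that the second-order lower bound is uniform over $\mathcal{W}_{*}$. Once $(2,\lambda)$-TNC is in hand, ``satisfies the assumptions of Theorem \ref{thm:5}'' is immediate from the routine part, and the rate of Theorem \ref{thm:5} then applies with $\theta=2$, giving $\Ot\big((\frac{1}{\sqrt{n}}+\frac{\sqrt{d\log(1/\delta)}}{n\epsilon})^{2}\big)$ excess population risk in the $(\epsilon,\delta)$-DP model.
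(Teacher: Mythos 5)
The paper itself offers no proof of this lemma --- it is imported as a known result from \citep{xu2017stochastic} --- so your sketch has to be judged against the standard argument under the lemma's stated hypotheses, and there it has a genuine gap. The hypothesis you never use is the one that does all the work: $\mathcal{W}$ is an $\ell_1$- or $\ell_\infty$-ball and $|\langle w,x\rangle|\le 1$ for every $x$ in the support and every $w\in\mathcal{W}$. Since $|y|\le 1$, this gives $1-y\langle w,x\rangle\ge 0$ identically, so the positive part is never active and $F(w)=\mathbb{E}[1-y\langle w,x\rangle]=1-\langle w,\mathbb{E}[yx]\rangle$ is an \emph{affine} function on a polytope. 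If $\mathbb{E}[yx]=0$ then $\mathcal{W}_*=\mathcal{W}$ and TNC is vacuous; otherwise the classical weak-sharp-minimum property of linear programs (a single Hoffman-type bound for one affine function over one polytope) gives $F(w)-F(w^*)\ge c\,\mathrm{dist}(w,\mathcal{W}_*)$ with $c>0$ on $\mathcal{W}$, and your own diameter argument upgrades this $\theta=1$ growth to $(2,c/\|\mathcal{W}\|_2)$-TNC. No finite-support or margin-density assumption is needed.

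By contrast, your two substitute routes prove a weaker statement than the lemma. The piecewise-linear/Hoffman route requires the data distribution to have finite support --- an assumption not in the lemma and, as just noted, not needed --- while the density route imposes non-degeneracy conditions near the margin and relies on a local-to-global growth step ("growth cannot flatten globally") that you only assert; it is true for convex functions on compact sets but needs an argument, and in any case neither route as written covers the lemma at its stated generality. The ingredients you do supply --- convexity and Lipschitzness of $f(\cdot,x)$, the correct observation that hinge loss is non-smooth so the Phased-ERM result (Theorem \ref{thm:5}) rather than Theorem \ref{thm:1} is the one to invoke, and the $\theta=1\Rightarrow\theta=2$ conversion via boundedness of $\mathcal{W}$ --- are exactly the right tools; what is missing is the single observation that the bounded-margin condition linearizes the loss, after which the "main obstacle" you describe (controlling Hoffman constants of general piecewise-linear systems, ruling out vanishing mass near the margin) disappears entirely.
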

where $\mathcal{W}$ is an $\ell_1$-norm or $\ell_\infty$-norm ball and $|\langle w, x\rangle|\leq 1$ for all $x$ and $w\in \mathcal{W}$. Then $F(\cdot)$ satisfies TNC with $\theta=2$. 
\begin{lemma}[$\ell_1$-regularized Problems  \citep{xu2017stochastic}]
    Consider the following $\ell_1$-regularized problem
    \begin{equation*}
        \min_{\|w\|_1\leq B}F(w)=\mathbb{E}[f(w, x)]+\lambda\|w\|_1,
    \end{equation*}
    where $\mathbb{E}[f(w, x)]$ is convex quadratic or piecewise linear, then $F(w)$ satisfies TNC with $\theta=2$. 
\end{lemma}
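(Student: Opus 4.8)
The plan is to verify the defining inequality of Definition~\ref{def:8} with $\theta=2$: I must exhibit a constant $\lambda'>0$ such that $F(w)-F(w^*)\geq \lambda'\,\|w-w^*\|_2^2$ for every $w$ in the constraint set $\mathcal{W}=\{w:\|w\|_1\leq B\}$, where $w^*$ is the projection of $w$ onto the optimal set $\mathcal{W}_*$, so that $\|w-w^*\|_2=\mathrm{dist}(w,\mathcal{W}_*)$. The facts I would lean on are purely polyhedral: $\mathcal{W}$ is a bounded polyhedron (the cross-polytope, with $\ell_2$-diameter at most $2B$); $\lambda\|w\|_1$ is a convex polyhedral function; and $F$ is continuous on the compact set $\mathcal{W}$, so $\mathcal{W}_*$ is nonempty.

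My first step is to reduce both hypotheses to one: after adding $\lambda\|w\|_1$ and restricting to $\mathcal{W}$, the objective $F$ is a \emph{convex piecewise-quadratic function over a polytope}. Indeed, partition $\mathbb{R}^d$ into its $2^d$ orthants; on the orthant with sign pattern $s\in\{-1,+1\}^d$ one has $\|w\|_1=\langle s,w\rangle$, so if $\mathbb{E}[f(w,x)]=\tfrac12 w^{T}Aw+\langle b,w\rangle+c$ with $A$ positive semidefinite, then $F(w)=\tfrac12 w^{T}Aw+\langle b+\lambda s,w\rangle+c$ agrees with a single convex quadratic on the polyhedral cell $\mathcal{W}\cap(\text{orthant}_s)$; if instead $\mathbb{E}[f(w,x)]$ is convex piecewise linear, this is the degenerate case $A=0$ after a further polyhedral refinement of the cells. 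Either way $F$ is convex on $\mathcal{W}$ and coincides with a (possibly degenerate) quadratic on each cell of a finite polyhedral subdivision of $\mathcal{W}$.

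The second step is to invoke the quadratic-growth / error-bound theory for such functions: a convex piecewise-quadratic function over a polytope has the quadratic growth property, i.e.\ there is $\gamma>0$ with $F(w)-\min_{\mathcal{W}}F\geq \gamma\,\mathrm{dist}(w,\mathcal{W}_*)^2$ for all $w\in\mathcal{W}$, which is exactly $(2,\gamma)$-TNC. This is the Luo--Tseng error bound for piecewise-quadratic convex functions; in the piecewise-linear case it specialises to Hoffman's polyhedral error bound, which in fact gives the stronger linear growth $F(w)-F(w^*)\geq \mu\|w-w^*\|_2$, and since $\|w-w^*\|_2\leq 2B$ on $\mathcal{W}$ this already yields $F(w)-F(w^*)\geq (\mu/2B)\|w-w^*\|_2^2$. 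Hence $F$ satisfies $(2,\lambda')$-TNC with $\lambda'>0$; together with $L$-Lipschitzness (immediate on the bounded set $\mathcal{W}$) this is what is needed by the non-smooth results such as Theorem~\ref{thm:5}, and it underlies the treatment in \citep{liu2018fast,xu2017stochastic}.

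I expect the main obstacle to be the quadratic case when $A$ is singular: within a single cell the quadratic grows only linearly along $\ker A$, so the $\mathrm{dist}^2$ lower bound cannot be read off any one piece — it must be assembled from the within-cell bounds across the finitely many cells. A self-contained derivation would (i) use the classical within-cell estimate $F(w)-\min_{\mathrm{cell}}F\gtrsim \mathrm{dist}(w,\arg\min_{\mathrm{cell}}F)^2$ for a convex quadratic over a polyhedron, noting that $\arg\min_{\mathrm{cell}}F$ is itself a polyhedron; (ii) use Hoffman's lemma to bound, linearly, the distance from each cell's arg-min set to the global optimal set $\mathcal{W}_*$; and (iii) patch the finitely many estimates together by taking the worst constant, with care at the cell boundaries where $F$ switches pieces. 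This patching is the technical heart of the Luo--Tseng result, so in the write-up I would cite it rather than reprove it.
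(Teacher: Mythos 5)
Your proposal is correct and follows essentially the same route as the paper, which offers no independent proof of this lemma but defers to \citep{xu2017stochastic}, where the claim rests on exactly the machinery you invoke: reduction to a convex piecewise-quadratic (or piecewise-linear) function over a polytope, the Luo--Tseng error bound giving quadratic growth in the quadratic case, and Hoffman's polyhedral bound (linear growth, upgraded to quadratic growth via the bounded diameter of the $\ell_1$-ball) in the piecewise-linear case. Your explicit identification of the projection $w^*$ with $\mathrm{dist}(w,\mathcal{W}_*)$ and your handling of the singular-$A$ case by citing rather than reproving the patching argument are both appropriate.
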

 \begin{figure*}[!htbp]
\centering
\subfigure[a8a]{
\includegraphics[width=5.5cm]{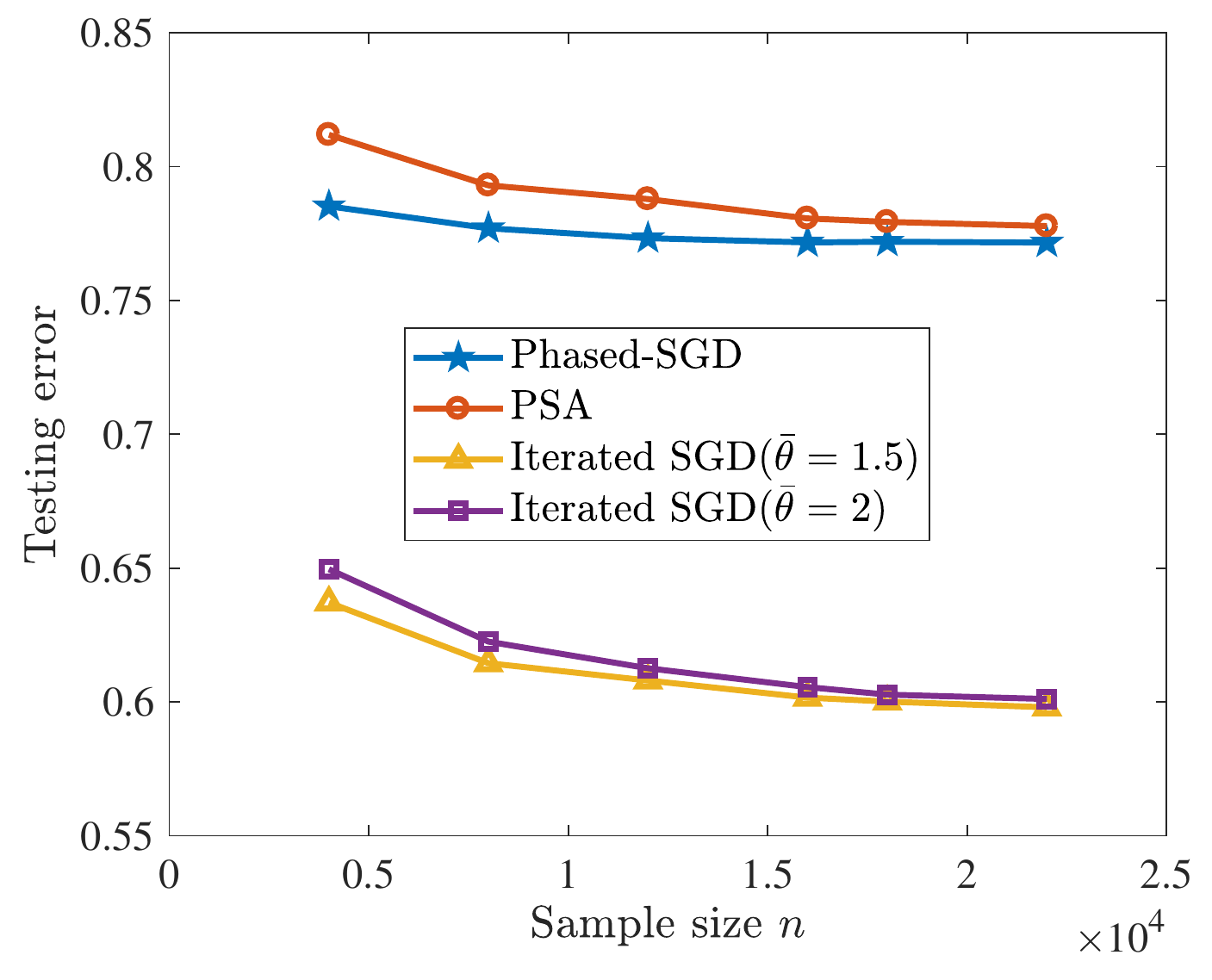}
}
\quad
\subfigure[a9a]{
\includegraphics[width=5.5cm]{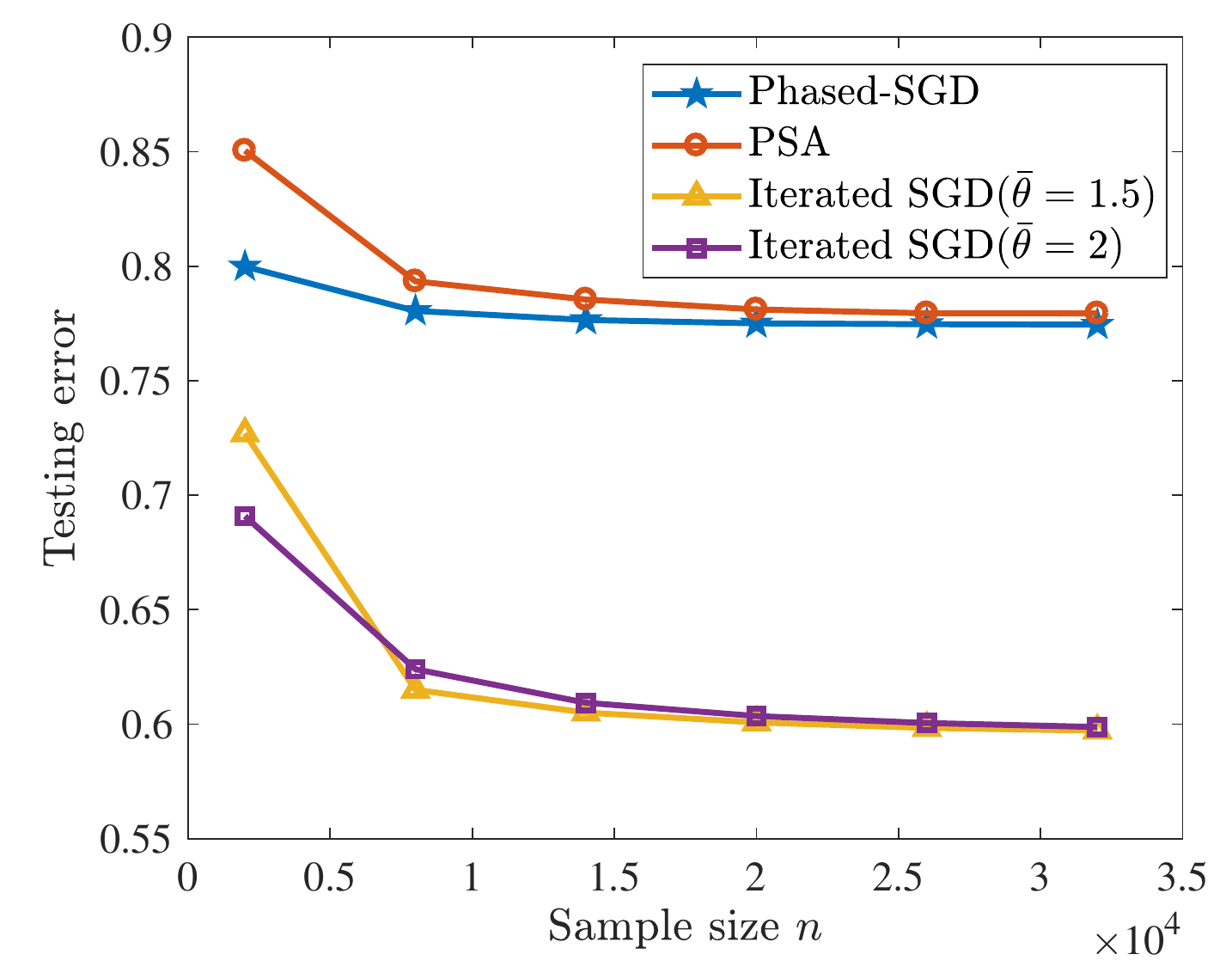}
}
\quad
\subfigure[ijcnn1]{
\includegraphics[width=5.5cm]{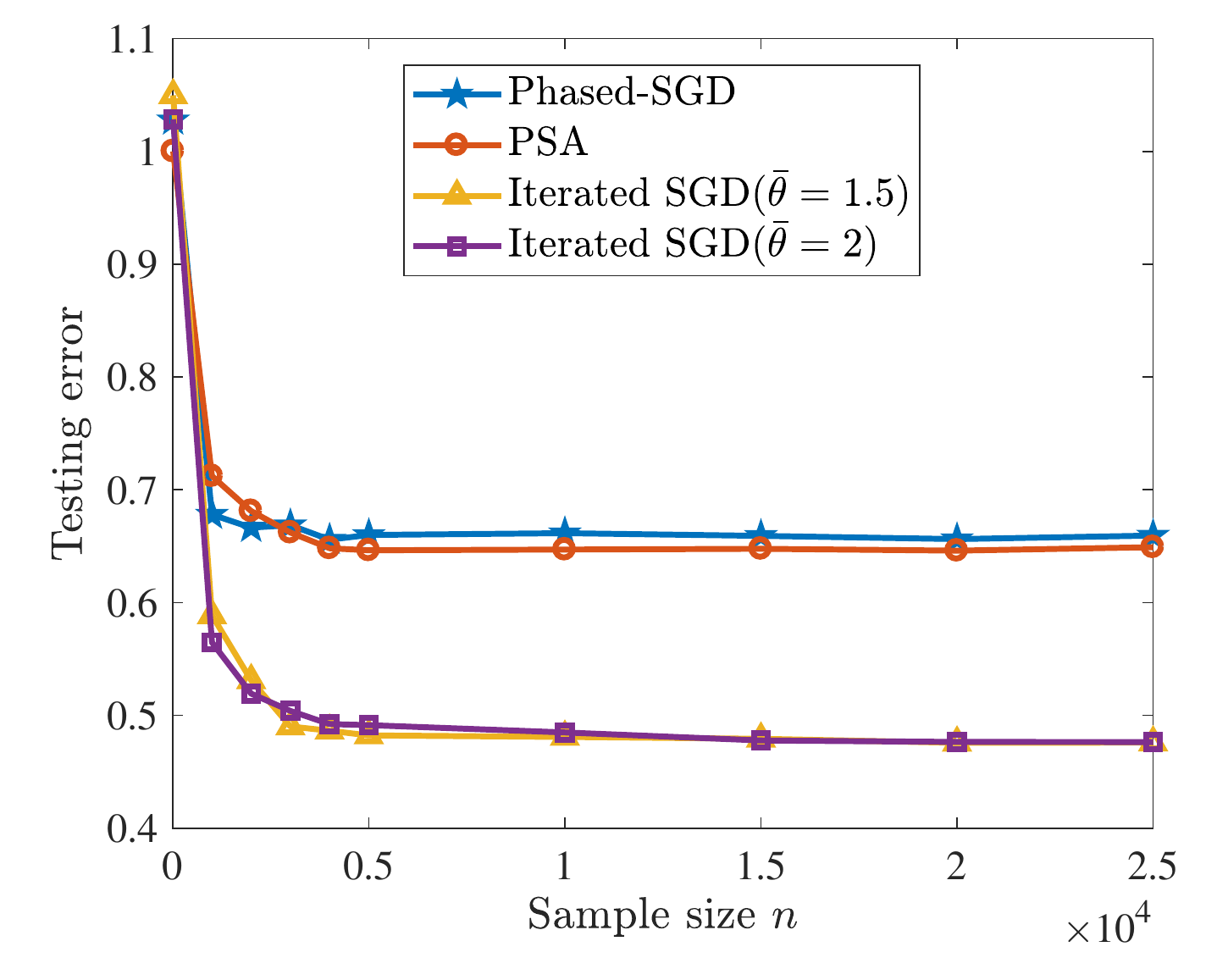}
}
\quad
\subfigure[w7a]{
\includegraphics[width=5.5cm]{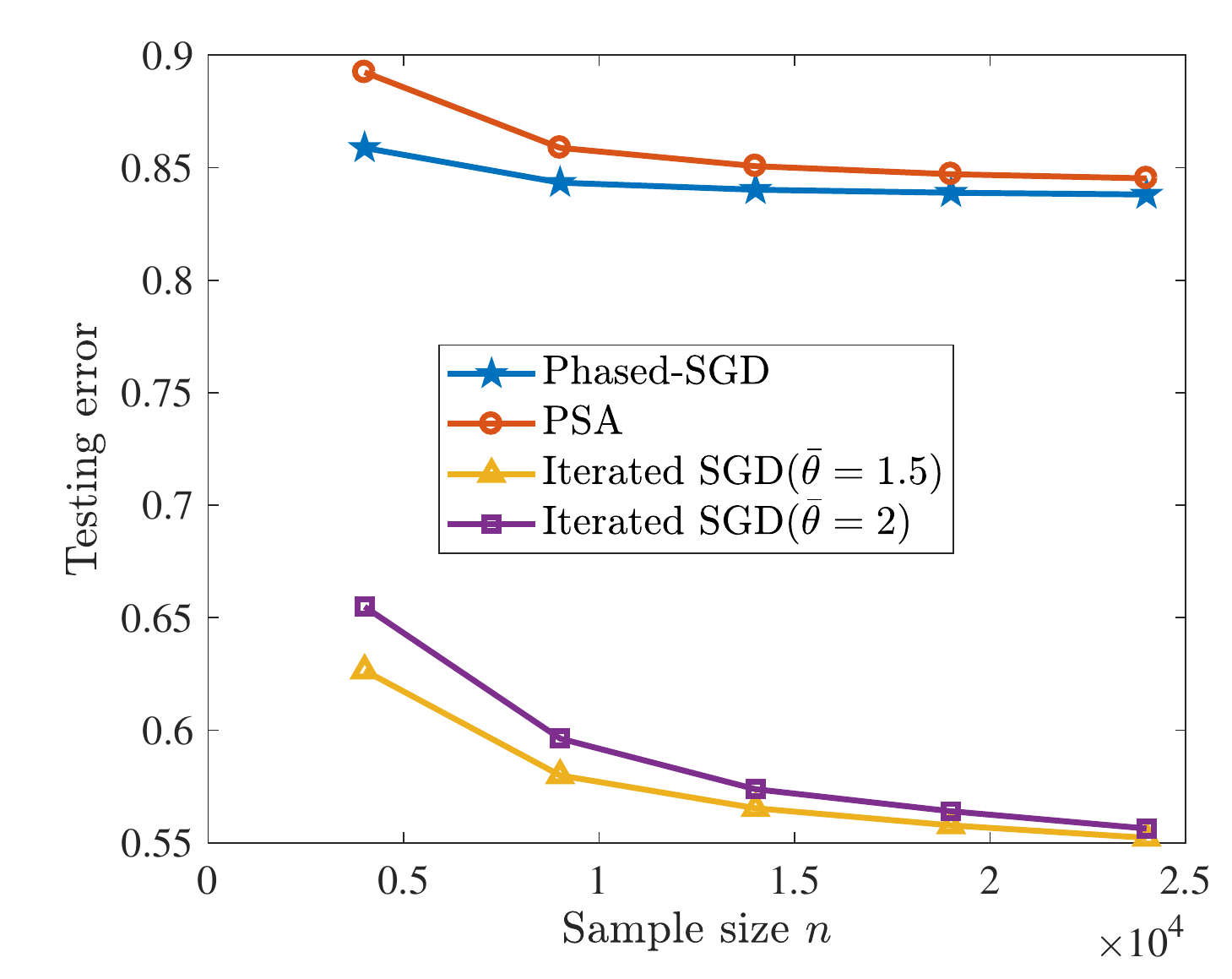}
}
\caption{ Results of Linear regression problem (\ref{eq23}) with  different training sample size. }
\label{fig1}
\end{figure*}
\section{Experiments}\label{sec:experiments}
 In this section, we  provide some experimental studies to compare the effectiveness of the proposed methods for population risk functions satisfying TNC. 
  \subsection*{Experimental Settings and Baseline Methods}
 For the problems satisfying TNC, we first consider the population risk with squared loss, which is  mentioned in Lemma \ref{lemma:5}, 
  \begin{equation}\label{eq23}
 \min \limits_{||w||_1\leq B}F(w)\overset{\Delta}{=} \mathbb{E}[(\langle w, x\rangle -y)^2]. 
 \end{equation}
 As we mentioned in the previous section, this problem satisfies TNC with parameter $\theta=2$. For this problem, we use Phased-SGD (Algorithm \ref{alg:1}) as our baseline method, which could be seen as the state-of-the-art method \citep{feldman2020private}. We will use PSA (Algorithm \ref{alg:3}) and iterated SGD (Algorithm \ref{alg:2}) with $\bar{\theta}=1.5$ and $2$ for comparisons. Note that here we will not compare with Algorithm \ref{alg:new1}. As we can see that, theoretically it involves quite large constants which impedes the algorithm to be practical. 
 
 Moreover, since strongly convex functions satisfy $(2,\lambda)$-TNC, we also perform our methods on strongly convex function. Here we will use Phased-SGD-SC (Algorithm \ref{alg:new}) \citep{feldman2020private}, which is known to have the optimal rate of error, as the baseline method. For this case, we will consider  the population risk function with squared logistic loss  and an additional  $l_2$-norm regularization: 
  \begin{equation}\label{eq24}
 \min \limits_{||w||_2\leq B}F(w)\overset{\Delta}{=} \mathbb{E}[\log (1+e^{-y\langle x, w\rangle})]+\frac{\lambda}{2}||w||^2_2, 
 \end{equation}
 which is $\lambda$-strongly convex and satisfies $(2,\lambda)$-TNC. We set the parameter $\lambda=0.001$. For the compared methods, as we could see from Figure \ref{fig1} and \ref{fig3}, the performance of PSA (Algorithm \ref{alg:3}) is quite poor. Moreover, we find that the error of Faster-DPSGD-SC (Algorithm \ref{alg:6}) is also quite large. Thus, to have a better comparison with Phased-SGD-SC, here we will not show the results of PSA  and Faster-DPSGD-SC for the regularized logistic regression problem.

 \subsection*{Dataset and Parameter Settings}
 We will implement our methods on four real-world datasets from the libsvm website\footnote{https://www.csie.ntu.edu.tw/~cjlin/libsvm/}, namely a8a, a9a, ijcnn1 and w7a. For ijcnn1 dataset, although the training set and test set are explicitly provided, the training set is relatively small while test set is relatively too large. Thus, we randomly select $8\times 10^4$ samples in the test data and combine them to the training data, {\em i.e.,} we will leave 11,701 samples for testing while 115,000 samples for training.
 
Since it is difficult to get the exact value of the population risk function, here we will use the testing error to approximate it, which is the value of the empirical risk on test data. In the experimental part, we study the above mentioned two TNC problems (\ref{eq23}), (\ref{eq24}) and their corresponding testing errors with various sample sizes and privacy budgets $\epsilon$. When performing the results for different sample sizes, we will fix $\frac{\epsilon}{2\sqrt{\log(1/\delta)}}=2$. When performing the results for different privacy budgets $\epsilon$, we will use $n=10^4$ samples. We will set $\delta=\frac{1}{n^{1.1}}$ for all experiments. 

Note that all the algorithms presented in the experiments are conducted for 20 random runs and we take the their averaged testing error over the 20 runs. 
  \begin{figure*}[!htbp]
\centering
\subfigure[a8a]{
\includegraphics[width=5.5cm]{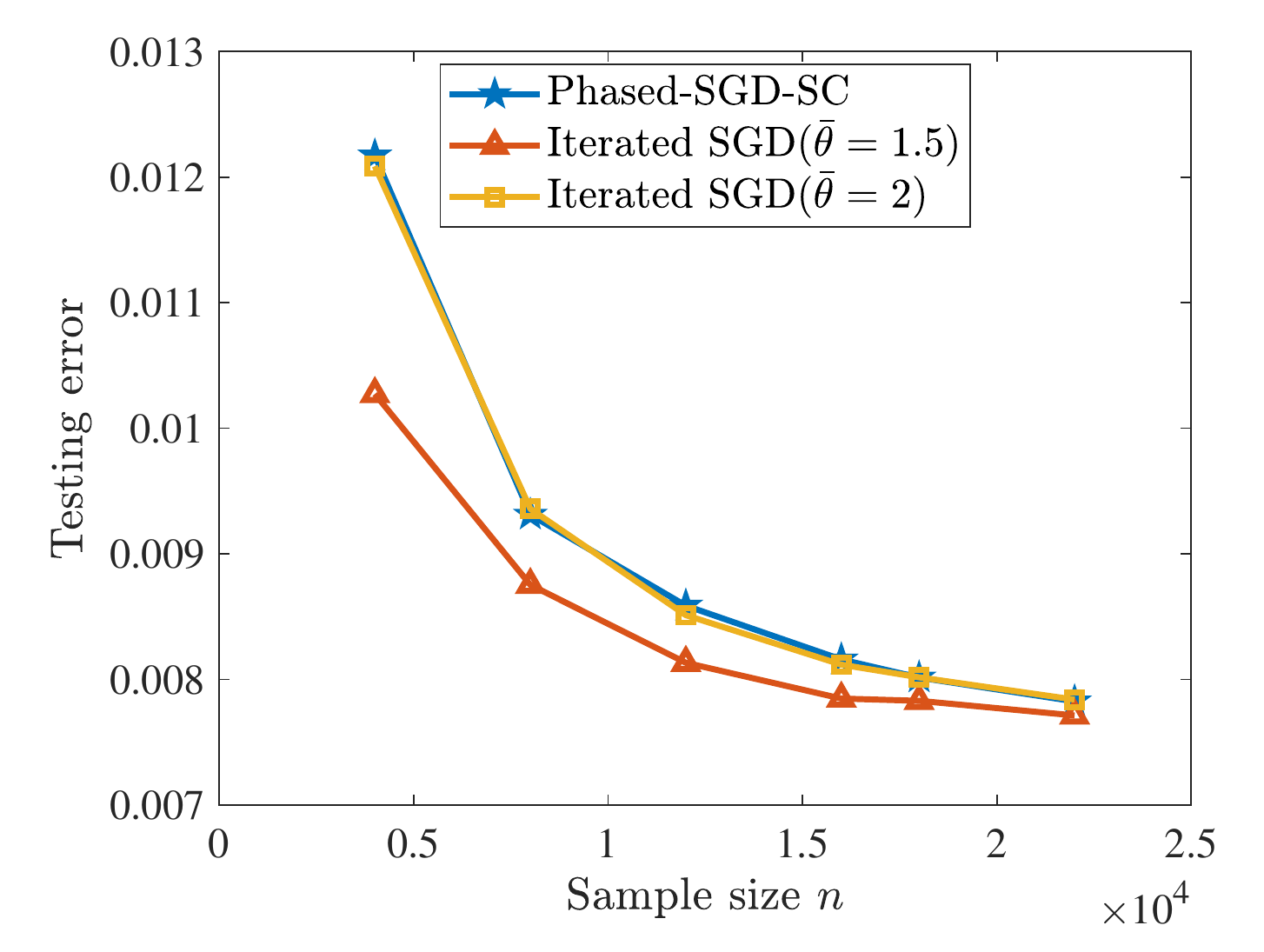}
}
\quad
\subfigure[a9a]{
\includegraphics[width=5.5cm]{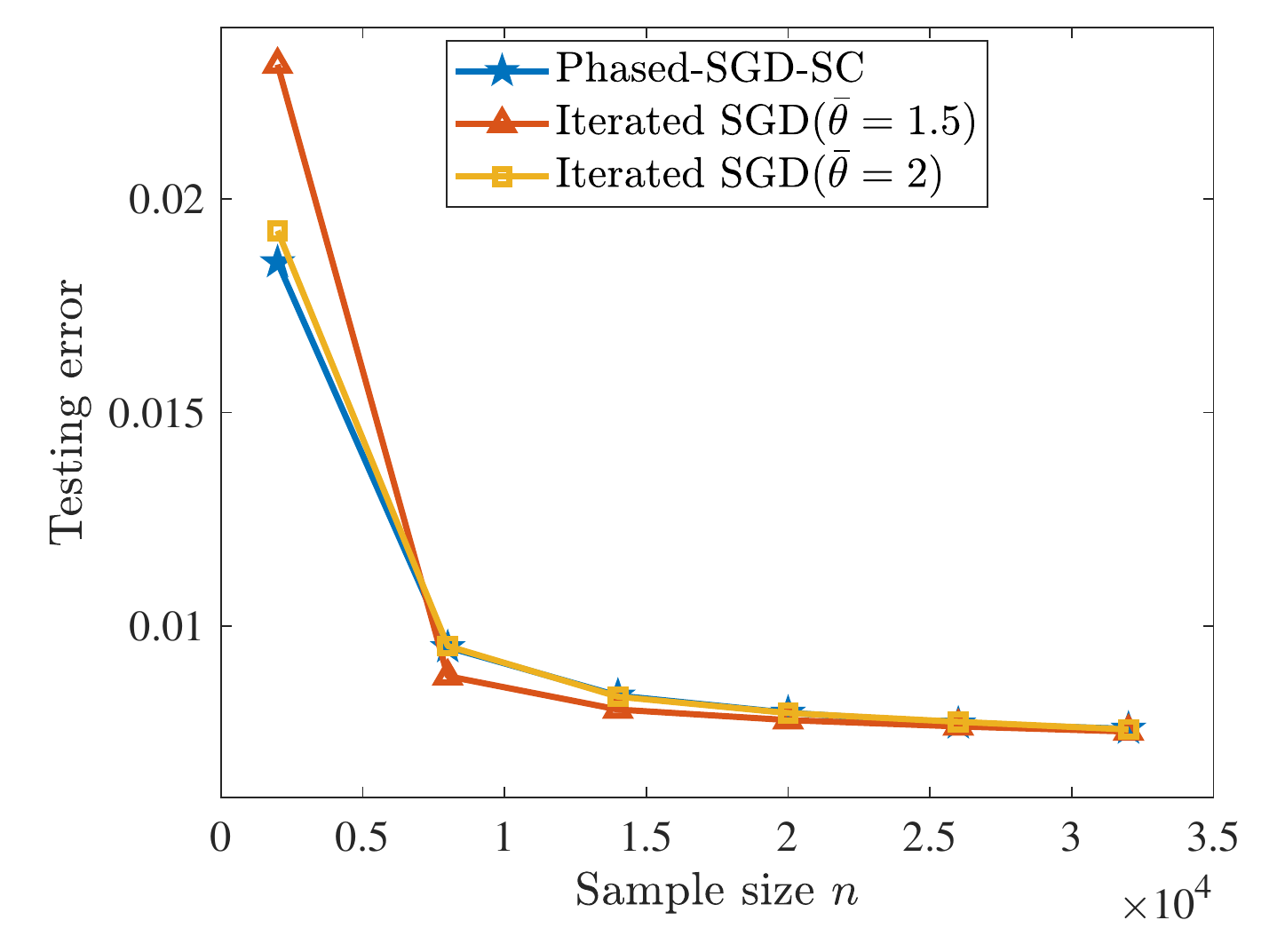}
}
\quad
\subfigure[ijcnn1]{
\includegraphics[width=5.5cm]{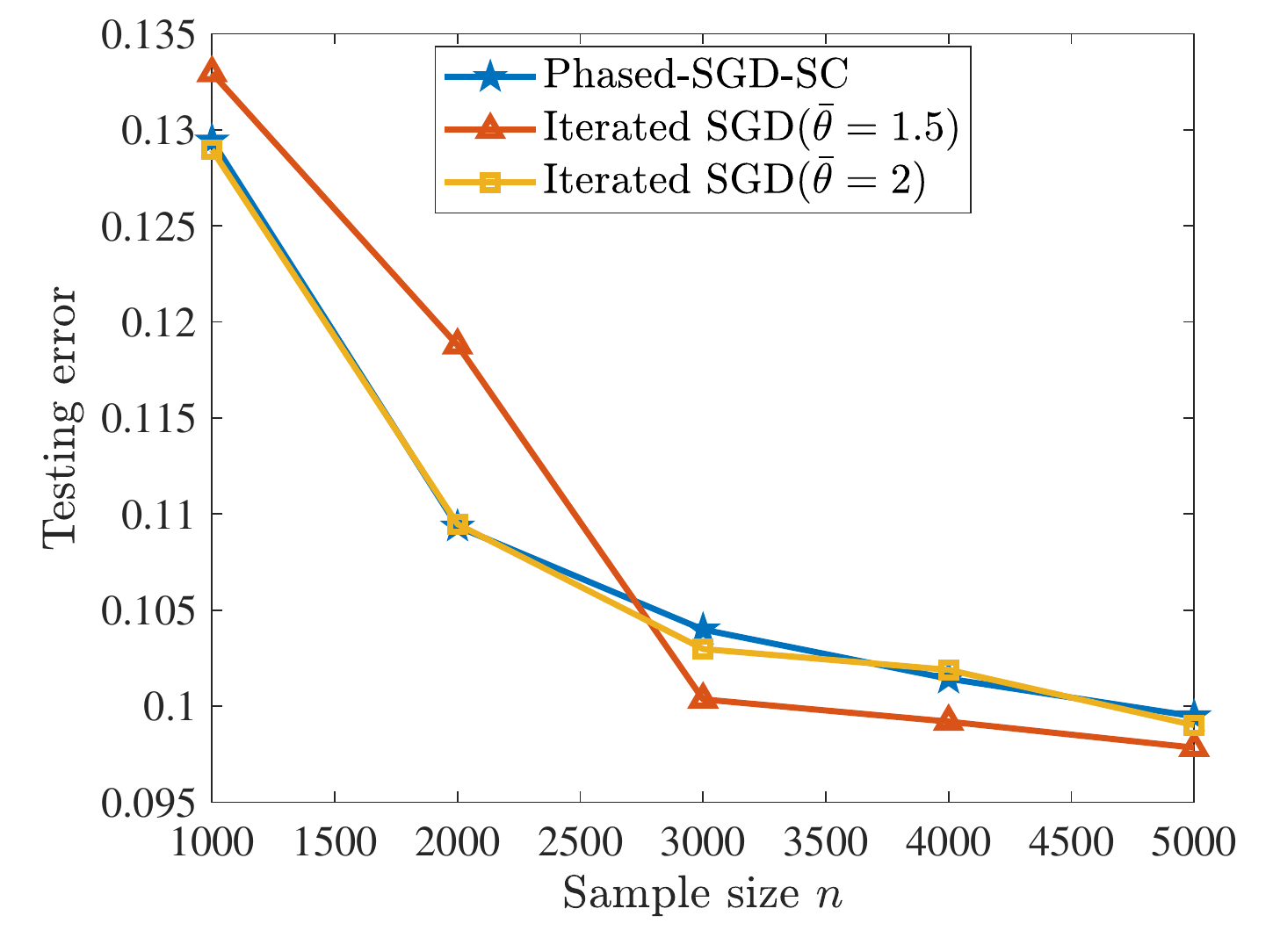}
}
\quad
\subfigure[w7a]{
\includegraphics[width=5.5cm]{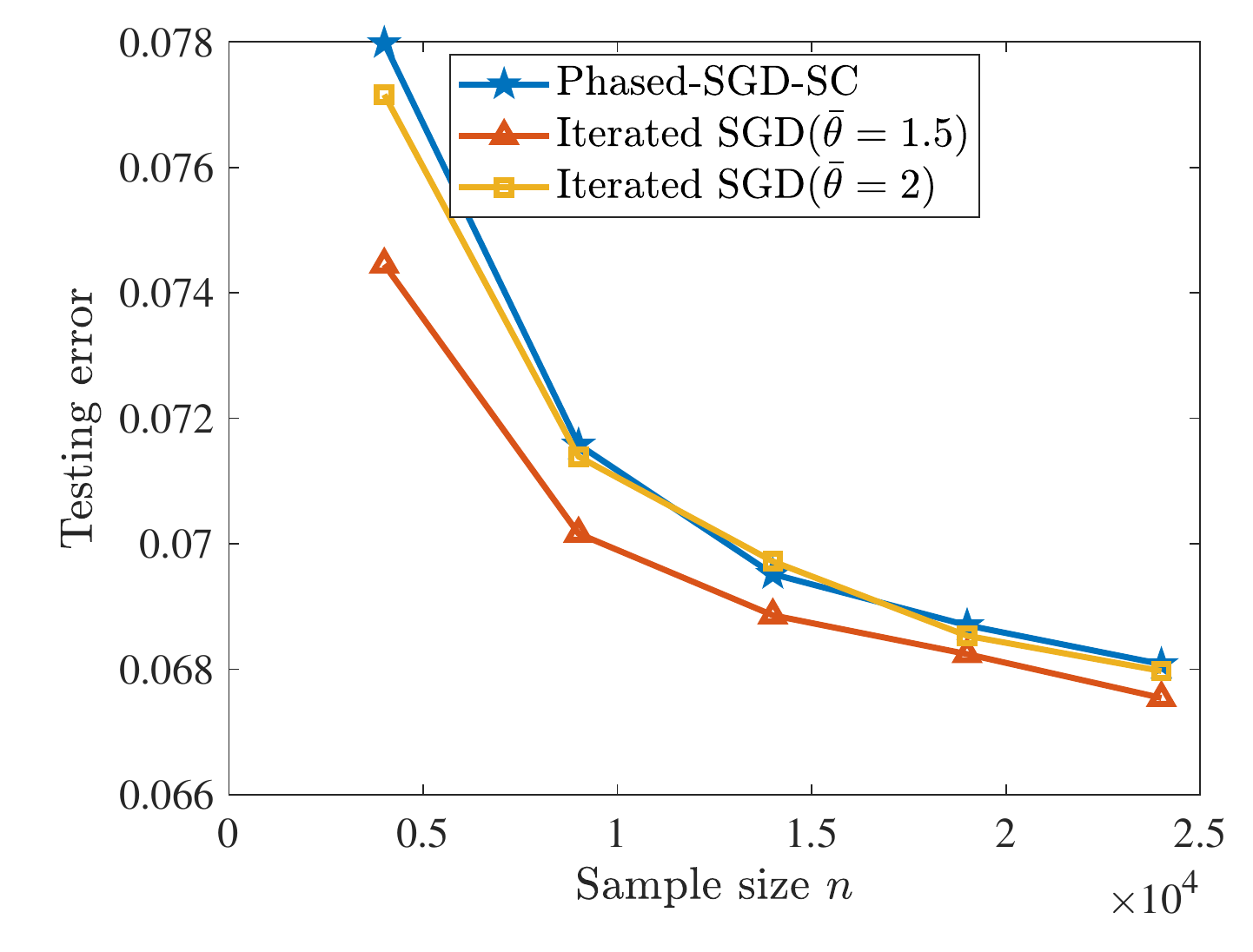}
}

\caption{ Results of $l_2$ regularized squared logistic regression problem (\ref{eq24}) with  different training sample size. \label{fig2}}
\end{figure*}

 \begin{figure*}[!htbp]
\centering
\subfigure[a8a]{
\includegraphics[width=5.5cm]{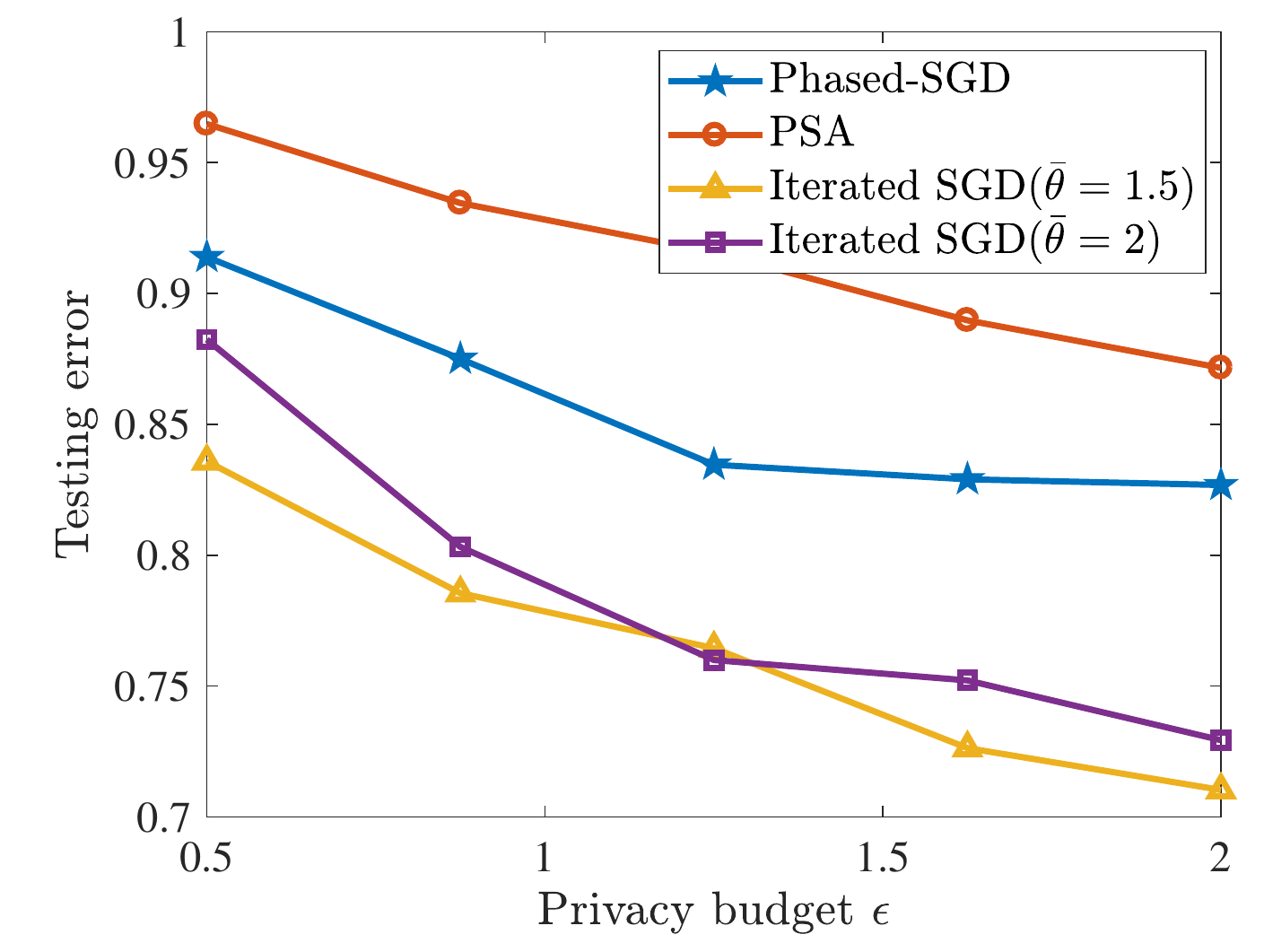}
}
\quad
\subfigure[a9a]{
\includegraphics[width=5.5cm]{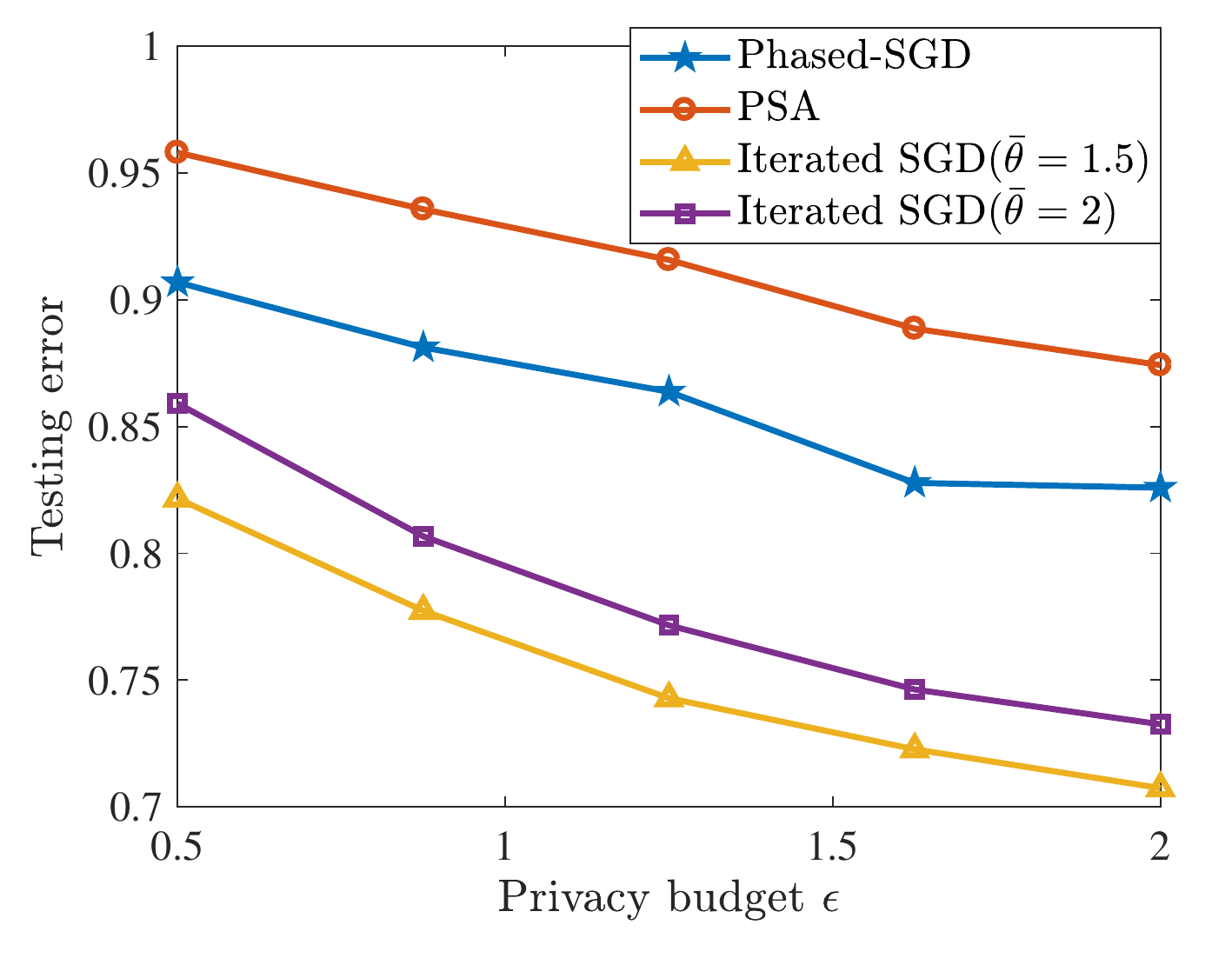}
}
\quad
\subfigure[ijcnn1]{
\includegraphics[width=5.5cm]{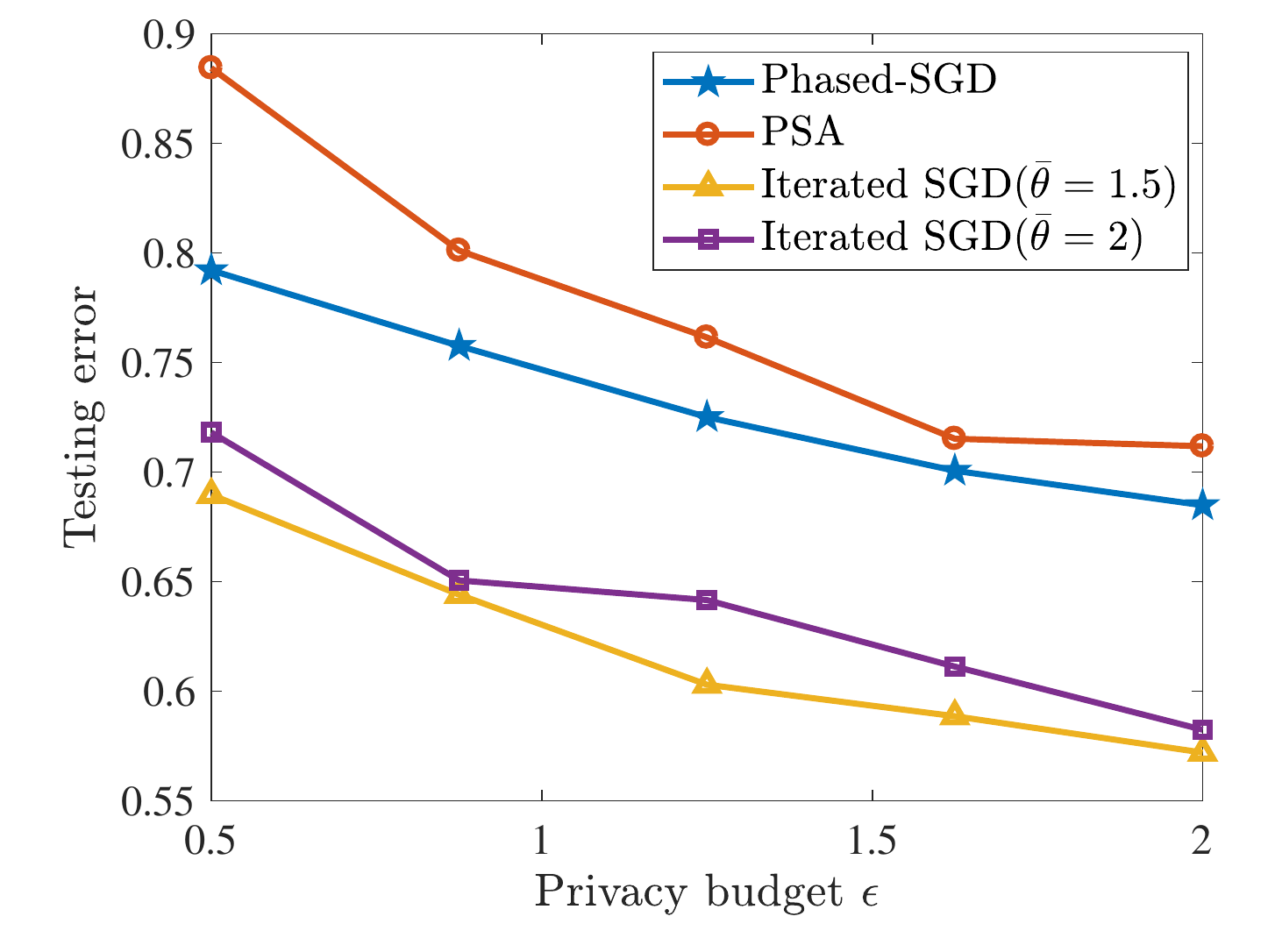}
}
\quad
\subfigure[w7a]{
\includegraphics[width=5.5cm]{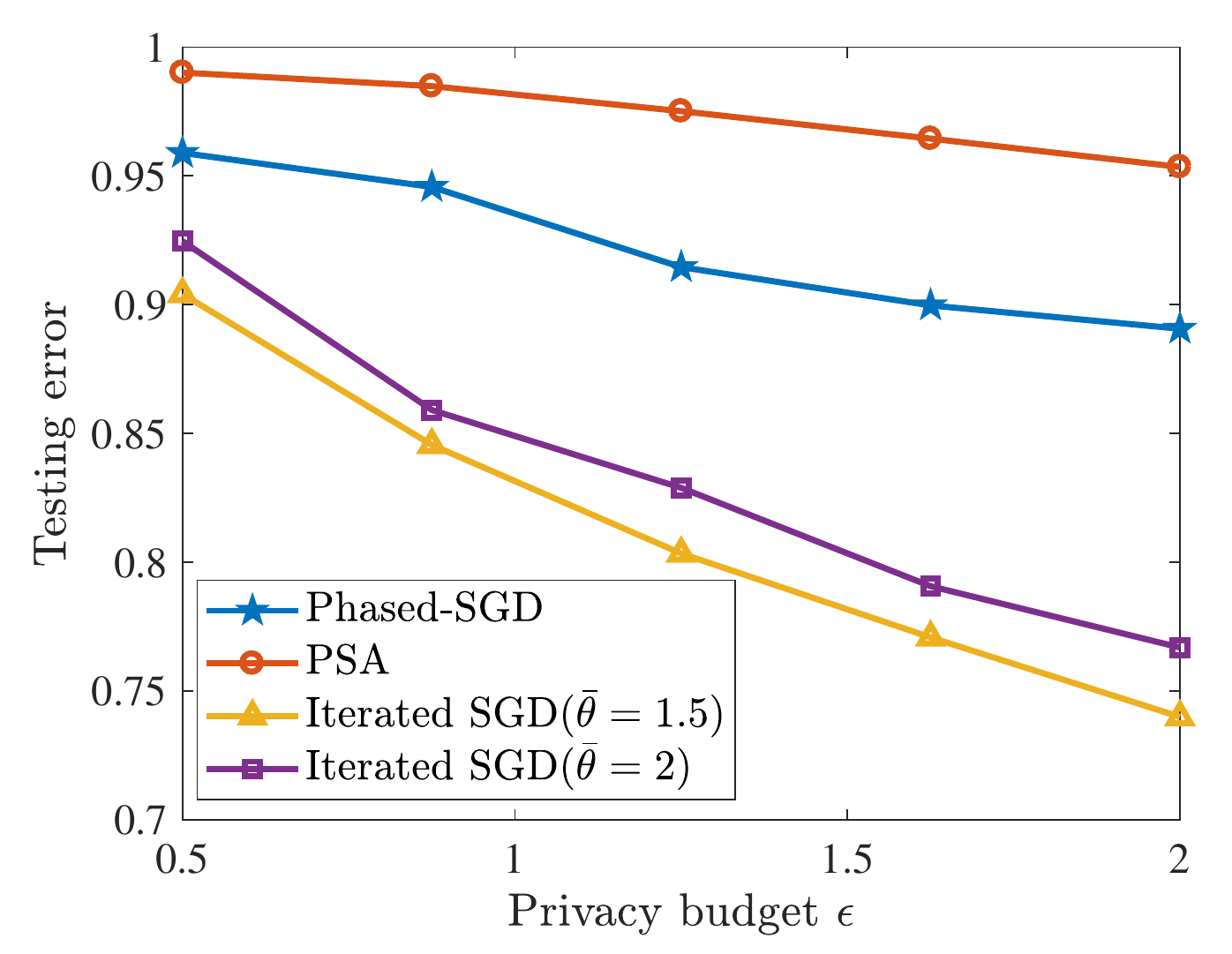}
}
\caption{ Results of Linear regression problem (\ref{eq23}) with  different privacy budget $\epsilon$. }
\label{fig3}
\end{figure*}
\subsection*{Experimental Results}
Figure \ref{fig1} and \ref{fig3} are the results for linear regression, while Figure \ref{fig2} and \ref{fig4} are for the $\ell_2$-norm regularized squared logistic regression. As we can see the from these results: 1) The previous method Phased-SGD outperforms our first method PSA in all the cases, which contradicts to our previous theoretical results. We conjecture that this may be due to that we use the Dykstra’s algorithm in PSA to get an approximate solution of the projection step, which may destroy our theoretical guarantees, another reason may be that the sample size is still not large enough, as we can see when the sample size gets larger the two methods get closer. 2) We can see that, unlike PSA, our third method, Iterated SGD, outperforms Phased-SGD in all the experiments. Moreover, either the sample size or the privacy parameter $\epsilon$ gets larger, the testing error of Iterated SGD decreases faster, which supports our previous theoretical analysis. 3) From Iterated SGD with $\bar{\theta}=2$ and $\bar{\theta}=1.5$ we can see that the method is quite flexible. This is due to that we showed that Theorem \ref{thm:1} will hold as long as  $\theta\geq \bar{\theta}>1$.  However, we note that the performance could still be different for $\bar{\theta}=1.5$ and $\bar{\theta}=2$, and we find that $\bar{\theta}=1.5$ is better than $\bar{\theta}=2$. We conjecture it is because  the hidden constant in the upper bound of Theorem \ref{thm:1} in the case of  $\bar{\theta}=1.5$ is relatively smaller than  the case of  $\bar{\theta}=2$.

 \begin{figure*}[!htbp]
\centering
\subfigure[a8a]{
\includegraphics[width=5.5cm]{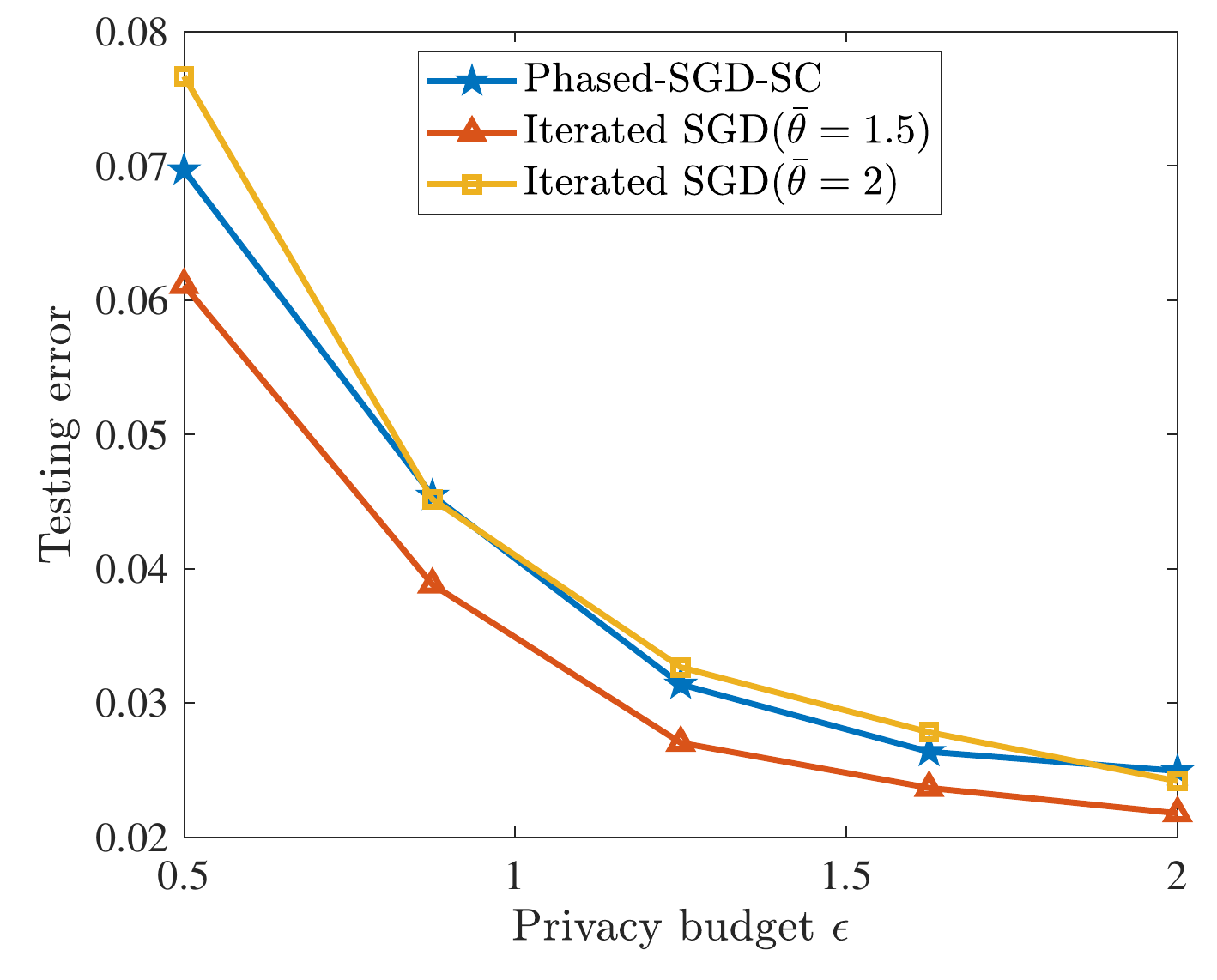}
}
\quad
\subfigure[a9a]{
\includegraphics[width=5.5cm]{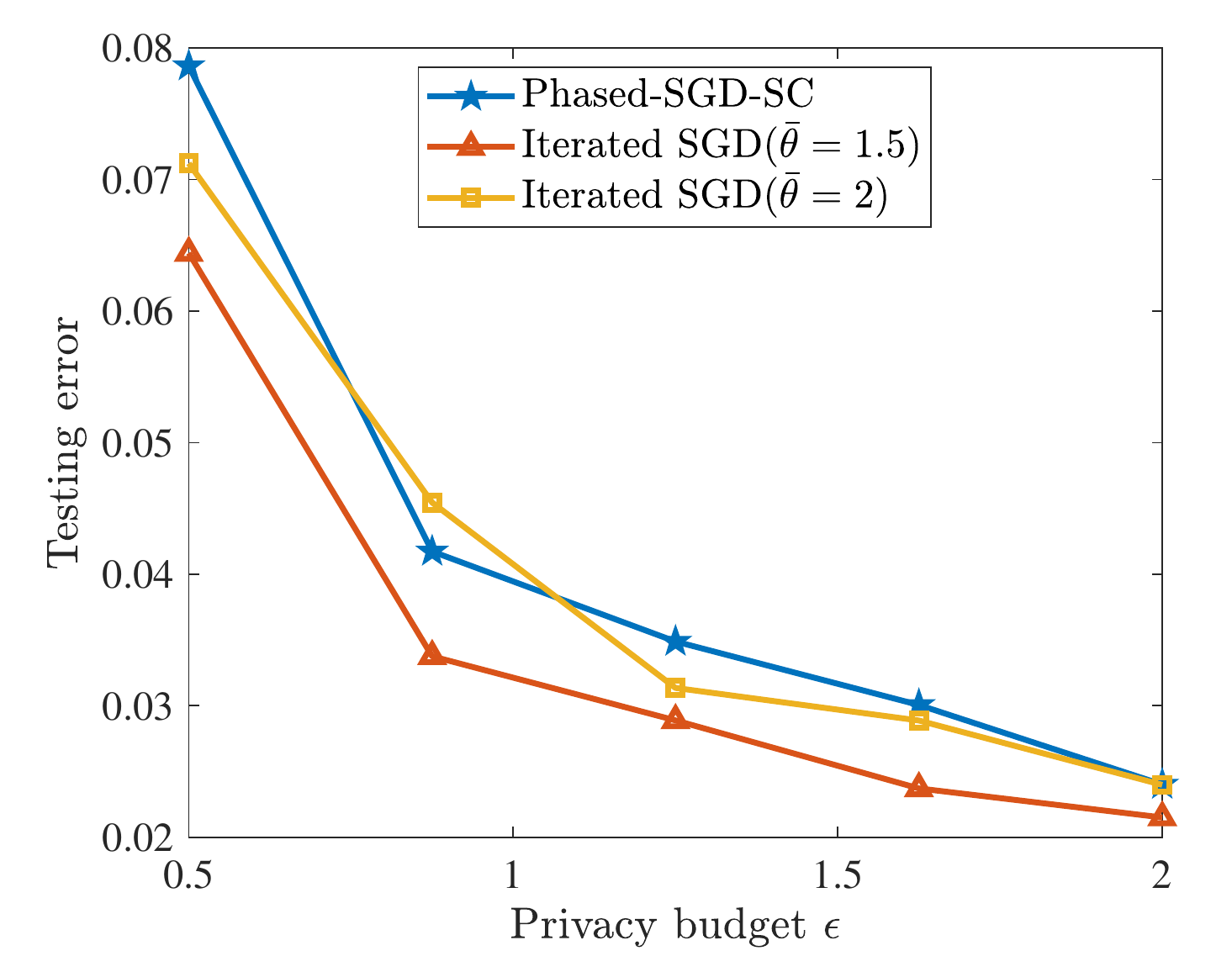}
}
\quad
\subfigure[ijcnn1]{
\includegraphics[width=5.5cm]{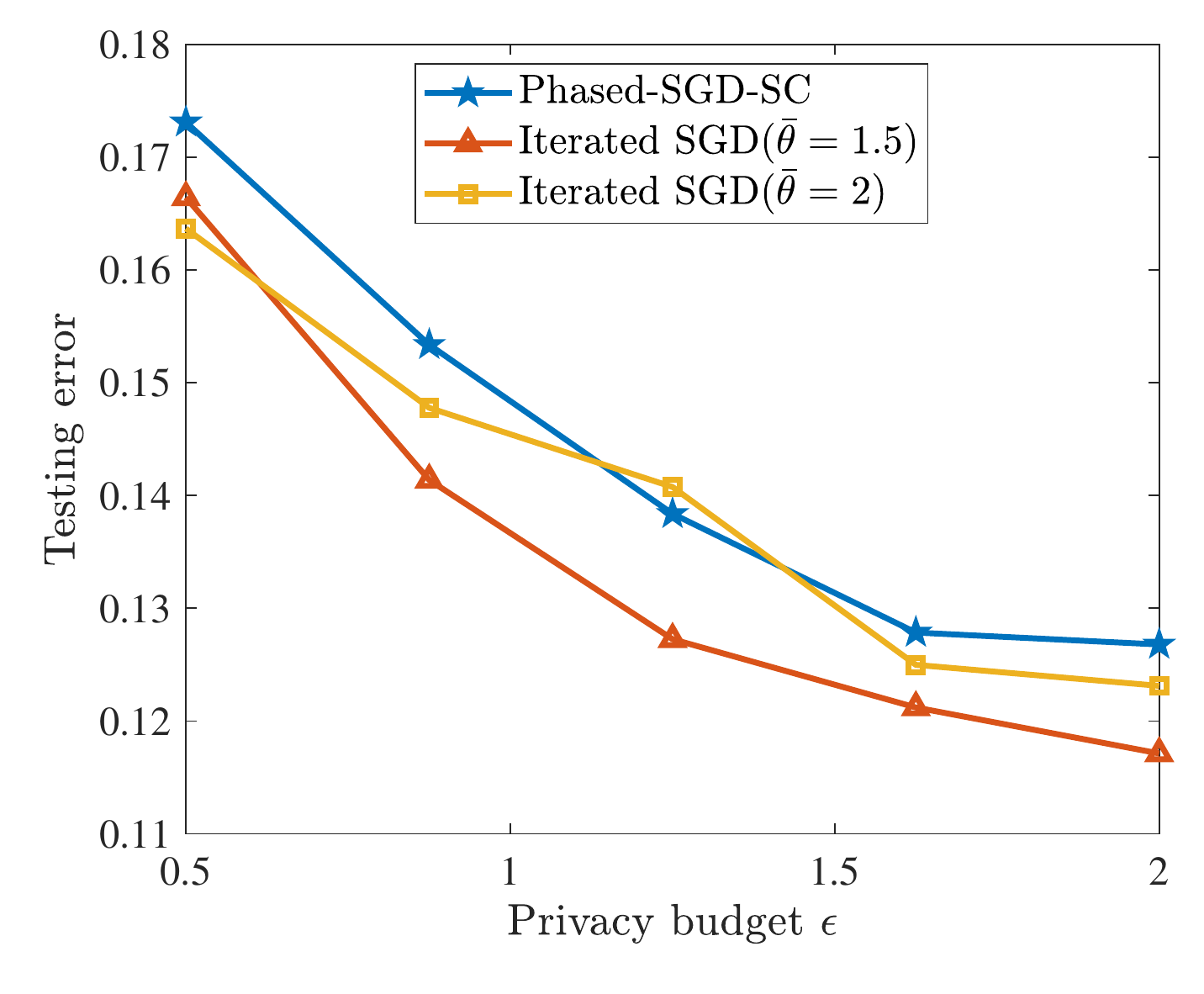}
}
\quad
\subfigure[w7a]{
\includegraphics[width=5.5cm]{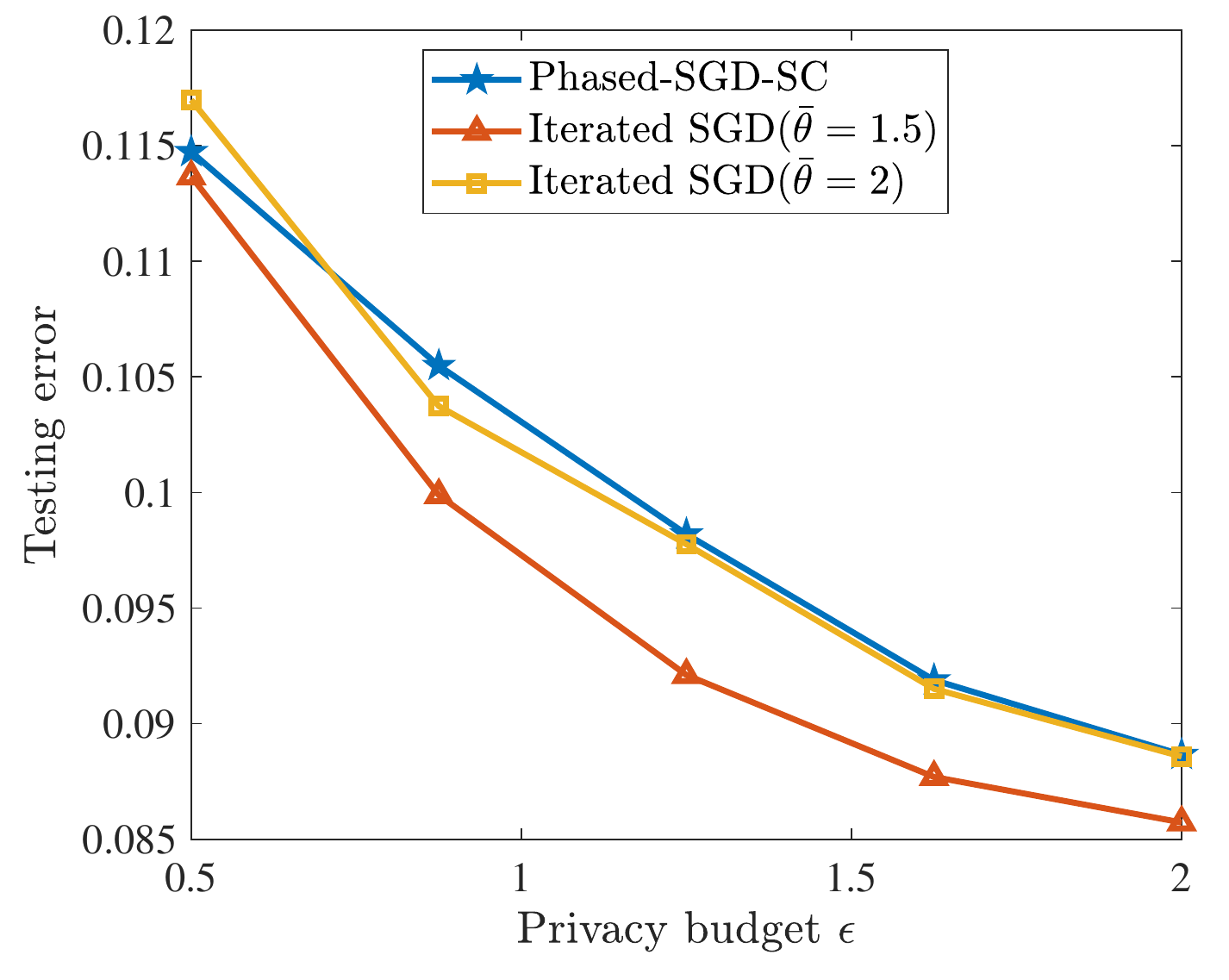}
}
\caption{ Results of $l_2$ regularized squared logistic regression problem (\ref{eq24}) with  different privacy budget $\epsilon$.\label{fig4}}
\end{figure*}

\section{Omitted Proofs}
\begin{proof}[{\bf Proof of Theorem \ref{thm:2}}]
For convenience here we only show the proof of $(\epsilon,\delta)$-DP. The proof of $\epsilon$-DP is almost the same by replacing the term $(\frac{1}{\sqrt{n}}+\frac{\sqrt{d\log(1/\delta)}}{\epsilon n})$ to $(\frac{1}{\sqrt{n}}+\frac{d}{n\epsilon})$ in the following proof. 

		The guarantee of $(\epsilon,\delta)$-DP is just followed by Lemma \ref{lemma:4} and the parallel theorem of Differential Privacy. In the following we will focus on the utility.

For simplicity, we denote $a(n)=10L\left(\frac{1}{\sqrt{n}}+\frac{\sqrt{d\log(1/\delta)}}{\epsilon n}\right)$. We set $\mu_0=2R_0^{1-\theta} a(n_0)$, $\mu_k=2^{(\theta-1)k}\mu_0$ and $R_k=\frac{R_0}{2^k}$, where $k=1,\cdots,m$.

Then we have $\mu_k\cdot R_k^{\theta}=2^{-k}\mu_0 R_0^{\theta}$.
We can also assume that $\lambda \leq \frac{L}{R_0^{\theta-1}}$, otherwise we can set $\lambda= \frac{L}{R_0^{\theta-1}}$, which makes TNC still hold.

Recall that $m=\lfloor \frac{1}{2}\log_2\frac{2n}{\log_2 n}\rfloor -1$, when $n\geq 256$, it follows that 
\begin{equation*}
0< \frac{1}{2}\log_2\frac{2n}{\log_2 n}-2\leq m\leq \frac{1}{2}\log_2\frac{2n}{\log_2 n}-1\leq \frac{1}{2}\log_2 n. 
\end{equation*}
 Thus, we have $2^m\geq \frac{1}{4}\sqrt{\frac{2n}{\log_2 n}}$.

Thus 		\begin{equation}\label{eq22}
\begin{aligned}
\mu_m=2^{(\theta-1)m} \mu_0&\geq 2^m\mu_0\\
&\geq \frac{1}{4}\sqrt{\frac{2n}{\log_2 n}}\cdot 2\cdot R_0^{1-\theta}a(n_0)\\
&=5\cdot L R_0^{1-\theta}\sqrt{\frac{2n}{\log_2 n}}\left(\frac{1}{\sqrt{\frac{n}{m}}}+\frac{\sqrt{d\log(1/\delta)}}{\epsilon\cdot\frac{n}{m}}\right)\\
&\geq 5\cdot L R_0^{1-\theta}\sqrt{\frac{2n}{\log_2 n}}\left(\frac{1}{\sqrt{\frac{2n}{\log_2 2n-\log_2\log_2 n-4}}}\right)\\
&=5\cdot L R_0^{1-\theta}\sqrt{\frac{\log_2 2n-\log\log_2 n-4}{\log_2 n}}\\&\geq L R_0^{1-\theta}~\left(\text{Since}~5\cdot\sqrt{\frac{\log_2 2n-\log\log_2 n-4}{\log_2 n}}\geq 1~ \text{when}~ n\geq 256\right)\\
&\geq \lambda~(\text{By assumption}).
\end{aligned}
\end{equation}
where the third inequality is given by throwing away the $\frac{\sqrt{d\log(1/\delta)}}{\epsilon\cdot\frac{n}{m}}$ term and substituting $ m$ in term $\frac{1}{\sqrt{\frac{n}{m}}}$ with $\frac{1}{2}\log_2\frac{2n}{\log_2 n}-2$.

Below, we consider the following two cases.
\begin{case}
	If $\lambda\geq \mu_0$, then $\mu_0\leq \lambda\leq \mu_m$. We have the following lemma.
	\begin{lemma}\label{lemma:12}
		Let $k^{*}$ satisfies $\mu_{k^{*}}\leq \lambda \leq 2^{\theta -1}\mu_{k^{*}}$, then for any $1\leq k\leq k^{*}$, the points $\{\hat{w}_k\}_{k=1}^{m}$ generated by Algorithm \ref{alg:3} satisfy
		\begin{equation}\label{eq15}
		\mathbb{E}[||\hat{w}_{k-1}-w^{*}||_2]\leq R_{k-1}=2^{-(k-1)}\cdot R_0,
		\end{equation}
		
			\begin{equation}\label{eq16}
		\mathbb{E}[F(\hat{w}_k)]-F(w^{*})\leq \mu_k R_k^{\theta} =2^{-k}\mu_0 R_0^{\theta}.
		\end{equation}
		Moreover, for $k\geq k^{*}$, we have 
			\begin{equation}\label{eq17}
		\mathbb{E}[F(\hat{w}_k)]-\mathbb{E}[F(\hat{w}_{k^{*}})]\leq \mu_{k^{*}}R_{k^{*}}^{\theta}.
		\end{equation}
	\end{lemma}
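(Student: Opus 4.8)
The plan is to establish (15), (16) and (17) simultaneously by induction on $k$, with the argument in the regime $k\le k^*$ (where TNC still forces contraction) being genuinely different from the one for $k\ge k^*$. Two elementary facts drive the bookkeeping: (i) plugging $\mu_0=2R_0^{1-\theta}a(n_0)$ and $R_k=R_0/2^k$ into the definitions gives the identity $a(n_0)\,R_{k-1}=\mu_kR_k^\theta=2^{-k}\mu_0R_0^\theta$; and (ii) since $\mu_k=2^{(\theta-1)k}\mu_0$ is increasing in $k$ and $k^*$ satisfies $\mu_{k^*}\le\lambda$, we have $\mu_k\le\lambda$ for every $k\le k^*$ --- this is essentially the \emph{only} place the hypothesis $k\le k^*$ enters. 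The induction shows, for each $k$ with $1\le k\le k^*$, that (16) holds at index $k$ and (15) holds at index $k+1$; the base case is $\hat w_0=w_1$ with $\|\hat w_0-w^*\|_2\le R_0$, which is (15) at $k=1$, and the step from index $k-1$ to $k$ uses (16) at index $k-1$ (for $k=1$ this is replaced by the deterministic bound $\|\hat w_0-w^*\|_2\le R_0$).

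\textbf{Inductive step for $k\le k^*$.} Given (15) at index $k$, stage $k$ of Algorithm~\ref{alg:3} runs Phased-SGD from $\hat w_{k-1}$ over $\mathcal{W}_k:=\mathcal{W}\cap\mathbb{B}(\hat w_{k-1},R_{k-1})$ with step size $\gamma_k$. Since $\hat w_{k-1}$ and $w^*_k:=\arg\min_{w\in\mathcal{W}_k}F(w)$ both lie in $\mathbb{B}(\hat w_{k-1},R_{k-1})$, the value $R_{k-1}$ is an admissible ``$D$'' in Lemma~\ref{lemma:4} and $\gamma_k$ is exactly its prescribed step size, so (using $\gamma_k\le 1/\beta$) conditionally on $\hat w_{k-1}$ we get $\mathbb{E}[F(\hat w_k)\mid\hat w_{k-1}]-F(w^*_k)\le a(n_0)R_{k-1}=\mu_kR_k^\theta$. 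It remains to bound $F(w^*_k)-F(w^*)$: if $\|\hat w_{k-1}-w^*\|_2\le R_{k-1}$ then $w^*\in\mathcal{W}_k$ (as $\mathcal{W}$ is convex and $w^*\in\mathcal{W}$), hence $F(w^*_k)=F(w^*)$; otherwise the point $u$ on the segment $[\hat w_{k-1},w^*]$ at distance $R_{k-1}$ from $\hat w_{k-1}$ lies in $\mathcal{W}_k$, and convexity gives $F(w^*_k)-F(w^*)\le F(u)-F(w^*)\le(1-R_{k-1}/\|\hat w_{k-1}-w^*\|_2)\,(F(\hat w_{k-1})-F(w^*))$. Taking expectations, the first case contributes zero; the second is controlled using (16) at index $k-1$ together with TNC --- which also gives $\mathbb{E}\|\hat w_{k-1}-w^*\|_2^\theta\le(\mu_{k-1}/\lambda)R_{k-1}^\theta\le R_{k-1}^\theta$, bounding the probability and size of the ``bad'' event $\{\|\hat w_{k-1}-w^*\|_2>R_{k-1}\}$. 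The upshot is $\mathbb{E}[F(w^*_k)-F(w^*)]=O(\mu_kR_k^\theta)$, and adding the two pieces gives (16) at index $k$ (the absolute constant being absorbed into the constant in $\mu_0$, which does not change the final $O(\cdot)$ rate).

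\textbf{Propagating (15), and the regime $k\ge k^*$.} Once (16) holds at index $k\le k^*$, TNC applied to $\hat w_k$ gives $\lambda\,\mathbb{E}\|\hat w_k-w^*\|_2^\theta\le\mathbb{E}[F(\hat w_k)]-F(w^*)\le\mu_kR_k^\theta$, so $\mu_k\le\lambda$ yields $\mathbb{E}\|\hat w_k-w^*\|_2^\theta\le R_k^\theta$, and Jensen's inequality (concavity of $t\mapsto t^{1/\theta}$) gives $\mathbb{E}\|\hat w_k-w^*\|_2\le R_k$, i.e. (15) at index $k+1$, closing the induction. For $k\ge k^*$ (where TNC no longer helps since $\mu_k>\lambda$) we argue crudely: $\hat w_k\in\mathcal{W}_{k+1}$ always, so $\min_{w\in\mathcal{W}_{k+1}}F(w)\le F(\hat w_k)$ and Lemma~\ref{lemma:4} gives $\mathbb{E}[F(\hat w_{k+1})\mid\hat w_k]\le F(\hat w_k)+a(n_0)R_k$; telescoping from $k^*$ and using $a(n_0)R_j=\mu_{j+1}R_{j+1}^\theta=2^{-(j+1)}\mu_0R_0^\theta$ gives $\mathbb{E}[F(\hat w_k)]-\mathbb{E}[F(\hat w_{k^*})]\le\sum_{j\ge k^*}a(n_0)R_j=2\,a(n_0)R_{k^*}=\mu_{k^*}R_{k^*}^\theta$, which is (17).

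\textbf{Expected main obstacle.} Everything above is either a direct use of Lemma~\ref{lemma:4} or an elementary manipulation of the geometric sequences $\{R_k\}$ and $\{\mu_k\}$, except the bound on $\mathbb{E}[F(w^*_k)-F(w^*)]$. Because the localized set $\mathcal{W}_k=\mathcal{W}\cap\mathbb{B}(\hat w_{k-1},R_{k-1})$ need not contain $w^*$ with probability one, one has to convert the \emph{in-expectation} localization $\mathbb{E}\|\hat w_{k-1}-w^*\|_2\le R_{k-1}$ (and the $\theta$-th moment estimate it gives via TNC) into a deviation of the ball-constrained minimizer from $F(w^*)$ that is of the \emph{same order} as the optimization error $a(n_0)R_{k-1}=\mu_kR_k^\theta$; this is where convexity of $F$, Lipschitzness, and the TNC growth condition must be combined carefully, and where the precise constants in $\mu_0$ and in the cutoff defining $k^*$ really matter.
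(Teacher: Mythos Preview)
Your overall architecture---induction on $k$ combining Lemma~\ref{lemma:4} with TNC for $k\le k^*$, then a telescoping argument using only $\hat w_{k-1}\in\mathcal W_k$ for $k\ge k^*$---matches the paper's proof essentially step for step, including the algebraic identity $a(n_0)R_{k-1}=\mu_kR_k^\theta=2^{-k}\mu_0R_0^\theta$ and the geometric sum giving~(\ref{eq17}).

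The one substantive difference is precisely your ``expected main obstacle.'' The paper does \emph{not} handle it: from the inductive hypothesis $\mathbb{E}\|\hat w_{k-1}-w^*\|_2\le R_{k-1}$ it simply writes
\[
\mathbb{E}[F(\hat w_k)]-F(w^*)\le 10LR_{k-1}\Bigl(\tfrac{1}{\sqrt{n_0}}+\tfrac{\sqrt{d\log(1/\delta)}}{\epsilon n_0}\Bigr)=a(n_0)R_{k-1},
\]
i.e.\ it applies Lemma~\ref{lemma:4} with $D=R_{k-1}$ and silently replaces the constrained minimum $\min_{w\in\mathcal W_k}F(w)$ by $F(w^*)$, as if $w^*\in\mathcal W_k$ held with probability one. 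There is no separate treatment of the event $\{\|\hat w_{k-1}-w^*\|_2>R_{k-1}\}$, no bound on $\mathbb{E}[F(w^*_k)-F(w^*)]$, and no discussion of the kind you propose. In other words, the issue you flag is a genuine gap that the paper glosses over rather than resolves; your convex-interpolation-plus-moment-control sketch is already more careful than the published argument. If your goal is merely to reproduce the paper's proof, you can drop that entire piece and invoke Lemma~\ref{lemma:4} directly; if your goal is a rigorous proof, you will indeed need to complete the bound on $\mathbb{E}[F(w^*_k)-F(w^*)]$ (and be prepared to lose a constant factor, which---as you note---is harmless for the final $O(\cdot)$ rate in Theorem~\ref{thm:2}).
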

\begin{proof}[Proof of Lemma \ref{lemma:12}]
We prove (\ref{eq15}), (\ref{eq16}) by induction. Note that (\ref{eq15}) holds for $k=1$.

Assume (\ref{eq15}) is true for some $k>1$, then we have
	\begin{equation}
\begin{aligned}
\mathbb{E}[F(\hat{w}_k)]-F(w^{*})&\leq 10 R_{k-1}\cdot L\left(\frac{1}{\sqrt{n_0}}+\frac{\sqrt{d\log(1/\delta)}}{\epsilon\cdot n_0}\right)\\
&=R_{k-1} a(n_0)\\
&=\frac{1}{2}\mu_k 2^{(1-\theta)k} R_0^{\theta-1} R_{k-1}\\
&=\mu_k R_k^{\theta}
\end{aligned}
\end{equation}
Which is (\ref{eq16}). By the definition of TNC,  we have 
	\begin{equation}
\begin{aligned}
\mathbb{E}||\hat{w}_k-w^{*}||_2^{\theta}&\leq \frac{1}{\lambda} (\mathbb{E}[F(\hat{w}_k)]-F(w^{*}))\\
&\leq \frac{\mathbb{E}[F(\hat{w}_k)]-F(w^{*})}{\mu_{k^{*}}}\\&\leq \frac{\mu_k R_k^{\theta}}{\mu_{k^{*}}}\leq R_k^{\theta}
\end{aligned}
\end{equation}
Thus (\ref{eq15}) is true for $k+1$.

Now we prove (\ref{eq17}). 
Referring to Lemma \ref{lemma:4}, we know that 
	\begin{equation*}
\begin{aligned}
\mathbb{E}[F(\hat{w}_k)]-\mathbb{E}[F(\hat{w}_{k-1})]
&\leq R_{k-1} \cdot a(n_0)\\
&=2^{k^{*}-k} R_{k^{*}-1} a(n_0)\\
&=2^{k^{*}-k} \mu_{k^{*}}R_{k^{*}}^{\theta}\\
&=\mu_k R_k^{\theta}
\end{aligned}
\end{equation*}
Thus, for $k>k^{*}$,
	\begin{equation*}
\begin{aligned}
\mathbb{E}[F(\hat{w}_k)]-\mathbb{E}[F(\hat{w}_{k^{*}})]
&=\sum\limits_{j=k^{*}+1}^{k}
(\mathbb{E}[F(\hat{w}_j)]-\mathbb{E}[F(\hat{w}_{j-1})])\\
&\leq \sum\limits_{j=k^{*}+1}^{k}2^{k^{*}-j}\mu_{k^{*}} R_{k^{*}}^{\theta}\\
&=(1-2^{k^{*}-k})\mu_{k^{*}}R_{k^{*}}^{\theta}\\
&\leq \mu_{k^{*}}R_{k^{*}}^{\theta}
\end{aligned}
\end{equation*}
Here completes the proof of the lemma.
\end{proof}

Now we proceed to prove theorem \ref{thm:2} in this case.
	\begin{equation}
\begin{aligned}
\mathbb{E}[F(\hat{w}_m)]-F(w^{*})&=(\mathbb{E}[F(\hat{w}_m)]-\mathbb{E}[F(\hat{w}_{k^{*}})])+(\mathbb{E}[F(\hat{w}_{k^{*}})]-F(w^{*}))\\
&\leq 2\mu_{k^{*}} R_{k^{*}}^{\theta}\\
&\leq 4\left(\frac{\mu_{k^{*}}}{\lambda}\right)^{\frac{1}{\theta-1}}\mu_{k^{*}}R_{k^{*}}^{\theta}~(\text{Since} \left(\frac{\mu_{k^{*}}}{\lambda}\right)^{\frac{1}{\theta-1}}\geq \frac{1}{2})\\
&=4\left(\frac{2^{(\theta-1)k^{*}}\mu_0}{\lambda}\right)^{\frac{1}{\theta-1}}\mu_{k^{*}}R_{k^{*}}^{\theta}\\
&=4(2^{k^{*}}\mu_{k^{*}} R_{k^{*}}^{\theta}\mu_0^{\frac{1}{\theta -1}}\left(\frac{1}{\lambda}\right)^{\frac{1}{\theta-1}}  )\\
&=4(\mu_0 R_0^{\theta}\mu_0^{\frac{1}{\theta -1}}\left(\frac{1}{\lambda}\right)^{\frac{1}{\theta-1}})\\
&=4( R_0^{\theta}\mu_0^{\frac{\theta}{\theta -1}}\left(\frac{1}{\lambda}\right)^{\frac{1}{\theta-1}})\\
&=4\cdot((2\cdot a(n_0))^{\frac{\theta}{\theta -1}}\left(\frac{1}{\lambda}\right)^{\frac{1}{\theta-1}})
\\&=4\cdot\left(\frac{1}{\lambda}\right)^{\frac{1}{\theta-1}}\cdot\left(20L\left(\frac{\sqrt{m}}{\sqrt{n}}+\frac{m}{n}\frac{\sqrt{d\log(1/\delta)}}{\epsilon}\right)\right)^{\frac{\theta}{\theta -1}}
\end{aligned}
\end{equation}
where $m=O(\log_2 n)$.(Recall that $m\leq \frac{1}{2}\log_2 n$).
\end{case}
\begin{case}
If $\lambda<\mu_0$, then 
	\begin{equation*}
\begin{aligned}
\mathbb{E}[F(\hat{w}_1)]-F(w^{*})&\leq R_0a(n_0)\\&=\left(\frac{2}{\mu_0}\right)^{\frac{1}{\theta-1}}\cdot a(n_0)^{\frac{\theta}{\theta-1}}\\&<\left(\frac{2}{\lambda}\right)^{\frac{1}{\theta-1}}\cdot a(n_0)^{\frac{\theta}{\theta-1}}
\end{aligned}
\end{equation*}
Also, we have 
	\begin{equation*}
\begin{aligned}
\mathbb{E}[F(\hat{w}_m)]-\mathbb{E}[F(\hat{w}_1)]
&=\sum\limits_{j=2}^{m}
(\mathbb{E}[F(\hat{w}_j)]-\mathbb{E}[F(\hat{w}_{j-1})])\\
&\leq \sum\limits_{j=2}^{m}R_{j-1}\cdot a(n_0)\\
&=\sum\limits_{j=2}^{m}2^{-(j-1)}R_0\cdot a(n_0)\\&=(1-(1/2)^{m-1}) R_0 \cdot a(n_0)<R_0\cdot  a(n_0)
\end{aligned}
\end{equation*}

By a similar argument process as in \textbf{Case 1}, we have 
	\begin{equation}
\begin{aligned}
\mathbb{E}[F(\hat{w}_m)]-F(w^{*})&=(\mathbb{E}[F(\hat{w}_m)]-\mathbb{E}[F(\hat{w}_1)])+(\mathbb{E}[F(\hat{w}_1)]-F(w^{*}))\\
&\leq 2R_0 a(n_0)\leq 2\left(\frac{2}{\lambda}\right)^{\frac{1}{\theta-1}}\cdot a(n_0)^{\frac{\theta}{\theta-1}}
\\&=2\cdot\left(\frac{2}{\lambda}\right)^{\frac{1}{\theta-1}}\cdot\left(10L\left(\frac{\sqrt{m}}{\sqrt{n}}+\frac{m}{n}\frac{\sqrt{d\log(1/\delta)}}{\epsilon}\right)\right)^{\frac{\theta}{\theta -1}}
\end{aligned}
\end{equation}
\end{case}
Combining the two cases, we conclude that 
	\begin{equation*}
\begin{aligned}
\mathbb{E}[F(\hat{w}_m)]-F(w^*)\leq O\left(\left(\frac{L^\theta}{\lambda}\right)^{\frac{1}{\theta-1}}\cdot\left(\frac{\sqrt{\log n}}{\sqrt{n}}+\frac{\sqrt{d\log(1/\delta)}\cdot\log n}{n\epsilon}\right)^{\frac{\theta}{\theta-1}}\right)
\end{aligned}
\end{equation*}

\end{proof}
	\begin{proof}[ {\bf Proof of Theorem \ref{new:th1}}]
	 	  Before our proof, we provide some notations. We denote $F^*=\min_{w\in\mathcal{W}}F(w)$. For a given error $\rho$, we denote $\mathcal{L}_\rho$ the $\rho$-level set of  function $F(W)$ and $\mathcal{S}_\rho$  the $\rho$-sublevel set $F(w)$, respectively, {\em i.e.,} $\mathcal{L}_\rho=\{w\in \mathcal{W}: F(w)=F^*+\rho\}$, $\mathcal{S}_\rho=\{w\in\mathcal{W}: F(w)\leq F^*+\rho\}$. For any $w\in \mathcal{W}$, we denote $w^{+}_\rho$ as the closet point in the $\rho$-sublevel set to $w$, {\em i.e.,} 
	  \begin{equation*}
	      w^{+}_\rho=\arg\min_{v\in \mathcal{S}_\rho}\|v-w\|_2^2. 
	  \end{equation*}
	    Using the KKT condition, it is easy to check that when $w\not\in \mathcal{S}_\rho$ then $w^{+}_\rho \in \mathcal{L}_\rho$. We first recall the following lemma, given by \citep{yang2018rsg}. 
	    \begin{lemma}[Lemma 1 in \citep{yang2018rsg}]\label{lemma:15}
	     For any $w\in\mathcal{W}$ and $\rho>0$ we have 
	     \begin{equation*}
	         \|w-w^{+}_\rho\|_2\leq \frac{\text{dist}(w^{+}_\rho, \mathcal{W}_*) }{\rho} (F(w)-F(w^+_\rho)),
	     \end{equation*}
	     	    where $\mathcal{W}_*=\{w: w\in \arg\min_{w\in \mathcal{W}}F(w)\}$ and $\text{dist}(w^{+}_\rho, \mathcal{W}_*)$ is the distance from the point $w^+_\rho$ to the set $ \mathcal{W}_*$. 
	    \end{lemma}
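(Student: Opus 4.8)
The plan is to prove the inequality by a case split, treating the trivial case directly and, in the nontrivial case, combining two facts about the projection $w^+_\rho$: a first-order (KKT) description of the direction $w-w^+_\rho$ in terms of a subgradient of $F$ at $w^+_\rho$, together with a lower bound on the norm of that subgradient obtained from convexity and the distance to the optimal set. First I would dispose of the case $w\in\mathcal{S}_\rho$: then $w^+_\rho=w$, both sides of the claimed inequality vanish, and there is nothing to prove. So assume $w\notin\mathcal{S}_\rho$; as already noted in the excerpt via the KKT condition, in this case $w^+_\rho\in\mathcal{L}_\rho$, i.e. $F(w^+_\rho)=F^*+\rho$. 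Write $v:=w^+_\rho$ for brevity.

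The alignment step. Since $v$ minimizes $\|u-w\|_2^2$ over the convex set $\mathcal{S}_\rho=\{u\in\mathcal{W}:F(u)\le F^*+\rho\}$ and, because $w\notin\mathcal{S}_\rho$, the constraint $F(u)\le F^*+\rho$ is active at $v$, the optimality (normal-cone) condition produces a subgradient $g\in\partial F(v)$ and a scalar $c\ge 0$ with $w-v=c\,g$ (the normal-cone contribution of the domain constraint $\mathcal{W}$ pairs nonpositively with $w-v$, since $\langle \nu,w-v\rangle\le 0$ for $\nu\in N_{\mathcal{W}}(v)$ and $w\in\mathcal{W}$, so it only helps). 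Consequently $\|w-v\|_2=c\|g\|_2$, and the subgradient inequality $F(w)\ge F(v)+\langle g,w-v\rangle$ gives
\[
c\|g\|_2^2=\langle g,w-v\rangle\le F(w)-F(v).
\]

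The subgradient lower bound and conclusion. Let $u^*\in\mathcal{W}_*$ attain $\|v-u^*\|_2=\text{dist}(v,\mathcal{W}_*)$. Convexity of $F$ gives $F(u^*)\ge F(v)+\langle g,u^*-v\rangle$, hence $\langle g,v-u^*\rangle\ge F(v)-F(u^*)=(F^*+\rho)-F^*=\rho$; by Cauchy--Schwarz, $\rho\le\|g\|_2\,\text{dist}(v,\mathcal{W}_*)$, i.e. $\|g\|_2\ge \rho/\text{dist}(v,\mathcal{W}_*)$. Substituting $\|w-v\|_2=c\|g\|_2$ into the previous display and then invoking this lower bound yields $F(w)-F(v)\ge c\|g\|_2^2=\|w-v\|_2\,\|g\|_2\ge \|w-v\|_2\,\rho/\text{dist}(v,\mathcal{W}_*)$, and rearranging produces exactly the claimed bound $\|w-v\|_2\le \text{dist}(v,\mathcal{W}_*)\,(F(w)-F(v))/\rho$.

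The step I expect to be the main obstacle is making the alignment $w-v=c\,g$ fully rigorous in the presence of the domain constraint $\mathcal{W}$: this needs a constraint qualification so that the normal cone of $\mathcal{S}_\rho$ at $v$ decomposes as $\mathrm{cone}(\partial F(v))+N_{\mathcal{W}}(v)$, together with the argument (sketched above) that the $N_{\mathcal{W}}(v)$ component contributes nonpositively. One also needs $F$ to be subdifferentiable at $v$, which holds here since $F$ is convex (indeed smooth in our applications, in which case $g=\nabla F(v)$ and the alignment is just the Lagrange condition for the active inequality constraint). The remaining steps are routine applications of the convex subgradient inequality and Cauchy--Schwarz.
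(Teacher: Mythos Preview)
The paper does not actually prove this lemma: it is stated as ``Lemma 1 in \citep{yang2018rsg}'' and simply recalled from that reference without proof. So there is no in-paper proof to compare against.

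Your argument is essentially the standard one from Yang and Lin, and it is correct. One small clarification on the alignment step: in general the KKT condition gives $w-v=c\,g+\nu$ with $\nu\in N_{\mathcal{W}}(v)$, not exactly $w-v=c\,g$; the clean way to handle this is not to insist on $\|w-v\|_2=c\|g\|_2$, but rather to (i) pair $w-v=c\,g+\nu$ with $w-v$ to get $\|w-v\|_2^2\le c\,\langle g,w-v\rangle\le c\,(F(w)-F(v))$, and (ii) pair $w-v=c\,g+\nu$ with $u^*-v$ (where $u^*$ realizes $\text{dist}(v,\mathcal{W}_*)$) to get $c\rho\le \langle w-v,v-u^*\rangle\le \|w-v\|_2\,\text{dist}(v,\mathcal{W}_*)$, using $\langle g,u^*-v\rangle\le -\rho$ and $\langle\nu,u^*-v\rangle\le 0$. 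Combining (i) and (ii) and dividing by $\|w-v\|_2$ gives the claim. This is exactly your ``it only helps'' intuition made precise, and it removes the need for the equality $\|w-v\|_2=c\|g\|_2$.
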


	    \begin{lemma}\label{lemma:14}
	    If $f(\cdot, x)$ is  convex, $\beta$-smooth and $L$-Lipschitz for each $x$ and $\gamma \geq \frac{\|\mathcal{W}\|_2}{L}$, where $\|\mathcal{W}\|_2$ is the diameter of the set $\mathcal{W}$, {\em i.e.,} $\|\mathcal{W}\|_2=\max_{w, w'\in \mathcal{W}}\|w-w'\|_2$.  
	    Based on different noises and stepsizes in Algorithm \ref{alg:new},  Algorithm \ref{alg:new} is $(\epsilon, \delta)$ or $\epsilon$-DP if $\eta\leq \frac{1}{\beta}$. 
	   Given $w_0\in \mathcal{W}$, for the output $w_k$ in Algorithm \ref{alg:new}. In the case of $(\epsilon, \delta)$-DP, we have 
	    \begin{equation*}
	\mathbb{E}[\hat{F}(w_k)]-\min_{w\in \mathcal{W}} \hat{F}(w)\leq 3200L^2\gamma (\frac{1}{n}+\frac{d\log(1/\delta)}{n^2\epsilon^2 }). 
	    \end{equation*}
	  In the case of $\epsilon$-DP, we have 
	  	    \begin{equation*}
	\mathbb{E}[\hat{F}(w_k)]-\min_{w\in \mathcal{W}} \hat{F}(w)\leq 3200L^2\gamma (\frac{1}{n}+\frac{d^2}{n^2\epsilon^2 }), 
	    \end{equation*}
	    where $\hat{F}(w)=F(w)+\frac{1}{2\gamma}||w-w_0||_2^2$ and $w_0$ is the initial point.
	    \end{lemma}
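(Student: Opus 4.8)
The plan is to treat $\hat F(w)=F(w)+\frac{1}{2\gamma}\|w-w_0\|_2^2$ as a $\tfrac1\gamma$-strongly convex auxiliary objective and to analyze Algorithm \ref{alg:new} as the strongly-convex localization scheme of \citep{feldman2020private} applied to it. First I would record the elementary properties of the regularized per-sample loss $\hat f(w,x)=f(w,x)+\frac{1}{2\gamma}\|w-w_0\|_2^2$: it is convex, $(\beta+\tfrac1\gamma)$-smooth, and $2L$-Lipschitz on $\mathcal W$, the last because $\|\nabla\hat f(w,x)\|_2\le L+\tfrac1\gamma\|w-w_0\|_2\le L+\tfrac{\|\mathcal W\|_2}{\gamma}\le 2L$ by the hypothesis $\gamma\ge\|\mathcal W\|_2/L$; for $n$ large enough the step sizes $\eta_t$ shrink below $\tfrac{1}{\beta+1/\gamma}$, so Lemma \ref{lemma:4} applies within each phase. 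Privacy is then immediate: the $k=\lceil\log\log n\rceil$ phases of Algorithm \ref{alg:new} operate on disjoint subsets, each phase is an $(\epsilon,\delta)$-DP (resp.\ $\epsilon$-DP, for $\epsilon\le 2\log(1/\delta)$) run of Phased-SGD (Algorithm \ref{alg:1}) by Lemma \ref{lemma:4}, and the parallel composition of differential privacy gives the overall guarantee.

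For the utility bound I would set up a square-root recursion driven by strong convexity. Write $\hat w^*=\arg\min_{w\in\mathcal W}\hat F(w)$, $\Delta_t=\mathbb E[\hat F(w_t)]-\hat F(\hat w^*)$, and $a_t=\tfrac{1}{\sqrt{n_t}}+\tfrac{\sqrt{d\log(1/\delta)}}{\epsilon n_t}$ (with $\tfrac{d}{\epsilon n_t}$ in place of the second term in the $\epsilon$-DP case). Since $\hat F$ is $\tfrac1\gamma$-strongly convex, (\ref{eq:1}) gives $\|w-\hat w^*\|_2^2\le 2\gamma(\hat F(w)-\hat F(\hat w^*))$ for every $w\in\mathcal W$, hence $\mathbb E\|w_{t-1}-\hat w^*\|_2\le\sqrt{2\gamma\Delta_{t-1}}$ by Jensen. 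Applying Lemma \ref{lemma:4} to phase $t$ warm-started at $w_{t-1}$ — with the step size calibrated, through the localization schedule, to this in-expectation radius and to the Lipschitz constant $2L$ — and taking expectations yields the contraction $\Delta_t\le 20L\sqrt{2\gamma\Delta_{t-1}}\,a_t = C\,a_t\sqrt{\Delta_{t-1}}$ with $C=20\sqrt2\,L\sqrt\gamma$, whose fixed point is of order $C^2a_t^2=\Theta(L^2\gamma a_t^2)$. Because $k=\lceil\log\log n\rceil$ forces $2^k\ge\log n$, the last block has $n_k=\lfloor 2^{k-2}n/\log n\rfloor\ge n/4$, so $a_k^2=O(\tfrac1n+\tfrac{d\log(1/\delta)}{n^2\epsilon^2})$ — exactly the target order.

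It then remains to unroll the recursion. Taking $\Delta_0=\hat F(w_0)-\hat F(\hat w^*)\le F(w_0)-F(w^*)\le L\|\mathcal W\|_2$ as the starting value and exploiting the doubling schedule $n_t=\lfloor 2^{t-2}n/\log n\rfloor$ (so the dominant $1/\sqrt{n_t}$ part of $a_t$ decays geometrically), the recursion collapses like $\log(\Delta_k/\Delta^*)\approx 2^{-k}\log(\Delta_0/\Delta^*)$; since the $\gamma$ factors cancel in $\Delta_0/\Delta^*$, this ratio is polynomial in $n$ with exponent independent of $\gamma$, so $2^{-k}\log(\Delta_0/\Delta^*)=O(\log n/\log n)=O(1)$ and $\Delta_k=O(\Delta^*)=O\big(L^2\gamma(\tfrac1n+\tfrac{d\log(1/\delta)}{n^2\epsilon^2})\big)$; carrying the explicit constants (the $10$ in Lemma \ref{lemma:4}, the factor $2L$, and $n_k\ge n/4$) produces the stated $3200$. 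The $\epsilon$-DP statement follows identically after replacing every $\tfrac{\sqrt{d\log(1/\delta)}}{\epsilon n_t}$ by $\tfrac{d}{\epsilon n_t}$ (Laplacian rather than Gaussian mechanism in each phase).

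The hard part will be the bookkeeping of the square-root recursion with the phase-dependent sizes $n_t$, together with the measure-theoretic subtlety that each inner Phased-SGD call is executed with a step size tuned to a deterministic radius while the distance $\|w_{t-1}-\hat w^*\|_2$ that actually controls its error is random: this is resolved by conditioning on $w_{t-1}$ and invoking strong convexity to certify the geometrically shrinking radius as a valid in-expectation bound at every stage. It is also precisely here that the $\lceil\log\log n\rceil$ phase count is forced — just enough iterations of a square-root contraction to bring the initial, $\mathrm{poly}(n)$-sized ratio $\Delta_0/\Delta^*$ down to a constant; the remaining estimates are routine.
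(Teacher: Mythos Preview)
Your approach is essentially the same as the paper's: both observe that the regularized per-sample loss $\hat f(w,x)=f(w,x)+\tfrac{1}{2\gamma}\|w-w_0\|_2^2$ is $2L$-Lipschitz (via $\gamma\ge\|\mathcal W\|_2/L$), $(\beta+\tfrac1\gamma)$-smooth, and $\tfrac1\gamma$-strongly convex, and then invoke the strongly-convex localization analysis of \citep{feldman2020private}. The only difference is packaging: the paper dispatches the entire utility argument in one line by citing Theorem~5.1 of \citep{feldman2020private} as a black box, whereas you reconstruct that theorem's proof (the square-root recursion $\Delta_t\lesssim L\sqrt{\gamma\Delta_{t-1}}\,a_t$ and the $\lceil\log\log n\rceil$-phase unrolling) from Lemma~\ref{lemma:4}. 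Your reconstruction is correct in structure and recovers the same rate; the paper's route is shorter but less self-contained.
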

	    \begin{proof}[{\bf Proof of Lemma \ref{lemma:14}}]
	    We can see the regularized  function of $\hat{F}(w)$ as a population risk with loss function $\tilde{f}(w, x)= f(w, x)+\frac{1}{2\gamma}\|w-w_0\|_2^2$. Thus, by the assumption of $f(\cdot, x)$, we have $\tilde{f}(\cdot, x)$ is $L+\frac{\|\mathcal{W}\|_2}{\gamma}\leq 2L$-Lipschitz, $\beta+\frac{1}{\gamma}$-smooth and $\frac{1}{\gamma}$-strongly convex. Thus, by Theorem 5.1 in \citep{feldman2020private} we have the results. 
	    
	    \end{proof}
	 Next we start our proof.  For convenience here we only focus on $(\epsilon, \delta)$-DP, the proof of $\epsilon$-DP is almost the same. The guarantee of $(\epsilon, \delta)$-DP is simply followed by Lemma \ref{lemma:14}.  We also note that Lemma \ref{lemma:14} implies that for any $w\in \mathcal{W}$,
	  \begin{equation}\label{eq21}
	\mathbb{E}[{F}(w_k)]- {F}(w)\leq \frac{1}{2\gamma}||w-w_0||_2^2+3200L^2\gamma (\frac{1}{n}+\frac{d\log(1/\delta)}{n^2\epsilon^2 }). 
	    \end{equation}
    We denote $\rho=(\frac{8\times 3200 L^2}{\lambda^\frac{2}{\theta}} (\frac{1}{n_0}+\frac{d\log(1/\delta)}{n_0^2\epsilon^2}))^\frac{\theta}{2(\theta-1)}$, $\chi_k=\frac{\chi_0}{2^k}$ and $\gamma_k=\frac{\gamma_0}{2^k}$. Then we have 
    \begin{equation}\label{eq:22}
        \frac{1}{\gamma_0}=\frac{\lambda^\frac{2}{\theta}}{4 \chi_0} \rho^ \frac{2(\theta-1)}{\theta}=\frac{2^{k-2}\lambda^\frac{2}{\theta}}{ \chi_k} \rho^ \frac{2(\theta-1)}{\theta} . 
    \end{equation}
    
     We assume that for all $i\in \{0, 1 \cdots, m-1\}$, $\mathbb{E}[F(w_i)]-F^* > 2\rho$. Otherwise we have proved the theorem.

	   We will show by induction that 
	   \begin{equation}\label{eq:23}
	   \mathbb{E}[F(w_k)]-F^*\leq \chi_k+\rho.
	   \end{equation}
	   If this is true then when $w=m$ we have 
	   \begin{equation*}
	      \mathbb{E}[F(w_k)]-F^*\leq O\left(  (\frac{ L^2}{\lambda^\frac{2}{\theta}} (\frac{1}{n_0}+\frac{d\log(1/\delta)}{n_0^2\epsilon^2}))^\frac{\theta}{2(\theta-1)}\right).  
	   \end{equation*}
	   In the following we will show (\ref{eq:23}). For $k=0$, by the definition of $\chi$, it is true. Now, consider the $k$-th phase. By (\ref{eq21}) we have 
	   \begin{equation*}
	       \mathbb{E}[F(w_k)-F(w^+_{k-1, \rho})]\leq \underbrace{\frac{1}{2\gamma_k}\mathbb{E}\|w^+_{k-1, \rho}-w_{k-1}\|_2^2}_{A}+ \underbrace{3200L^2\gamma_k (\frac{1}{n_0}+\frac{d\log(1/\delta)}{n_0^2\epsilon^2 })}_{B}. 
	   \end{equation*}
	   Since $w_{k-1}\not\in \mathcal{S}_\rho$, $w^+_{k-1, \rho} \in \mathcal{L}_\rho$. Moreover, since we have $\mathbb{E} [F(w_{k-1})]-F(w^*)\leq  \chi_{k-1}+\rho$, we have $\mathbb{E} [F(w_{k-1})]-\mathbb{E} [F^+(w_{k-1, \rho})]\leq \chi_k$. For term $A$, by Lemma \ref{lemma:15} we have 
	   \begin{equation*}
	      \mathbb{E} \|w^+_{k-1, \rho}-w_{k-1}\|_2 \leq \frac{1}{\lambda^\frac{1}{\theta}\rho^{1-\frac{1}{\theta}}}\chi_{k-1}.  
	   \end{equation*}
	   Thus, 
	   \begin{align*}
	       \frac{1}{2\gamma_k}\mathbb{E}\|w^+_{k-1, \rho}-w_{k-1}\|_2^2\leq   \frac{1}{2\gamma_k} (\frac{1}{\lambda^\frac{2}{\theta}\rho^{\frac{2(\theta-1)}{\theta}}}\chi^2_{k-1})= \frac{\chi_{k-1}}{4}, 
	   \end{align*}
	  where the last equality is due to (\ref{eq:22}). 
	  
	  For term $B$, we have 
	  \begin{align*}
	    3200L^2\gamma_k (\frac{1}{n_0}+\frac{d\log(1/\delta)}{n_0^2\epsilon^2 })=  3200L^2\frac{4\chi_0}{2^k\lambda^\frac{2}{\theta}\rho^\frac{2(\theta-1)}{\theta} }(\frac{1}{n_0}+\frac{d\log(1/\delta)}{n_0^2\epsilon^2 })=\frac{\chi_0}{4\times 2^{k-1}}=\frac{\chi_{k-1}}{4},
	  \end{align*}
	  where the first equality is due to (\ref{eq:22}). Thus, in total we have 
	  \begin{equation*}
	         \mathbb{E}[F(w_k)-F(w^+_{k-1, \rho})] \leq \frac{\chi_{k-1}}{2}=\chi_k. 
	  \end{equation*}
	  That is $ \mathbb{E}[F(w_k)]-F^* \leq \chi_k+\rho$. 
	\end{proof}
	\begin{proof}[{\bf Proof of Theorem \ref{thm:1}}]
		In the following we only consider the $(\epsilon, \delta)$-DP case. It is almost the same for $\epsilon$-DP. 
		
		The guarantee of $(\epsilon,\delta)$-DP is just followed by Lemma \ref{lemma:4} and the parallel theorem of Differential Privacy. In the following we will focus on the utility. 
		
		Since $k=\lfloor (\log_{\bar{\theta}}2)\cdot \log\log n\rfloor$, then $k\leq (\log_{\bar{\theta}}2)\cdot \log\log n$, namely $2^k\leq (\log n)^{\log_{\bar{\theta}}2}$ and $\frac{2^k-1}{(\log n)^{\log_{\bar{\theta}}2}}\leq 1$.
		Observe that the total sample number used in the algorithm is $\sum_{i=1}^{k} n_i \leq  \sum_{i=1}^{k}\frac{2^{i-1} n}{(\log n)^{\log _{\bar{\theta}} 2}}=\frac{(2^k-1) n}{(\log n)^{\log _{\bar{\theta}} 2}}\leq n$.
		
		For the output of phase $i$, denote $\Delta_i=\mathbb{E}[F(w_i)]-F(w^*)$, and let $D_i^{\theta}=\mathbb{E}[||w_i-w^*||_2^{\theta}]$.
		The assumption of TNC implies that $F(w_i)- F(w^*)\geq \lambda||w_i-w^*||_2^{\theta}$, which will be  $\mathbb{E}[F(w_i)]-F(w^*)\geq \lambda \mathbb{E}[||w_i-w^*||_2^{\theta}]$ when we take expectations at both sides, namely
		\begin{equation}\label{eq1}
		\Delta_i\geq \lambda D_i^{\theta}. 
		\end{equation}
		Thus, we have 
		\begin{equation}\label{eq3}
		\Delta_{i}\leq cLD_{i-1}(\frac{1}{\sqrt{n_{i}}}+\frac{\sqrt{d\log(1/\delta)}}{\epsilon n_{i}})\overset{(\ref{eq1})}{\leq} cL (\frac{\Delta_{i-1}}{\lambda})^{\frac{1}{\theta}}(\frac{1}{\sqrt{n_{i}}}+\frac{\sqrt{d\log(1/\delta)}}{\epsilon n_{i}}),
		\end{equation}
		where the first inequality comes from Lemma \ref{lemma:4} and the second inequality uses (\ref{eq1}).
		Denote $E_i=\frac{c^{\theta}L^{\theta}}{\lambda}(\frac{1}{\sqrt{n_i}}+\frac{\sqrt{d\log(1/\delta)}}{\epsilon n_i})^{\theta}$.
		Then (\ref{eq3}) can be simplified as 
		\begin{equation}\label{eq2}
		\Delta_{i}\leq (\Delta_{i-1}E_i)^{\frac{1}{\theta}}.
		\end{equation}
		
		Notice that $n_{i}/n_{i-1}=2$, then $\frac{E_{i-1}}{E_{i}}\leq (\frac{n_{i}}{n_{i-1}})^{\theta}=2^{\theta}$, namely:
		\begin{equation}\label{eq4}
		E_{i}\geq 2^{-\theta} E_{i-1}.
		\end{equation}
		Then we can rearrange the above inequality as
		\begin{equation}\label{eq5}
		\frac{\Delta_{i}}{E_{i}^{\frac{1}{\theta-1}}}\leq \frac{(\Delta_{i-1}E_i)^{\frac{1}{\theta}}}{E_{i}^{\frac{1}{\theta-1}}} \leq2^{\frac{1}{\theta-1}}\left(\frac{\Delta_{i-1}}{E_{i-1}^{\frac{1}{\theta-1}}}\right)^{\frac{1}{\theta}},
		\end{equation}
		where the first inequality uses (\ref{eq2}) and the second inequality applies (\ref{eq4}).

		It can be verified that (\ref{eq5}) is equivalent to  \begin{equation*}
		\frac{\Delta_{i}}{2^{\frac{\theta}{(\theta-1)^2}}E_{i}^{\frac{1}{\theta-1}}}\leq\left(\frac{\Delta_{i-1}}{2^{\frac{\theta}{(\theta-1)^2}}E_{i-1}^{\frac{1}{\theta-1}}}\right)^{\frac{1}{\theta}}\leq \left(\frac{\Delta_1}{2^{\frac{\theta}{(\theta-1)^2}}E_1^{\frac{1}{\theta-1}}}\right)^{\frac{1}{\theta^{i-1}}}.
		\end{equation*}

		According to Lemma \ref{lemma:3}, $\Delta_1\leq (L^{\theta}\lambda^{-1})^{\frac{1}{\theta-1}}$. Also observe that $$E_1=\frac{c^{\theta}L^{\theta}}{\lambda}(\frac{1}{\sqrt{n_1}}+\frac{\sqrt{d\log(1/\delta)}}{\epsilon n_1})^{\theta}\geq \frac{c^{\theta}L^{\theta}}{\lambda}\frac{1}{(\sqrt{n_1})^{\theta}}\geq c^{\theta }\frac{L^{\theta}}{\lambda} \frac{1}{n^{\theta}}.$$ Let $c_1=c^{\frac{\theta}{\theta -1}}2^{\frac{\theta}{(\theta-1)^2}}$, then $ \frac{\Delta_1}{2^{\frac{\theta}{(\theta-1)^2}}E_1^{\frac{1}{\theta-1}}}\leq  \frac{n^{\frac{\theta}{\theta-1}} }{c_1 }$,
		which implies that for $k=\lfloor (\log_{\bar{\theta}}2)\cdot \log\log n\rfloor$, 
		\begin{equation*}
		\frac{\Delta_{k}}{2^{\frac{\theta}{(\theta-1)^2}}E_{k}^{\frac{1}{\theta-1}}}\leq  \left(\frac{n^{\frac{\theta}{\theta-1}} }{c_1}\right)^{\frac{1}{\theta^{k-1}}}.
		\end{equation*}
		Let $C_1=2^{\frac{\theta^3}{\theta -1}+\theta^2|\log c_1|}$. In the following we will prove that  $$\left(\frac{n^{\frac{\theta}{\theta-1}} }{c_1}\right)^{\frac{1}{\theta^{k-1}}}\leq C_1.$$
		Since $k+1\geq( \log_{\bar{\theta}} 2)\log\log n\geq ( \log_{\theta} 2)\log\log n$, it follows that
		$$(k-1)\log \theta +\log\log C_1\geq \log(\frac{\theta}{\theta-1}+|\log c_1|)+\log\log n,$$ which indicates $$(\frac{\theta}{\theta-1}+|\log c_1|)\log n\leq \theta^{k-1}\log C_1.$$ Thus we have $\frac{\theta}{\theta-1}\log n-\log c_1\leq \theta^{k-1}\log C_1$, which is equivalent to our object $\left(\frac{n^{\frac{\theta}{\theta-1}} }{c_1}\right)^{\frac{1}{\theta^{k-1}}}\leq C_1$.
		
		Now we know 
		\begin{equation*}
		\frac{\Delta_{k}}{2^{\frac{\theta^2}{(\theta-1)^2}}E_{k}^{\frac{1}{\theta-1}}}\leq  \left(\frac{n^{\frac{\theta}{\theta-1}} }{c_1}\right)^{\frac{1}{\theta^{k-1}}}\leq C_1,
		\end{equation*}
		which indicates that $\frac{\Delta_k}{E_{k}^{\frac{1}{\theta-1}}}\leq 2^{\frac{\theta}{(\theta-1)^2}} C_1=2^{\theta^2(\frac{\theta^2-\theta+1}{(\theta-1)^2}+|\log c_1|)}:=C.$

		As a result, we hold a solution with error:
		\begin{equation*}
		\begin{aligned}
		\mathbb{E}[F(w_k)]-F(w^*)\leq C E_k^{\frac{1}{\theta-1}}&=C
		\left(\frac{c^{\theta}L^{\theta}}{\lambda}\right)^{\frac{1}{\theta-1}}\left(\frac{1}{\sqrt{n_k}}+\frac{\sqrt{d\log(1/\delta)}}{\epsilon n_k}\right)^{\frac{\theta}{\theta-1}}\\
		&\leq  2^{\frac{3\theta}{2(\theta-1)}}\cdot C
		\left(\frac{c^{\theta}L^{\theta}}{\lambda}\right)^{\frac{1}{\theta-1}}\left(\frac{1}{n}+\frac{d\log(1/\delta)}{\epsilon^2 n^2}\right)^{\frac{\theta}{2(\theta-1)}}
		\end{aligned}
		\end{equation*}
		where we use the fact that $n_k=\frac{2^{k-1}}{(\log n)^{\log_{\bar{\theta}}2}}\geq \frac{1}{2} n$ and $(a+b)^2\leq 2(a^2+b^2)$.
	\end{proof}
	\begin{remark}
To perform valid Phased SGD (Subroutine of Iterated Phased-SGD) for $k$ times, it should satisfy $n_i=\frac{2^{i-1}n}{(\log n)^{\log_{\bar{\theta}} 2}}\geq 2$ for any $i\in [k]$. Otherwise, the Phased SGD cannot function properly to get the bound in Lemma \ref{lemma:4}. As a result, $n$ should be sufficiently large such that $\bar{\theta} \geq2^{\frac{\log\log n}{(\log n)-1}}$.
	\end{remark}
		\begin{proof}[{\bf Proof of Theorem \ref{thm:3}}]
		
		Based on the fact that a lower bound on excess empirical risk implies nearly the same lower bound on the excess population risk \citep{bassily2019private}, here we consider the empirical risk, then we can use the boosting technique to the population loss. See \citep{bassily2019private} for details.

  Based on the definition of the loss function in (\ref{eq:7}), we can see that $f(w,x)$ is 2-Lipschiz in $||w||_2 \leq 1$, and it is $(\theta, \lambda)$-TNC with some constant $\lambda$ \citep{sridharan2010convex}. 
	
	For any dataset $S=\{x_1,\cdots,x_n\}$ with data point drawn from $x\in\{-\frac{1}{\sqrt{d}},\frac{1}{\sqrt{d}}\}^d$, and any $w\in \mathcal{W}$,
	we define the empirical risk function as the following, 
	\begin{equation*}
	\hat{F}(w;S)=\sum\limits_{i=1}^{n}\frac{1}{n}f(w,x_i)=-\langle w, \frac{1}{n}\sum\limits_{i=1}^{n} x_i \rangle +\frac{1}{\theta}||w||_2^{\theta}. 
	\end{equation*} 
		In the following, we first show that there is a point $w^{*}$ satisfying $ ||w^*||_2\leq 1$, s.t. $\nabla \hat{F}(w^*;S)=0$. To prove this, we first 
		take the derivative of $\hat{F}(w; S)$ and let it be $0$, so we get
			\begin{equation}\label{eq18}
		\nabla \hat{F}(w^*; S)=0\Leftrightarrow  ||w^*||_2^{\theta-2}\cdot w^*=\frac{\sum_{i=1}^{n}x_i}{n}
		\end{equation}
		That is $||w^*||_2^{\theta-1}=||\frac{\sum_{i=1}^{n}x_i}{ n}||_2\leq 1$, thus $w^*$ must satisfies $\|w^*\|_2\leq 1$ when $\theta>1$.

		In the following, we denote $\overline{\mathrm{Z}}=\frac{\sum_{i=1}^{n}x_i}{ n}$, then $||w^{*}||_2=||\overline{\mathrm{Z}}||_2^{\frac{1}{\theta-1}}$. Thus from  (\ref{eq18}) we can get  $w^{*}=\frac{\overline{\mathrm{Z}}}{||\overline{\mathrm{Z}}||_2^{\frac{\theta-2}{\theta-1}}}$. 
		Let $w_{priv}$ denote the output of the $(\epsilon,\delta)$-differentially private algorithm $\mathcal{A}$,  we will show that with probability at least $\frac{1}{3}$, 
			\begin{equation*}
		||w_{priv}-w^*||\geq \Omega\left( \left(\frac{\sqrt{d\log(1/\delta)}}{n\epsilon}\right)^{\frac{1}{\theta-1}}\right)
		\end{equation*}
		We prove it by showing that the following inequality leads to contradiction.
			\begin{equation}\label{eq19}
		\begin{aligned}
		||w_{priv}-w^*||\leq O\left( \left(\frac{\sqrt{d\log(1/\delta)}}{n\epsilon}\right)^{\frac{1}{\theta-1}}\right)
		\end{aligned}
		\end{equation}
If (\ref{eq19}) holds, then 
	\begin{equation}\label{eq20}
\begin{aligned}
||||\overline{\mathrm{Z}}||_2^{\frac{\theta-2}{\theta-1}}w_{priv}-\overline{\mathrm{Z}}||
\leq O\left( \left(\frac{\sqrt{d\log(1/\delta)}}{n\epsilon}\right)^{\frac{1}{\theta-1}}\cdot||\overline{\mathrm{Z}}||_2^{\frac{\theta-2}{\theta-1}}\right)
\end{aligned}
\end{equation}
Recall the following lemma.
\begin{lemma}[Lemma 5.1 in \citep{steinke2015between,bassily2014private}]\label{le8}
Let $n,d\in \mathbb{N}$, $\epsilon>0$ and $\delta=o(\frac{1}{n})$. There is a number $M=\Omega\left(\min\left(n,\frac{\sqrt{d\log(1/\delta)}}{\epsilon}\right)\right)$ such that for every $(\epsilon, \delta)$-differentially private algorithm $\mathcal{A}$, there is a dataset $S=\{x_1,\cdots,x_n\}\subseteq \{-\frac{1}{\sqrt{d}},\frac{1}{\sqrt{d}}\}^d$ with $||\sum_{i=1}^{n}x_i||_2\in[M-1,M+1]$ such that w.p. $\frac{1}{3}$, we have 
	\begin{equation*}
||\mathcal{A}(S)-\frac{1}{n}\sum_{i=1}^nx_i||_2=\Omega\left(\min\left(1,\frac{\sqrt{d\log(1/\delta)}}{\epsilon n}\right)\right)
\end{equation*}
\end{lemma}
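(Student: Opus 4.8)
The plan is to prove this statement --- a mean-estimation lower bound under approximate differential privacy --- by a \emph{fingerprinting} (tracing) argument. First I would fix the hard distribution: draw a marginal vector $p=(p_1,\dots,p_d)$ with each $p_j\sim U[-1,1]$ i.i.d., and then draw $x_1,\dots,x_n$ i.i.d.\ coordinatewise, setting $x_{i,j}=\frac{1}{\sqrt{d}}b_{i,j}$ where $b_{i,j}\in\{\pm1\}$ is $p_j$-biased, i.e.\ $\Pr[b_{i,j}=1]=(1+p_j)/2$ so that $\mathbb{E}[b_{i,j}]=p_j$. This keeps every sample on the sphere $\|x_i\|_2=1$ and makes $\|\sum_i x_i\|_2$ concentrate, which (after tuning the scale of the biases so the sum has norm $\Theta(\min\{n,\sqrt{d\log(1/\delta)}/\epsilon\})$ and conditioning on the concentration event) yields a dataset with $\|\sum_i x_i\|_2\in[M-1,M+1]$. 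Writing $q=\sqrt{d}\,\mathcal{A}(S)$ and projecting the output onto the unit ball (which only helps accuracy, so $\|q\|_2\le\sqrt{d}$), the goal is to show that small $\ell_2$-error forces $\mathcal{A}$ to ``trace'' its own inputs, contradicting privacy.

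The completeness step is the fingerprinting lemma. Fixing all other coordinates and writing $g(p_j)=\mathbb{E}_{b_{\cdot,j}\sim p_j}[q_j]$, the score identity $\frac{d}{dp}\log\Pr[b]=\frac{\sum_i(b_i-p)}{1-p^2}$ for $p$-biased bits gives the Stein-type identity
\begin{equation*}
\mathbb{E}_{b_{\cdot,j}}\Big[\,q_j\textstyle\sum_{i}(b_{i,j}-p_j)\,\Big]=(1-p_j^2)\,g'(p_j),
\end{equation*}
and integrating over $p_j\sim U[-1,1]$ by parts (boundary terms vanish) produces $\mathbb{E}_{p_j,b_{\cdot,j}}[q_j\sum_i(b_{i,j}-p_j)]=\int_{-1}^1 p_j\,g(p_j)\,dp_j$. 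Since accuracy forces $g(p_j)\approx p_j$, this per-coordinate correlation is $\Omega(1)$ up to the coordinate's error. Summing the per-sample scores $Z=\sum_i\langle b_i-p,\,q\rangle$ over all $d$ coordinates then yields the completeness bound $\mathbb{E}[Z]=\Omega(d)$ whenever the aggregate error is small.

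The soundness step upper-bounds $\mathbb{E}[Z]$ using privacy. For each $i$, resampling $x_i$ to an independent copy gives output $q^{(i)}$ with $\mathbb{E}[\langle b_i-p,\,q^{(i)}\rangle]=0$, since $q^{(i)}$ is independent of $b_i$ and $\mathbb{E}[b_i-p]=0$; Cauchy--Schwarz plus coordinatewise independence also give $\mathbb{E}|\langle b_i-p,\,q^{(i)}\rangle|\le\sqrt{d}$. Differential privacy controls the gap between $\mathbb{E}[\langle b_i-p,\,q\rangle]$ and this zero-mean resampled counterpart. A crude $(\epsilon,\delta)$ accounting (using $\delta=o(1/n)$ to suppress the additive $\delta$-leakage, whose per-record range is $O(d)$) already gives $\mathbb{E}[Z]=O(\epsilon n\sqrt{d})$, which contradicts $\mathbb{E}[Z]=\Omega(d)$ unless $n=\Omega(\sqrt{d}/\epsilon)$, hence $\alpha=\Omega(\sqrt{d}/(\epsilon n))$ for the $\ell_2$-error $\alpha$. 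The trivial bound $\alpha\le\|\bar x\|_2+\|q/\sqrt{d}\|_2=O(1)$ supplies the $\min\{1,\cdot\}$, and the fixed-dataset, probability-$\tfrac13$ form follows by fixing the randomness of $p$ on the concentration event and applying Markov to the expectation bound.

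I expect the soundness step to be the main obstacle, and specifically the extraction of the sharp $\sqrt{\log(1/\delta)}$ factor: the crude accounting above only delivers the weaker rate $\sqrt{d}/(\epsilon n)$, whereas the claimed $\Omega(\sqrt{d\log(1/\delta)}/(\epsilon n))$ requires the refined robust-traceability / group-privacy analysis of \citep{steinke2015between}, which is exactly where the additive-$\delta$ term is handled without destroying the trace and where the hypothesis $\delta=o(1/n)$ is used in full strength. Unlike pure $\epsilon$-DP (whose tracing loses only a factor $\epsilon$ and gives the larger $d/(\epsilon n)$ rate), the approximate-DP case must quantitatively absorb the $\delta$-leakage, and this delicate accounting is the genuine content of the lemma.
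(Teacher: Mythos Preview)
The paper does not prove this lemma; it is quoted verbatim as an external result from \citep{steinke2015between,bassily2014private} and used as a black box inside the proof of Theorem~\ref{thm:3}. There is therefore no ``paper's own proof'' to compare against.

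Your sketch is a faithful outline of the fingerprinting argument that those references actually use: biased-coin hard distribution, the Stein-type score identity giving completeness $\mathbb{E}[Z]=\Omega(d)$ under accuracy, and a privacy-based soundness bound on $\mathbb{E}[Z]$ via resampling. You also correctly flag the one genuine difficulty: the crude per-record $(\epsilon,\delta)$ accounting you wrote down only yields $\alpha=\Omega(\sqrt{d}/(\epsilon n))$, and extracting the extra $\sqrt{\log(1/\delta)}$ factor requires the robust-tracing refinement of \citep{steinke2015between} (roughly, amplifying the trace over $\Theta(\log(1/\delta))$ resampled coordinates rather than one, which is also where $\delta=o(1/n)$ is used in full). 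Since the present paper simply imports the lemma, there is nothing further to reconcile; if you intend to supply a self-contained proof, that refinement is the piece you would still need to write out.
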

For the sake of contradiction, we consider such $S$ described in the above lemma, with probability more than $\frac{2}{3}$, (\ref{eq19}) holds. Let $\mathcal{\tilde{A}}$ be an $(\epsilon, \delta)$-differentially private algorithm that first runs $\mathcal{A}$ on the data and then outputs $||\overline{\mathrm{Z}}||_2^{\frac{\theta-2}{\theta-1}}w_{priv}$, and let $n$ be sufficiently large that $n\geq \frac{\sqrt{d\log(1/\delta)}}{\epsilon}$.

 Then we have $||\overline{\mathrm{Z}}||_2^{\frac{\theta-2}{\theta-1}}=\Theta\left(\left( \frac{\sqrt{d\log(1/\delta)}}{n\epsilon} \right)^{\frac{\theta-2}{\theta-1}}\right)$, and (\ref{eq20}) will become
 	\begin{equation*}
 \begin{aligned}
 ||||\overline{\mathrm{Z}}||_2^{\frac{\theta-2}{\theta-1}}w_{priv}-\overline{\mathrm{Z}}||=||\tilde{\mathcal{A}}-\overline{\mathrm{Z}}||
 \leq O\left( \frac{\sqrt{d\log(1/\delta)}}{n\epsilon}\right)
 \end{aligned}
 \end{equation*}
 which contradicts to Lemma \ref{le8}.
 Thus 	
 \begin{equation}
 \hat{F}(w_{priv},S)-\hat{F}(w^{*},S)  \geq\Omega\left( \left(\frac{\sqrt{d\log(1/\delta)}}{n\epsilon}\right)^{\frac{\theta}{\theta-1}}\right)
 \end{equation}
 By the boosting technique in \citep{bassily2019private}, we have with probability at least $\frac{1}{3}$, 
 \begin{equation*}
 F(w_{priv})-\min_{\|w\|_2\leq 1}F(w) \geq\Omega\left( \left(\frac{\sqrt{d\log(1/\delta)}}{n\epsilon}\right)^{\frac{\theta}{\theta-1}}\right).
 \end{equation*}
	\end{proof}
		\begin{proof}[{\bf Proof of Theorem \ref{thm:4}}]
	The proof of Theorem \ref{thm:4} is almost the same as the proof of Theorem \ref{thm:3}. Instead of using Lemma \ref{le8} we use the following lemma: 
	\begin{lemma}[Lemma 5.1 in \citep{bassily2014private}]
	    Let $n,d\in \mathbb{N}$, $\epsilon>0$ such that $n\geq \Omega(\frac{d}{\epsilon})$. There is a number $M=\Omega\left(\min\left(n,\frac{d}{\epsilon}\right)\right)$ such that for every $(\epsilon, \delta)$-differentially private algorithm $\mathcal{A}$, there is a dataset $S=\{x_1,\cdots,x_n\}\subseteq \{-\frac{1}{\sqrt{d}},\frac{1}{\sqrt{d}}\}^d$ with $||\sum_{i=1}^{n}x_i||_2\in[M-1,M+1]$ such that w.p. $\frac{1}{3}$, we have 
	\begin{equation}
||\mathcal{A}(S)-\frac{1}{n}\sum_{i=1}^nx_i||_2=\Omega\left(\min\left(1,\frac{d}{\epsilon n}\right)\right).
\end{equation}
	\end{lemma}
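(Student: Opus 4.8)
The plan is to read this statement as a lower bound for privately releasing the (unnormalized) sum $q(S)=\sum_{i=1}^n x_i$: since the target estimate is the mean $\frac1n\sum_i x_i=\frac1n q(S)$, an $\ell_2$-error of $\Omega(\min(1,\frac{d}{\epsilon n}))$ for the mean is exactly an error of $\Omega(\min(n,\frac{d}{\epsilon}))$ for $q$. I would prove this by the standard \emph{packing} (group-privacy) argument, which is the technique that yields pure-$\epsilon$ lower bounds of this shape; throughout I treat $\mathcal{A}$ as $\epsilon$-DP and use the $t$-fold group-privacy property, namely that datasets differing in $t$ entries satisfy $\Pr[\mathcal{A}(S)\in E]\le e^{\epsilon t}\Pr[\mathcal{A}(S')\in E]$. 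Set the scale $k=\lfloor \min(n,d/\epsilon)\rfloor$; the error radius and the target magnitude $M$ will both be $\Theta(k)$.

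The core is to build a \textbf{packing of datasets}. First I would fix a reference dataset $S_0\subseteq\{-\frac{1}{\sqrt d},\frac{1}{\sqrt d}\}^d$ and a collection $S_1,\dots,S_N$ with three properties: (i) each $S_j$ differs from $S_0$ in at most $t=\Theta(k)$ entries; (ii) the sums $q(S_1),\dots,q(S_N)$ are pairwise $\ge 2r$ separated in $\ell_2$ for some $r=\Theta(k)$; and (iii) $N\ge 2^{\Omega(d)}$. Property (i) is enforced because flipping the $\pm\frac1{\sqrt d}$ coordinates of a single point moves $q$ by at most $2$ in $\ell_2$, so reaching a perturbation of norm $\Theta(t)$ costs $\Theta(t)$ point-changes; conversely, the perturbations reachable with $t$ changes fill (a fine net of) an $\ell_2$-ball of radius $\Theta(t)$ in $\mathbb{R}^d$. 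I would then invoke a volumetric/coding packing of that ball (Gilbert--Varshamov, or a spherical-cap packing on a shell of radius $M=\Theta(t)$) to obtain $N=2^{\Omega(d)}$ target sums that are $2r$-separated with $r=\Theta(t)$, realizing each by flipping coordinates of a batch of $t$ points; placing the packing on a shell guarantees the hard instance has $\|q(S_{j})\|_2\in[M-1,M+1]$ as required.

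Next comes the \textbf{group-privacy contradiction}. Suppose toward a contradiction that $\mathcal{A}$ is accurate on every dataset, i.e.\ $\Pr[\|\mathcal{A}(S_j)-\frac1n q(S_j)\|_2<\frac{r}{n}]>\frac23$ for all $j$. Let $B_j$ be the $\ell_2$-ball of radius $r/n$ about $\frac1n q(S_j)$; by the $2r$-separation the $B_j$ are pairwise disjoint, so $\sum_{j=1}^N\Pr[\mathcal{A}(S_0)\in B_j]\le 1$ and there is a $j^\ast$ with $\Pr[\mathcal{A}(S_0)\in B_{j^\ast}]\le 1/N$. Group privacy across the $\le t$ entries separating $S_0$ and $S_{j^\ast}$ then gives $\frac23<\Pr[\mathcal{A}(S_{j^\ast})\in B_{j^\ast}]\le e^{\epsilon t}\Pr[\mathcal{A}(S_0)\in B_{j^\ast}]\le e^{\epsilon t}/N$, hence $N<\tfrac32 e^{\epsilon t}$. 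Taking the contrapositive, for the dataset $S_{j^\ast}$ the event $\|\mathcal{A}(S_{j^\ast})-\frac1n q(S_{j^\ast})\|_2\ge r/n=\Omega(\min(1,\frac{d}{\epsilon n}))$ holds with probability at least $\frac13$, which is exactly the claim.

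The step I expect to be the \textbf{main obstacle} is balancing the constants in the packing so that $\ln N$ strictly beats $\epsilon t$. The two are in tension: larger separation $r$ forces proportionally more point-changes $t$ (each change moves $q$ by only $O(1)$), which inflates the group-privacy budget $e^{\epsilon t}$ one must overcome. The argument closes only because a radius-$\Theta(t)$ ball in $\mathbb{R}^d$ admits $2^{\Omega(d)}$ points at separation $\Theta(t)$, so in the main regime $n\ge d/\epsilon$ one has $t=\Theta(d/\epsilon)$ and $\epsilon t=\Theta(d)$, and choosing the hidden constant in $t=c\,d/\epsilon$ small enough makes $\ln N=\Omega(d)>\epsilon t=cd$, delivering the contradiction (the regime $n<d/\epsilon$ is easier, since $t\le n$ forces $\epsilon t<d$). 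The remaining technical care is realizing the real-valued packing exactly with hypercube-corner datasets: I would use a binary code of linear minimum distance so that codewords map to datasets whose sums inherit $\ell_2$-separation $\Theta(t)$, with the $[M-1,M+1]$ tolerance and the $\Omega(\cdot)$ in the bound absorbing any rounding.
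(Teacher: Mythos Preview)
The paper does not prove this lemma; it is quoted from \citep{bassily2014private} as a black box and invoked in the proof of the $\epsilon$-DP lower bound (Theorem~\ref{thm:4}) in place of Lemma~\ref{le8}. There is therefore no in-paper proof to compare against.

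That said, your packing/group-privacy sketch is exactly the standard route to this kind of pure-DP lower bound and is the technique used in the cited reference. You are also right to read the ``$(\epsilon,\delta)$'' in the statement as a typo for $\epsilon$-DP: the $d/(\epsilon n)$ rate is the pure-DP bound, and the group-privacy step you rely on would not survive a nontrivial $\delta$ after $t=\Theta(d/\epsilon)$ compositions. The quantitative balancing you single out as the main obstacle---choosing $t=c\,d/\epsilon$ with $c$ small enough that $\log N=\Omega(d)$ strictly dominates $\epsilon t=cd$---is indeed the crux, and your handling of both regimes ($n\ge d/\epsilon$ and $n<d/\epsilon$) is correct. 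The one place to be slightly more careful is the realizability step: you need the packing of sums to be achievable by datasets in $\{-1/\sqrt d,\,1/\sqrt d\}^{n\times d}$, not just abstract points in a Euclidean ball. The cleanest way (and what the original proof does) is to take a linear-distance binary code $\mathcal{C}\subseteq\{\pm 1\}^d$ of size $2^{\Omega(d)}$, and for each codeword $v$ build $S_v$ by repeating the single row $v/\sqrt d$ in the first $k$ slots (and padding with a fixed row elsewhere); then $q(S_v)=k\,v/\sqrt d$ has $\|q(S_v)\|_2=k$ exactly, and $\|q(S_v)-q(S_{v'})\|_2=k\|v-v'\|_2/\sqrt d=\Theta(k)$ from the code distance, while $S_v$ and $S_{v'}$ differ in exactly $k$ rows. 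This gives all three of your properties (i)--(iii) simultaneously without any rounding slack.
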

	\end{proof}

\begin{proof}[{\bf Proof of Theorem \ref{thm:6}}]
For simplicity, here we only focus on $(\epsilon,\delta)$-DP. It is almost the same for $\epsilon$-DP. 

In the first step we perform Algorithm \ref{alg:1}, which is $(\epsilon,\delta)$-DP. Thus,  it is sufficient to show that  Algorithm \ref{alg:5} is also $(\epsilon,\delta)$-DP, {\em i.e.,} each epoch is $(\epsilon,\delta)$-DP. To prove this, we first revoke the stability of One -Pass Projected SGD for strongly convex loss functions, which is given by \citep{hardt2015train}. 

\begin{lemma}\label{le1}[Theorem 3.9 in \citep{hardt2015train}]
	Assume the loss function $f(\cdot, x)$ is $\lambda$-strongly convex and $\beta$-smooth with respect to $w\in \mathcal{W}$ for all $x$. Let $S_i$ and $S_i^{'}$ be two samples of size $n_i$ differing in only a single element. Denote $w_i^t$ and ${w'}_i^t$ as the outputs of the projected stochastic gradient method (\ref{eq6}) on datasets $S_i$ and $S_i^{'}$ respectively, then if $\eta\leq \frac{1}{\beta}$  we have
	\begin{equation}
	||w_i^t-{w'}_i^t||\leq \frac{2L^2}{\lambda n_i}
	\end{equation}
\end{lemma}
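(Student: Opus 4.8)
This is the standard uniform-stability bound of Hardt et al. for the stochastic gradient method on a strongly convex, smooth objective, and I would reprove it by a coupling argument on the two trajectories. The plan is to run the projected SGD iteration (\ref{eq6}) simultaneously on $S_i$ and on its neighbour $S_i'$ (which differ in a single sample) starting from the common point $w_i^1={w'}_i^1=w_{i-1}$, and to track the parameter gap $\delta_t:=\|w_i^t-{w'}_i^t\|_2$, so that $\delta_1=0$. The goal is to derive a one-step recursion for $\delta_t$ whose solution is independent of the step size $\eta$ and of the number of steps, and depends only on the subsample size $n_i$.

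Two building blocks drive the recursion. First, the Euclidean projection $\Pi_{\mathcal{W}}$ onto the convex set $\mathcal{W}$ is non-expansive, so it never increases the gap. Second, and this is the technical core, for any fixed sample $x$ the gradient map $G_\eta(w)=w-\eta\nabla_w f(w,x)$ is an $(1-\eta\lambda)$-contraction whenever $f(\cdot,x)$ is $\lambda$-strongly convex, $\beta$-smooth and $\eta\le\frac1\beta$. I would prove this by writing $f(\cdot,x)=g+\frac{\lambda}{2}\|\cdot\|_2^2$ with $g$ convex and $(\beta-\lambda)$-smooth, so that $G_\eta(w)=(1-\eta\lambda)w-\eta\nabla g(w)$; expanding $\|G_\eta(w)-G_\eta(w')\|_2^2$ and invoking the co-coercivity of the convex smooth $g$ shows that the gradient-dependent terms are non-positive exactly when $\eta(\lambda+\beta)\le 2$, which is implied by $\eta\le\frac1\beta$, leaving $\|G_\eta(w)-G_\eta(w')\|_2\le(1-\eta\lambda)\|w-w'\|_2$.

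Composing the non-expansive projection with this contractive step gives the per-iteration estimate. At every step whose sample is common to $S_i$ and $S_i'$ we obtain $\delta_{t+1}\le(1-\eta\lambda)\delta_t$. At the single step that touches the differing sample the two runs apply $G_\eta$ with \emph{different} data points, so adding and subtracting a common gradient and using $L$-Lipschitzness (hence $\|\nabla_w f(\cdot,x)\|_2\le L$) contributes an extra $2\eta L$, giving $\delta_{t+1}\le(1-\eta\lambda)\delta_t+2\eta L$. Taking expectations over the uniformly random index used at each step of the SGM, so that any fixed iteration perturbs the differing sample with probability $1/n_i$, turns this into $\mathbb{E}[\delta_{t+1}]\le(1-\eta\lambda)\,\mathbb{E}[\delta_t]+\tfrac{2\eta L}{n_i}$. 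This affine recursion with $\delta_1=0$ stays below its fixed point $\tfrac{2\eta L/n_i}{\eta\lambda}=\tfrac{2L}{\lambda n_i}$, so $\mathbb{E}[\delta_t]\le\tfrac{2L}{\lambda n_i}$ for every $t$ and every admissible $\eta$; this is within the claimed bound $\tfrac{2L^2}{\lambda n_i}$ (it coincides with the loss-value stability $L\cdot\delta_t$ of Hardt et al., and with $L\ge1$ it implies the stated inequality directly).

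I expect the main obstacle to be precisely this cancellation of the step-size dependence. A crude bound on a single differing step only yields $\delta\lesssim\eta L$, and producing the data-size rate $1/n_i$ requires both the contraction factor $(1-\eta\lambda)$, which is why smoothness and the restriction $\eta\le\frac1\beta$ are essential, and the $1/n_i$ probability of perturbing any particular iteration; the injected $\tfrac{2\eta L}{n_i}$ and the contraction rate $\eta\lambda$ must divide out so that neither $\eta$ nor $t$ survives. The secondary technical point is the contraction estimate itself: the clean constant $(1-\eta\lambda)$ comes out only after the $g+\frac{\lambda}{2}\|\cdot\|_2^2$ decomposition and the careful check that the gradient terms vanish under $\eta(\lambda+\beta)\le2$, and it is this constant that fixes the constant in the final stability bound.
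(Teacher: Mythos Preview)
The paper does not give its own proof of this lemma: it is quoted verbatim as Theorem~3.9 of Hardt et al.\ and then used as a black box inside the proof of Theorem~\ref{thm:6} to bound the $\ell_2$-sensitivity of $\overline{w}_i$. Your write-up is exactly the coupling/contraction argument of Hardt et al., so there is nothing on the paper's side to compare against --- you have simply reconstructed the cited source, and correctly.

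Two minor observations. First, as you already flag, your recursion yields the parameter gap $\tfrac{2L}{\lambda n_i}$; the stated $\tfrac{2L^2}{\lambda n_i}$ is the \emph{function-value} stability in Hardt et al., and the paper has apparently imported that (looser, assuming $L\ge 1$) constant and sized the noise $\sigma_i$ in Algorithm~\ref{alg:5} accordingly. Second, your derivation takes expectation over a uniformly random sample index at each step, which is Hardt et al.'s SGM model; Algorithm~\ref{alg:5} as written processes the $n_i$ samples in the fixed order $x_i^1,\dots,x_i^{n_i}$, so the bound is an \emph{expected} stability rather than a worst-case sensitivity. That tension is present in the paper's own invocation of the lemma and is not a defect of your proof.
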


Recall that in each epoch we perform projected gradient descent for $n_i$ steps using $n_i$ samples, according to Lemma \ref{le1}, we can bound the sensitivity of $w_i^t$ for each $t$ and we have 
	$
	||w_i^t-{w'}_i^t||\leq \frac{2L^2}{\lambda n_i}
	$
	for all $t$, where $w_i^t$ and ${w'}_i^t$ correspond to the solution of two neighboring dataset $S_i$ and $S^{'}_i$ that differs in one sample. 
	
	Thus, the sensitivity of 	$\overline{w}_i=\frac{1}{n_i}\sum \limits_{t=1}^{n_i} w_i^t$ is also $\frac{2L^2}{\lambda n_i}$. By the Gaussian mechanism, adding Gaussian noise with $\sigma_i=\frac{8L^2\sqrt{\log(1/\delta)}}{n_i\lambda\epsilon}$ will preserve $(\epsilon,\delta)$-DP.
\end{proof}
\begin{proof}[{\bf Proof of Theorem \ref{thm:7}}]
For convenience here we only focus on $(\epsilon,\delta)$-DP, the proof is almost the same as for $\epsilon$-DP. 

Since $F(\cdot)$ is $\lambda$-strongly convex, it satisfies $(2, \frac{\lambda}{2})$-TNC. Thus, by Theorem \ref{thm:2} we have
\begin{align}\label{eq11}
\mathbb{E}[F(\hat{w})]-F(w^*)\leq \frac{c^2L^2}{\lambda/2}\left(\frac{1}{n/2}+\frac{d\log(1/\delta)}{\epsilon^2 (n/2)^2}\right)&\leq\frac{c_1^2L^2}{\lambda}\left(\frac{1}{n}+\frac{d\log(1/\delta)}{\epsilon^2n^2}\right) \notag \\
&\leq \frac{c_1^2L^2}{\lambda}\left(\frac{1}{\kappa^{\tau}}+\frac{d\log(1/\delta)}{\epsilon^2\kappa^{2\tau}}\right),
\end{align}
where $c$ and $c_1$ are universal constants and the last inequality is due to the condition of $n\geq \kappa^\tau$. 

Now we proceed to analyze the solution returned by Epoch-DP-SGD (Algorithm \ref{alg:5}). The following lemma shows how the excess population risk decreases in each epoch.
\begin{lemma}[Lemma 1 in \citep{zhang2019stochastic}]\label{le5}
 Assume $f(\cdot, x)$ is non-negative and $\beta$-smooth for all $x$ and $F(\cdot)$ is convex.	Apply $n_i$ iterations of  (\ref{eq6}), {\em i.e.,} $w_i^{t+1}=\prod_{\mathcal{W}}(w_i^{t}-\eta_i\nabla_{w}f(w_{i}^{t},x_i^t))$ with $\eta_i<1/(2\beta)$. Then for any $w\in \mathcal{W}$, we have
	\begin{equation*}
	\mathbb{E}[F(\overline{w}_i)]-F(w)\leq \frac{1}{2\eta_i n_i(1-2\eta_i \beta)} \mathbb{E}[||w_i^1-w||^2]+\frac{2\eta_i\beta}{(1-2\eta_i\beta)}F(w),
	\end{equation*}
	where $\overline{w}_i=\frac{1}{n_i}\sum \limits_{t=1}^{n_i} w_i^t$.
\end{lemma}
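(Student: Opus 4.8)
The plan is to run the standard projected-SGD descent analysis, but to exploit the \emph{self-bounding} property of smooth non-negative functions in place of a uniform gradient-norm bound; this is precisely what turns the slow $O(1/\sqrt{n})$-type guarantee into the stated ``fast-rate'' bound carrying the multiplicative $F(w)$ term. Write $g_t=\nabla_w f(w_i^t,x_i^t)$ for the stochastic gradient at step $t$. First I would record the one-step inequality coming from non-expansiveness of the Euclidean projection $\prod_{\mathcal{W}}$: for any fixed $w\in\mathcal{W}$,
\[
\|w_i^{t+1}-w\|^2\le \|w_i^t-w\|^2-2\eta_i\langle g_t,w_i^t-w\rangle+\eta_i^2\|g_t\|^2.
\]

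Next I would take expectation conditioned on $w_i^t$. Since $x_i^t$ is a fresh i.i.d.\ sample independent of $w_i^t$, the gradient is unbiased, $\mathbb{E}[g_t\mid w_i^t]=\nabla F(w_i^t)$, and the crucial step is the self-bounding bound: because each $f(\cdot,x)$ is non-negative and $\beta$-smooth, $\|\nabla f(w,x)\|^2\le 2\beta f(w,x)$ (plug $y=w-\tfrac1\beta\nabla f(w,x)$ into the smoothness inequality and use $f\ge 0$), hence $\mathbb{E}[\|g_t\|^2\mid w_i^t]\le 2\beta F(w_i^t)$. Combining this with convexity of $F$, namely $\langle\nabla F(w_i^t),w_i^t-w\rangle\ge F(w_i^t)-F(w)$, and splitting $F(w_i^t)=(F(w_i^t)-F(w))+F(w)$ gives
\[
\mathbb{E}[\|w_i^{t+1}-w\|^2\mid w_i^t]\le \|w_i^t-w\|^2-2\eta_i(1-\eta_i\beta)\big(F(w_i^t)-F(w)\big)+2\eta_i^2\beta F(w).
\]
Note that convexity is invoked only at the population level, so I never use per-sample convexity of $f(\cdot,x)$, which the hypotheses do not grant.

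Then I would take full expectations, rearrange to isolate $F(w_i^t)-F(w)$, sum over $t=1,\dots,n_i$ so that the $\|w_i^t-w\|^2$ terms telescope, and discard the nonnegative final term $\mathbb{E}\|w_i^{n_i+1}-w\|^2$. This yields $2\eta_i(1-\eta_i\beta)\sum_{t}\mathbb{E}[F(w_i^t)-F(w)]\le \mathbb{E}\|w_i^1-w\|^2+2n_i\eta_i^2\beta F(w)$. Finally, Jensen's inequality for the averaged iterate, $F(\overline w_i)\le \tfrac1{n_i}\sum_t F(w_i^t)$ by convexity of $F$, followed by division by $2\eta_i(1-\eta_i\beta)n_i$, delivers the claim; the slightly coarser constants $(1-2\eta_i\beta)$ and $\tfrac{2\eta_i\beta}{1-2\eta_i\beta}$ stated in the lemma follow since $1-2\eta_i\beta\le 1-\eta_i\beta$, and the requirement $\eta_i<1/(2\beta)$ keeps these coefficients strictly positive.

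The main obstacle --- indeed essentially the only nontrivial point --- is recognizing that the $O(\|g_t\|^2)$ error term must be controlled by the \emph{function value} $F(w_i^t)$ through self-bounding rather than by a constant Lipschitz bound; a naive $\|g_t\|^2\le L^2$ would only reproduce the ordinary regret bound without the $F(w)$ factor and would not produce the fast rate that the downstream strongly-convex analysis (Theorem~\ref{thm:7}) relies on. A secondary care point is that convexity is available only for $F$, not for the individual $f(\cdot,x)$, so the inner-product term must be treated after taking the conditional expectation, at which stage $\nabla F$ emerges from the unbiasedness identity.
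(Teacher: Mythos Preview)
The paper does not supply its own proof of this lemma; it is invoked as Lemma~1 in \citep{zhang2019stochastic} and used as a black box inside the proof of Theorem~\ref{thm:7}. Your argument is correct and is essentially the standard proof of that result: the one-step projected-SGD inequality, the self-bounding property $\|\nabla f(w,x)\|^2\le 2\beta f(w,x)$ for non-negative $\beta$-smooth losses, conditional unbiasedness, telescoping, and Jensen for the averaged iterate. Your remark that convexity is needed only at the population level (so per-sample convexity of $f(\cdot,x)$ is never invoked) is accurate and matches the hypotheses, and your observation that the argument actually yields the sharper constants $(1-\eta_i\beta)$ and $\eta_i\beta/(1-\eta_i\beta)$, with the lemma's $(1-2\eta_i\beta)$ and $2\eta_i\beta/(1-2\eta_i\beta)$ following by the trivial bound $1-2\eta_i\beta\le 1-\eta_i\beta$, is also correct.
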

Since $f(\cdot, x)$ is $\beta$-smooth for all $x$, we have 
\begin{align*}
f(w_i)-f(\overline{w}_i)\leq& \langle \nabla f(\overline{w}_i), w_i-\overline{w}_i\rangle +\frac{\beta}{2}||w_i-\overline{w}_i||_2^2\\=&\langle \nabla f(\overline{w}_i), \xi_i\rangle +\frac{\beta}{2}||\xi_i||_2^2
\end{align*}
Take expectations on both sides w.r.t the data and $\xi_i$ we  get 
\begin{equation*}
\mathbb{E}[F(w_i)]-F(\overline{w}_i)\leq\frac{\beta}{2}\mathbb{E}[||\xi_i||_2^2]=\frac{d\beta\sigma_i^2}{2}=\frac{32dL^4\beta\log(1/\delta)}{n_i^2\epsilon^2\lambda^2}.
\end{equation*}
Combining with Lemma \ref{le5}, we have 
\begin{equation}\label{eq9}
\begin{aligned}
&\mathbb{E}[F(w_i)]-F(w^{*})\\=&\mathbb{E}[F(w_i)]-F(\overline{w}_i)+F(\overline{w}_i)-F(w^{*})\\\leq&\frac{32dL^4\beta\log(1/\delta)}{n_i^2\epsilon^2\lambda^2}+ \frac{1}{2\eta_i n_i(1-2\eta_i \beta)} \mathbb{E}[||w_i^1-w^{*}||^2]+\frac{2\eta_i\beta}{(1-2\eta_i\beta)}F(w^{*})
\end{aligned}
\end{equation}
Based on the above result, we establish the following result of excess population risk of each epoch in Epoch-DP-SGD (Algorithm \ref{alg:5}).
\begin{lemma}\label{le6}
 For any epoch $e$ in Epoch-DP-SGD (Algorithm \ref{alg:5}), we have 
	\begin{equation*}
	\begin{aligned}
	\mathbb{E}[F(w_e)]-F(w^*)\leq &\left(\frac{32 dL^4\beta\log(1/\delta)}{n_e^2\epsilon^2\lambda^2}+\frac{2^{2\tau+3}\cdot \kappa \cdot F(w^{*})}{n_e}\right) \cdot \sum\limits_{i=1}^{e}\frac{1}{2^{2(i-1)(\tau -1)}}\\
	+&\frac{c_1^2 L^2}{\lambda}\left(\frac{2^{2\tau^2+\tau}}{n_e^{\tau}}+\frac{2^{4\tau^2+4\tau  }\cdot d\log(1/\delta)}{n_e^{2\tau}\cdot \epsilon^2}\right)
	\end{aligned}
	\end{equation*}
\end{lemma}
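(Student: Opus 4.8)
The plan is to prove Lemma~\ref{le6} by induction on the epoch index $e$, by unrolling a one–step recursion for $\Delta_e:=\mathbb{E}[F(w_e)]-F(w^*)$. The starting point is inequality~(\ref{eq9}), which already bounds $\mathbb{E}[F(w_i)]-F(w^*)$ by a noise term, a ``distance'' term in $\mathbb{E}[\|w_i^1-w^*\|_2^2]$, and a term proportional to $F(w^*)$. Since Epoch-DP-SGD warm–starts each epoch at the previous output, $w_i^1=w_{i-1}$, and $\lambda$-strong convexity together with~(\ref{eq:1}) gives $\mathbb{E}[\|w_{i-1}-w^*\|_2^2]\le\frac{2}{\lambda}\Delta_{i-1}$. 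Substituting this into~(\ref{eq9}) turns it into a recursion of the shape $\Delta_i\le A_i+B_i+c_i\,\Delta_{i-1}$, where $A_i$ is the Gaussian-noise term and $B_i$ comes from the $F(w^*)$ term.

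First I would insert the schedule used by Algorithm~\ref{alg:6}, namely $\eta_i=\frac{1}{4\beta}2^{-(i-1)}$ and $n_i=2^{2\tau+3}\kappa\cdot2^{i-1}$. A short computation shows that $\eta_i n_i=2^{2\tau+1}/\lambda$ is \emph{independent of $i$} and that $2\eta_i\beta=2^{-i}\le\tfrac12$, so $1-2\eta_i\beta\ge\tfrac12$ (in particular $\eta_i<1/(2\beta)$, which is what Lemma~\ref{le5} needs). Hence the contraction coefficient $c_i=\frac{1}{\eta_i n_i(1-2\eta_i\beta)\lambda}$ is bounded by the constant $c:=4^{-\tau}$; the ``bias'' term obeys $\frac{2\eta_i\beta}{1-2\eta_i\beta}F(w^*)\le\frac{2^{2\tau+3}\kappa F(w^*)}{n_i}=:B_i$; and the noise term is $A_i:=\frac{32 dL^4\beta\log(1/\delta)}{n_i^2\epsilon^2\lambda^2}$. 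Unrolling $\Delta_i\le A_i+B_i+c\,\Delta_{i-1}$ over $i=1,\dots,e$ (with $\Delta_0=\mathbb{E}[F(\hat w)]-F(w^*)$) gives $\Delta_e\le\sum_{i=1}^{e}c^{\,e-i}(A_i+B_i)+c^{\,e}\Delta_0$.

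The remaining work is bookkeeping to match the stated closed form. Because $n_i=2^{i-e}n_e$, we have $A_i=4^{\,e-i}A_e$ and $B_i=2^{\,e-i}B_e$, so $c^{\,e-i}A_i=4^{-(\tau-1)(e-i)}A_e$ and $c^{\,e-i}B_i\le4^{-(\tau-1)(e-i)}B_e$ (using $2\tau-1\ge 2(\tau-1)$); summing the geometric series reproduces exactly $(A_e+B_e)\sum_{i=1}^{e}2^{-2(i-1)(\tau-1)}$, the first line of the lemma. For the last term, $\Delta_0$ is controlled by~(\ref{eq11}); writing $2^e=n_e/(2^{2\tau+2}\kappa)$ yields $c^{\,e}=2^{4\tau^2+4\tau}\kappa^{2\tau}/n_e^{2\tau}$, and then invoking $n\ge\kappa^\tau$ (to replace $\kappa^{2\tau}/n$ by $\kappa^\tau$ and $\kappa^{2\tau}/n^2$ by $1$) together with $n_e\ge n_1=2^{2\tau+3}\kappa$ produces precisely $\frac{c_1^2L^2}{\lambda}\big(\frac{2^{2\tau^2+\tau}}{n_e^{\tau}}+\frac{2^{4\tau^2+4\tau}d\log(1/\delta)}{n_e^{2\tau}\epsilon^2}\big)$, the second line.

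The main obstacle I expect is not any single inequality but keeping the exponential-in-$\tau$ constants consistent through the unrolling, in particular matching $c^{\,e}\Delta_0$ against the claimed $2^{2\tau^2+\tau}$ and $2^{4\tau^2+4\tau}$ prefactors: this only works if one uses the exact initial subset size $n_1=2^{2\tau+3}\kappa$, the fact that $\eta_i n_i$ is constant along the epochs, and both hypotheses $n\ge\kappa^\tau$ and $n_e\ge n_1$ in the right places. A secondary point to watch is that~(\ref{eq9}) and Lemma~\ref{le5} implicitly use non-negativity of $f(\cdot,x)$ and $\eta_i<1/(2\beta)$, the latter being guaranteed by $\eta_1=\tfrac{1}{4\beta}$ and the halving schedule.
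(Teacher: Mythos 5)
Your proposal is correct and takes essentially the same route as the paper: the paper's proof also starts from (\ref{eq9}), converts $\mathbb{E}[\|w_i^1-w^*\|_2^2]$ to the previous excess risk via $\lambda$-strong convexity, exploits the schedule invariants $\eta_i n_i=\eta_1 n_1$ and $2\eta_i\beta\leq \frac{1}{2}$ (its (\ref{eq13})--(\ref{eq14})) to get the contraction factor $2^{-2\tau}$, and initializes with (\ref{eq11}); it merely organizes the computation as an induction on $e$ maintaining the stated bound rather than explicitly unrolling the recursion and summing the geometric series. Your constant bookkeeping ($c=4^{-\tau}$, $c^{e}=2^{4\tau^2+4\tau}\kappa^{2\tau}/n_e^{2\tau}$, and the use of $n\geq\kappa^\tau$ and $n_e\geq n_1=2^{2\tau+3}\kappa$) matches what the paper's induction implicitly does.
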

\begin{proof}[{\bf Proof of Lemma \ref{le6}}]
	We will prove the lemma by induction on $e$. 
	
	Note that by iteration rules in our algorithm, $w_1^1=\hat{w}$, $ w_{e+1}^1=w_e$, also, by the algorithm setting, we have for any epoch $e$, 
	\begin{equation}\label{eq13}
	\eta_e\beta\leq \eta_1\beta= \frac{1}{4}.
	\end{equation} 
	\begin{equation}\label{eq14}
	\eta_{e}n_e=\eta_1n_1=2^{2\tau+3}\kappa\cdot \frac{1}{4\beta}.
	\end{equation} 
	
When $e=1$, from (\ref{eq9}), we have
	\begin{equation*}
	\begin{aligned}
	\mathbb{E}[F(w_1)]-F(w^{*})\leq&\frac{32dL^4\beta\log(1/\delta)}{n_1^2\epsilon^2\lambda^2}+ \frac{1}{2\eta_1 n_1(1-2\eta_1 \beta)} \mathbb{E}[||w_1^1-w^{*}||^2]+\frac{2\eta_1\beta}{(1-2\eta_1\beta)}F(w^{*})\\
	\overset{(\ref{eq14})}{	\leq}&\frac{32dL^4\beta\log(1/\delta)}{n_1^2\epsilon^2\lambda^2}+ \frac{\lambda}{2^{2\tau+1}} \mathbb{E}[||w_1^1-w^{*}||^2]+4\eta_1\beta F(w^{*})\\
{\leq}&\frac{32dL^4\beta\log(1/\delta)}{n_1^2\epsilon^2\lambda^2}+ \frac{\lambda}{2^{2\tau+1}} \cdot \frac{2}{\lambda}\mathbb{E}[F(w_1^1)-F(w^{*})]+4\eta_1\beta F(w^{*})\\
	\overset{(\ref{eq11})}{\leq}&\frac{32dL^4\beta\log(1/\delta)}{n_1^2\epsilon^2\lambda^2}+ \frac{1}{2^{2\tau}} \frac{c_1^2L^2}{\lambda}\left(\frac{1}{\kappa^{\tau}}+\frac{d\log(1/\delta)}{\epsilon^2{\kappa}^{2\tau}}\right)+4\eta_1\beta F(w^{*})\\
	\overset{(\ref{eq14})}{\leq} &\frac{32dL^4\beta\log(1/\delta)}{n_1^2\epsilon^2\lambda^2}+ \frac{1}{2^{2\tau}} \frac{c_1^2L^2}{\lambda}\left(\frac{2^{2\tau^2+3\tau}}{n_1^{\tau}}+\frac{2^{4\tau^2+6\tau}d\log(1/\delta)}{\epsilon^2n_1^{2\tau}}\right)+\frac{	2^{2\tau+3} \cdot \kappa F(w^{*})}{n_1}
	\\
	\leq &\frac{32dL^4\beta\log(1/\delta)}{n_1^2\epsilon^2\lambda^2}+ \frac{c_1^2L^2}{\lambda}\left(\frac{2^{2\tau^2+\tau}}{n_1^{\tau}}+\frac{2^{4\tau^2+4\tau}d\log(1/\delta)}{\epsilon^2n_1^{2\tau}}\right)+\frac{	2^{2\tau+3} \cdot \kappa F(w^{*})}{n_1}.
	\end{aligned}
	\end{equation*}
	Thus the lemma holds for $e=1$. Now we assume the lemma is true for some $e\geq 1$, then for $e+1$, 
	\begin{equation*}
	\begin{aligned}
&\mathbb{E}[F(w_{e+1})]-F(w^*)\\
\overset{(\ref{eq9})}{\leq}& \frac{32dL^4\beta\log(1/\delta)}{n_{e+1}^2\epsilon^2\lambda^2}+ \frac{1}{2\eta_{e+1} n_{e+1}(1-2\eta_{e+1} \beta)} \mathbb{E}[||w_{e+1}^1-w^{*}||^2]+\frac{2\eta_{e+1}\beta}{(1-2\eta_{e+1}\beta)}F(w^{*})\\
	\overset{(\ref{eq13})}{\leq} 	& \frac{32dL^4\beta\log(1/\delta)}{n_{e+1}^2\epsilon^2\lambda^2}+\frac{1}{\eta_{e+1} n_{e+1}} \mathbb{E}[||w_{e+1}^1-w^{*}||^2]+4\eta_{e+1}\beta F(w^{*})\\
	\overset{ (\ref{eq14})}{\leq}	& \frac{32dL^4\beta\log(1/\delta)}{n_{e+1}^2\epsilon^2\lambda^2}+\frac{\lambda}{2^{2\tau+1}}\cdot\frac{2}{\lambda} \mathbb{E}[F(w_e)-F(w^{*})]+\frac{\kappa \cdot 2^{2\tau+3}}{n_{e+1}} F(w^{*})\\
	{\leq}	& \frac{32dL^4\beta\log(1/\delta)}{n_{e+1}^2\epsilon^2\lambda^2}+\frac{1}{2^{2\tau}} \left(\frac{32dL^4\beta\log(1/\delta)}{n_e^2\epsilon^2\lambda^2}+\frac{2^{2\tau+3}\cdot \kappa \cdot F(w^{*})}{n_e}\right) \cdot \sum\limits_{i=1}^{e}\frac{1}{2^{2(i-1)(\tau -1)}}\\
	+&\frac{1}{2^{2\tau}}\frac{c_1^2 L^2}{\lambda}\left(\frac{2^{2\tau^2+\tau}}{n_e^{\tau}}+\frac{2^{4\tau^2+4\tau  }\cdot d\log(1/\delta)}{n_e^{2\tau}\cdot \epsilon^2}\right)+\frac{\kappa \cdot 2^{2\tau+3}}{n_{e+1}} F(w^{*})\\
	<	&  \frac{32dL^4\beta\log(1/\delta)}{n_{e+1}^2\epsilon^2\lambda^2}\left(1+\frac{1}{2^{2\tau-2}}\cdot \sum\limits_{i=1}^{e}\frac{1}{2^{2(i-1)(\tau -1)}}\right)+\frac{c_1^2 L^2}{\lambda}\left(\frac{2^{2\tau^2+\tau}}{n_{e+1}^{\tau}}+\frac{2^{4\tau^2+4\tau  }\cdot d\log(1/\delta)}{n_{e+1}^{2\tau}\cdot \epsilon^2}\right)\\
	+&\frac{\kappa \cdot 2^{2\tau+3}}{n_{e+1}} F(w^{*})\left(1+\frac{1}{2^{2\tau-1}}\sum\limits_{i=1}^{e}\frac{1}{2^{2(i-1)(\tau -1)}}\right)\\
	< &\left(\frac{32dL^4\beta\log(1/\delta)}{n_{e+1}^2\epsilon^2\lambda^2}+\frac{2^{2\tau+3}\cdot \kappa \cdot F(w^{*})}{n_{e+1}}\right) \cdot \sum\limits_{i=1}^{{e+1}}\frac{1}{2^{2(i-1)(\tau -1)}}\\
	+&\frac{c_1^2 L^2}{\lambda}\left(\frac{2^{2\tau^2+\tau}}{n_{e+1}^{\tau}}+\frac{2^{4\tau^2+4\tau  }\cdot d\log(1/\delta)}{n_{e+1}^{2\tau}\cdot \epsilon^2}\right).
	\end{aligned}
	\end{equation*}
	Thus the lemma holds for $e+1$ which completes the proof. 
\end{proof}
Now we go back to our proof. The number of epochs made is given by the largest $e$ which satisfies $\sum\limits_{i=1}^{e} n_i\leq \frac{n}{2}$, i.e.,
\begin{equation*}
\sum\limits_{i=1}^{e} n_i=n_1(1+2+\cdots+2^{e-1})=n_1(2^e-1)\leq \frac{n}{2}
\end{equation*}
which means the largest value is
$E=\lfloor\log_2(\frac{n}{2n_1}+1)\rfloor$
and the final solution is $\tilde{w}=w_E$.

From Lemma \ref{le6}, we have 
\begin{equation*}
\begin{aligned}
\mathbb{E}[F(w_E)]-F(w^*)\leq &\left(\frac{32L^4 \beta d\log(1/\delta)}{\lambda^2 n_E^2\cdot\epsilon^2}+\frac{2^{2\tau+3}\cdot \kappa  F(w^{*})}{n_E}\right) \cdot \sum\limits_{i=1}^{E}\frac{1}{2^{2(i-1)(\tau -1)}}\\
+&\frac{c_1^2 L^2}{\lambda}\left(\frac{2^{2\tau^2+\tau}}{n_E^{\tau}}+\frac{2^{4\tau^2+4\tau  }\cdot d\log(1/\delta)}{n_E^{2\tau}\cdot \epsilon^2}\right)
\\\leq &\left(\frac{32L^4 \beta d\log(1/\delta)}{\lambda^2 n_E^2\epsilon^2}+\frac{2^{2\tau+3}\cdot \kappa  F(w^{*})}{n_E}\right) \cdot \frac{2^{2\tau-2}}{2^{2\tau-2}-1}\\
+&\frac{c_1^2 L^2}{\lambda}\left(\frac{2^{2\tau^2+\tau}}{n_E^{\tau}}+\frac{2^{4\tau^2+4\tau  }\cdot d\log(1/\delta)}{n_E^{2\tau}\cdot \epsilon^2}\right)
\\\leq &\left(\frac{2^{2\tau +9}L^4 \beta d\log(1/\delta)}{\lambda^2 n^2\epsilon^2}+\frac{2^{4\tau+4}\cdot \kappa  F(w^{*})}{n}\right) \cdot \frac{1}{2^{2\tau-2}-1}\\
+&\frac{c_1^2 L^2}{\lambda}\left(\frac{2^{2\tau^2+4\tau}}{n^{\tau}}+\frac{2^{4\tau^2+10\tau  }\cdot d\log(1/\delta)}{n^{2\tau}\cdot \epsilon^2}\right)
\\=&O\left( \frac{L^4 \beta d\log(1/\delta)}{\lambda^2 n^2\epsilon^2}+\frac{4^{\tau}\cdot \kappa  F(w^{*})}{n} +\frac{c_1^2 L^2}{\lambda}\left(\frac{2^{2\tau^2+4\tau}}{n^{\tau}}+\frac{2^{4\tau^2+10\tau  }\cdot d\log(1/\delta)}{n^{2\tau}\cdot \epsilon^2}\right)  \right)
\end{aligned}
\end{equation*}
where the last step is due to the fact that 
$n_{E}=n_1 2^{E-1}\geq \frac{n_1}{4}(\frac{n}{2n_1}+1)\geq \frac{n}{8}$.
\end{proof}
\end{document}